\definecolor{mColor1}{rgb}{0.95,0.95,0.95}
\newcolumntype{a}{>{\columncolor{mColor1}}c}
\definecolor{solarized@base03}{HTML}{002B36}
\definecolor{solarized@base02}{HTML}{073642}
\definecolor{solarized@base01}{HTML}{586e75}
\definecolor{solarized@base00}{HTML}{657b83}
\definecolor{solarized@base0}{HTML}{839496}
\definecolor{solarized@base1}{HTML}{93a1a1}
\definecolor{solarized@base2}{HTML}{EEE8D5}
\definecolor{solarized@base3}{HTML}{FDF6E3}
\definecolor{solarized@yellow}{HTML}{B58900}
\definecolor{solarized@orange}{HTML}{CB4B16}
\definecolor{solarized@red}{HTML}{DC322F}
\definecolor{solarized@magenta}{HTML}{D33682}
\definecolor{solarized@blue}{HTML}{268BD2}
\definecolor{solarized@cyan}{HTML}{2AA198}
\definecolor{solarized@green}{HTML}{859900}
\newtcolorbox{importantresult}{colback=solarized@yellow!5!white,
colframe=solarized@yellow,parbox, left=0.5mm, right=0.5mm,top=0.5mm,bottom=0.5mm}
\newtcolorbox{importantresult_noparbox}{colback=solarized@yellow!5!white,
colframe=solarized@yellow,parbox=false, left=0.5mm, right=0.5mm,top=0.5mm,bottom=0.5mm}
\newtheorem{theorem}{Theorem}
\newtheorem{definition}{Definition}
\newtheorem{corollary}{Corollary}%
\newtheorem*{theorem*}{Theorem}
\newtheorem*{definition*}{Definition}
\newcommand\Bp{\bm{p}}
\newcommand\Bq{\bm{q}}
\newcommand\Bw{\bm{w}}
\newcommand\Bx{\bm{x}}
\newcommand\By{\bm{y}}
\newcommand\Bde{\bm{\delta}}
\newcommand\Bmu{\bm{\mu}}
\newcommand\Bth{\bm{\theta}}
\newcommand\BZo{\bm{0}}
 \newcommand{\rD}{\mathrm{D}}
 \newcommand{\rL}{\mathrm{L}}
 \newcommand{\cD}{\mathcal{D}}
 \newcommand{\cL}{\mathcal{L}}
 \newcommand{\cN}{\mathcal{N}}
 \newcommand{\cP}{\mathcal{P}}
\newcommand{\cW}{\mathcal{W}} \newcommand{\cX}{\mathcal{X}}
\newcommand{\cY}{\mathcal{Y}}
 \newcommand{\Rd}{\mathrm{d}}
\newcommand\EXP{\mathbf{\mathrm{E}}}
\newcommand\PR{\mathbf{\mathrm{Pr}}}
\DeclarePairedDelimiterX{\kldiv}[2]{(}{)}{%
  #1\,\delimsize\|\,#2%
}
\newcommand{\KL}{\mathrm{D}_\mathrm{KL}\kldiv}
\DeclarePairedDelimiterX{\mi}[2]{[}{]}{%
  #1\,\delimsize ; \,#2%
}
\newcommand{\MI}{\mathrm{I}\mi}
\DeclarePairedDelimiterX{\di}[2]{(}{)}{%
  #1\,\delimsize , \,#2%
}
\newcommand{\DI}{\mathrm{D}\di}
\DeclarePairedDelimiterX{\ce}[2]{[}{]}{%
  #1\,\delimsize , \,#2%
}
\newcommand{\CE}{\mathrm{CE}\ce}
\DeclarePairedDelimiter{\ent}{[}{]}
\newcommand{\ENT}{\mathrm{H}\ent}
\DeclarePairedDelimiterXPP{\mii}[3]%
   {_{\mathrm{#1}}}{(}{)}{}{#2\;\delimsize ; \;#3%
}
\renewcommand{\leq}{\leqslant}
\newcommand{\dlmf}[1]{%
\citep[%
  \def\nextitem{\def\nextitem{, }}%
  \@for \el:=#1\do{\nextitem\href{http://dlmf.nist.gov/\el}{(\el)}}%
]{Olver:10}%
}
\newcolumntype{R}[1]{>{\raggedright\arraybackslash}p{#1}}
\newcolumntype{C}[1]{>{\centering\arraybackslash}p{#1}}
\newcolumntype{L}[1]{>{\raggedleft\arraybackslash}p{#1}}
\definecolor{mColor1}{rgb}{0.95,0.95,0.95}
\title{Quantification of Uncertainty with Adversarial Models}
\author{%
  Kajetan Schweighofer\footnotemark[1] \,\; Lukas Aichberger\footnotemark[1] \,\; Mykyta Ielanskyi\footnotemark[1] \\ {\bf Günter Klambauer \; Sepp Hochreiter} \\
  \\
  ~ELLIS Unit Linz and LIT AI Lab, Institute for Machine Learning, \\ 
                  ~~Johannes Kepler University Linz, Austria \\ 
  \footnotemark[1]~~Joint first authors\\
}
\begin{document}

\maketitle
\addtocontents{toc}{\protect\setcounter{tocdepth}{-1}} %

\begin{abstract}
  Quantifying uncertainty is important for actionable predictions in real-world applications.
  A crucial part of predictive uncertainty quantification is the estimation of epistemic uncertainty,
  which is defined as an integral of the product between a divergence function and the posterior.
  Current methods
  such as Deep Ensembles or MC dropout underperform at estimating the
  epistemic uncertainty, since they primarily consider the posterior when sampling models.
  We suggest Quantification of Uncertainty with Adversarial Models (QUAM) 
  to better estimate the epistemic uncertainty. 
  QUAM identifies regions where
  the whole product under the integral is large, not just the posterior.
  Consequently, QUAM
  has lower approximation error of the epistemic uncertainty compared to previous methods.
  Models for which the product is large correspond to
  adversarial models (not adversarial examples!).
  Adversarial models have both a high posterior as well as 
  a high divergence between their predictions 
  and that of a reference model.
  Our experiments 
  show that QUAM excels in capturing epistemic uncertainty for deep learning models
  and outperforms previous methods on challenging tasks in the vision domain.
\end{abstract}

\section{Introduction}
Actionable predictions
typically require risk assessment
based on predictive uncertainty quantification \citep{Apostolakis:90}.
This is of utmost importance in high stake applications,
such as medical diagnosis or drug discovery,
where human lives or extensive investments are at risk.
In such settings, even a single prediction has far-reaching 
real-world impact, thus necessitating the most precise quantification of the associated uncertainties.
Furthermore, foundation models or specialized models that are obtained externally
are becoming increasingly prevalent, also in high stake applications.
It is crucial to assess the robustness and reliability of 
those unknown models before applying them.
Therefore, the predictive uncertainty of given, pre-selected models at specific test points should be quantified,
which we address in this work.

We consider predictive uncertainty quantification (see Fig.~\ref{fig:adverarialModels}) 
for deep neural networks \citep{Gal:16thesis,Huellermeier:21}.
According to \cite{Vesely:84,Apostolakis:90,Helton:93,McKone:94,Helton:97},
predictive uncertainty can be categorized into two types.
First, \emph{aleatoric} (Type A, variability, stochastic, true, irreducible) uncertainty 
refers to the variability when drawing samples or when repeating
the same experiment.
Second, \emph{epistemic} (Type B, lack of knowledge, subjective, reducible) uncertainty 
refers to the lack of knowledge about the true model.
Epistemic uncertainty can result from
imprecision in parameter estimates, 
incompleteness in modeling, or
indefiniteness in the applicability of the model.
While aleatoric uncertainty cannot be reduced, 
epistemic uncertainty can be reduced by more data, 
better models, or more knowledge about the problem.
We follow \cite{Helton:97} and consider epistemic uncertainty 
as the imprecision or 
variability of parameters that determine the predictive distribution.
\cite{Vesely:84} calls this epistemic uncertainty ''parameter
uncertainty'',
which results from an imperfect learning algorithm or from insufficiently many
training samples.
Consequently, we consider
predictive uncertainty quantification as characterizing a probabilistic model of the world.
In this context, aleatoric uncertainty 
refers to the inherent stochasticity of
sampling outcomes from the predictive distribution of the model and
epistemic uncertainty refers to the uncertainty about model parameters.

\begin{figure}
\begin{center}
\includegraphics[width=\textwidth]{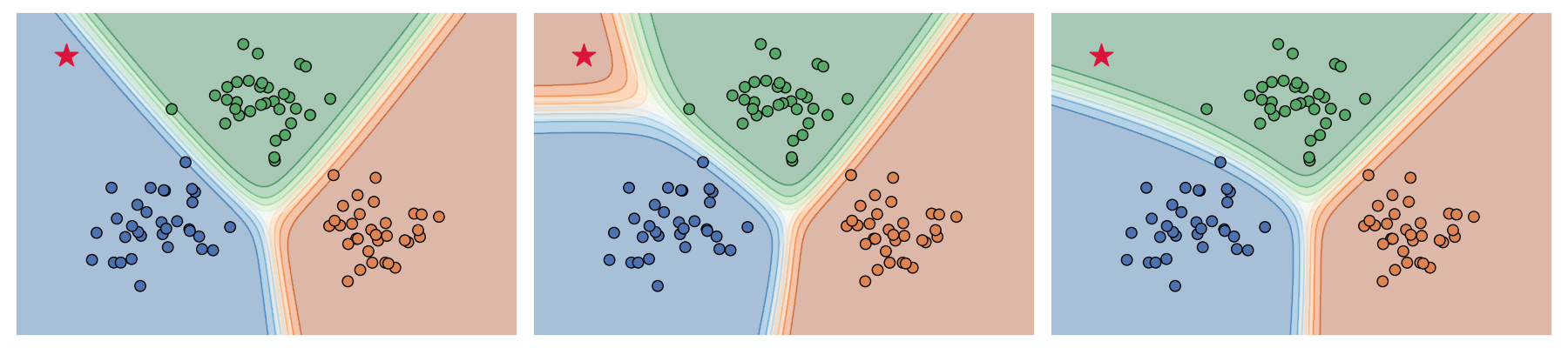}
\caption[]{Adversarial models. For the red test point,
  the predictive uncertainty is high as it is 
  far from the training data. 
  High uncertainties are detected by 
  different adversarial models that 
  assign the red test point 
  to different classes, 
  although all of them explain the training data equally well. 
  As a result, the true class of the test point remains ambiguous.}
\label{fig:adverarialModels}
\end{center}
\vspace{-0.2cm}
\end{figure}

Current uncertainty quantification methods 
such as Deep Ensembles \citep{Lakshminarayanan:17}
or Monte-Carlo (MC) dropout \citep{Gal:16}
underperform at estimating 
the epistemic uncertainty \citep{Wilson:20, ParkerHolder:20, DAngelo:21}, 
since they primarily consider the posterior when sampling models.
Thus they are prone to miss important posterior modes, where the whole integrand of the
integral defining the epistemic uncertainty is large.
We introduce Quantification of Uncertainty with Adversarial Models (QUAM) to identify those posterior modes.
QUAM searches for those posterior modes via adversarial models 
and uses them to reduce the approximation error when 
estimating the integral that defines the epistemic uncertainty.

Adversarial models are characterized by a large value of the integrand of
the integral defining the epistemic uncertainty. %
Thus, they considerably differ to the reference model's 
prediction at a test point
while having a similarly high posterior probability.
Consequently, they are counterexamples of the reference model
that predict differently for a new input, 
but explain the training data equally well.
Fig.~\ref{fig:adverarialModels} shows examples 
of adversarial models which assign different classes to a test point, 
but agree on the training data.
A formal definition is given by Def.~\ref{def:adversarial_models}.
It is essential to note that adversarial models are a new concept 
that is to be distinguished from other concepts that include the term 'adversarial' in their naming, such as adversarial examples \citep{Szegedy:13, Biggio:13}, adversarial training \citep{Goodfellow:15}, generative adversarial networks \citep{Goodfellow:14} or adversarial model-based RL \citep{Rigter:22}.

Our main contributions are:
\begin{itemize}[leftmargin=*, topsep=0pt, itemsep=0pt, partopsep=0pt, parsep=2pt]
\item We introduce QUAM as a framework for uncertainty quantification.
QUAM approximates the integral that defines the epistemic uncertainty 
substantially better
than previous methods, since it reduces the approximation error 
of the integral estimator.
\item  We introduce the concept of adversarial models %
for estimating posterior integrals with non-negative integrands.
For a given test point, adversarial models have considerably different
predictions than a reference model
while having similarly high posterior probability.
\item We introduce a new setting for uncertainty quantification,
where the uncertainty of a given, pre-selected model is quantified.
\end{itemize}

\section{Current Methods to Estimate the Epistemic Uncertainty} \label{sec:definition_uncertainty}

\paragraph{Definition of Predictive Uncertainty. }

Predictive uncertainty quantification is about describing a probabilistic model of 
the world, where aleatoric uncertainty 
refers to the inherent stochasticity of
sampling outcomes from the predictive distribution of the model and
epistemic uncertainty refers to the uncertainty about model parameters.
We consider two distinct settings of predictive uncertainty quantification.
Setting \textbf{(a)} concerns with the predictive uncertainty at a new test point
expected under all plausible models given the training dataset \citep{Gal:16thesis, Huellermeier:21}.
This definition of uncertainty comprises how differently possible models predict (epistemic)
and how confident each model is about its prediction (aleatoric).
Setting \textbf{(b)} concerns with the predictive uncertainty at a new test point
for a given, pre-selected model.
This definition of uncertainty comprises how likely this model is the true model that generated the training dataset (epistemic) \citep{Apostolakis:90, Helton:97}
and how confident this model is about its prediction (aleatoric).

As an example, assume we have initial data from an epidemic, but we do not know 
the exact infection rate, which is a parameter of a prediction model.
The goal is to predict the number of infected persons 
at a specific time in the future,
where each point in time is a test point. 
In setting (a), we are interested in the uncertainty of test point predictions 
of all models using infection rates that explain the initial data.
If all likely models agree for a given new test point, 
the prediction of any of those models can be trusted,
otherwise we can not trust the prediction regardless of 
which model is selected in the end.
In setting (b), we have selected a specific infection rate from the initial data
as parameter for our model to make predictions. 
We refer to this model as the given, pre-selected model.
However, we do not know the true infection rate of the epidemic.
All models with infection rates that are consistent with the 
initial data are likely to be the true model.
If all likely models agree with the given, pre-selected model for a given new test point, 
the prediction of the model can be trusted.

\subsection{Measuring Predictive Uncertainty}

We consider the predictive distribution of a single model $~{p(\By \mid \Bx, \Bw)}$,
which is a probabilistic model of the world.
Depending on the task, the predictive distribution of this probabilistic model 
can be a categorical distribution for classification or a Gaussian distribution for regression.
The Bayesian framework offers a principled way to treat the uncertainty about the parameters through the posterior $~{p(\Bw \mid \cD) \propto p(\cD \mid \Bw) p(\Bw)}$ for a given dataset $\cD$.
The Bayesian model average (BMA) predictive distribution is 
given by $~{p(\By \mid \Bx, \cD ) \ =  \int_{\cW }  p(\By \mid \Bx, \tilde{\Bw}) 
p(\tilde{\Bw} \mid \cD )  \Rd \tilde{\Bw}}$.
Following \cite{Gal:16thesis, Depeweg:18, Smith:18, Huellermeier:21}, the uncertainty of the BMA predictive distribution is commonly measured by the entropy $\ENT{p(\By \mid \Bx, \cD)}$. 
It refers to the total uncertainty, which can be decomposed into an aleatoric and an epistemic part.
The BMA predictive entropy is equal to the posterior expectation of the cross-entropy $\CE{\cdot}{\cdot}$
between the predictive distribution of candidate models and the BMA,
which corresponds to setting (a).
In setting (b), the cross-entropy is between the predictive distribution of the given, pre-selected model and candidate models.
Details about the entropy and cross-entropy as measures of uncertainty are given in Sec.~\ref{sec:theory:cross_entropy} in the appendix.
In the following, we formalize how to measure the notions of uncertainty in setting (a) and (b) using the expected cross-entropy over the posterior.

\paragraph{Setting (a): Expected uncertainty when selecting a model.} 
We estimate the predictive uncertainty at a test point $\Bx$
when selecting a model $\tilde{\Bw}$ given a training dataset $\cD$.
The total uncertainty is the expected cross-entropy between 
the predictive distribution of candidate models $p(\By \mid \Bx, \tilde{\Bw} )$ and 
the BMA predictive distribution $p(\By \mid \Bx, \cD )$, where the expectation is
with respect to the posterior:
\begin{align}  \label{eq:epistemic_setting_a}
\int_{\cW } & \CE{p(\By \mid \Bx, \tilde{\Bw} )}{p(\By \mid \Bx, \cD )}
 \ p(\tilde{\Bw} \mid \cD ) \ \Rd \tilde{\Bw} \ = \ \ENT{p(\By \mid \Bx, \cD )}\\ \nonumber
 &= \ \int_{\cW }    \ENT{p(\By \mid \Bx, \tilde{\Bw} )} \ p(\tilde{\Bw} \mid \cD ) \ \Rd \tilde{\Bw} \ + \ \MI{Y}{W \mid \Bx, \cD} \\ \nonumber
 &= \ \underbrace{\int_{\cW } \ENT{p(\By \mid \Bx, \tilde{\Bw} )} 
 \ p(\tilde{\Bw} \mid \cD ) \ \Rd \tilde{\Bw}}_{\text{aleatoric}} \ + \ \underbrace{\int_{\cW } \KL{p(\By \mid \Bx, \tilde{\Bw} )}{p(\By \mid \Bx, \cD )}
 \ p(\tilde{\Bw} \mid \cD ) \ \Rd \tilde{\Bw}}_{\text{epistemic}} \ .  
\end{align}
The aleatoric uncertainty characterizes the uncertainty due to the expected stochasticity of sampling outcomes from the predictive distribution of candidate models $p(\By \mid \Bx, \tilde{\Bw})$.
The epistemic uncertainty characterizes the uncertainty due to the mismatch between the predictive distribution of candidate models and the BMA predictive distribution.
It is measured by the mutual information $\MI{\cdot}{\cdot}$, between the prediction $Y$ and the model parameters $W$ for a given test point and dataset, 
which is equivalent to the posterior expectation of the KL-divergence $\KL{\cdot}{\cdot}$ 
between the predictive distributions of candidate models and 
the BMA predictive distribution.
Derivations are given in appendix Sec.~\ref{sec:theory:measures_of_uncertainty}.

\paragraph{Setting (b): Uncertainty of a given, pre-selected model.} 
We estimate the predictive uncertainty of a given, pre-selected
model $\Bw$ at a test point $\Bx$.
We assume that the dataset $\cD$ is produced according to 
the true distribution $p(\By \mid \Bx, \Bw^*)$ parameterized by $\Bw^*$.
The posterior $p(\tilde{\Bw} \mid \cD )$ is an estimate 
of how likely $\tilde{\Bw}$ match $\Bw^*$.
For epistemic uncertainty, we should measure the difference between the 
predictive distributions under $\Bw$ and $\Bw^*$, but $\Bw^*$ is unknown.
Therefore, we measure the expected difference between 
the predictive distributions under $\Bw$ and $\tilde{\Bw}$.
In accordance with \cite{Apostolakis:90} and \cite{Helton:97},
the total uncertainty is therefore the expected cross-entropy between
the predictive distributions of a given, pre-selected model $\Bw$ 
and candidate models $\tilde{\Bw}$, any of which could be 
the true model $\Bw^*$ according to the posterior:
\begin{align}  \label{eq:epistemic_setting_b}
 \int_{\cW } & \CE{p(\By \mid \Bx, \Bw )}{p(\By \mid \Bx, \tilde{\Bw} )}
 \ p(\tilde{\Bw} \mid \cD ) \ \Rd \tilde{\Bw} \\ \nonumber
 &= \ \underbrace{\ENT{p(\By \mid \Bx, \Bw )} \vphantom{\int_{\cW}}}_{\text{aleatoric}} \ + \ \underbrace{\int_{\cW }    \KL{p(\By \mid \Bx, \Bw )}{p(\By \mid \Bx, \tilde{\Bw} )} 
 \ p(\tilde{\Bw} \mid \cD ) \ \Rd \tilde{\Bw}}_{\text{epistemic}} \ .
\end{align}
The aleatoric uncertainty characterizes the uncertainty due to the stochasticity of sampling outcomes from the predictive distribution of the given, pre-selected model $p(\By \mid \Bx, \Bw)$.
The epistemic uncertainty characterizes the uncertainty due to the mismatch between the predictive distribution of the given, pre-selected model and the predictive distribution of candidate models that could be the true model.
Derivations and further details are given in appendix Sec.~\ref{sec:theory:measures_of_uncertainty}.

\subsection{Estimating the Integral for Epistemic Uncertainty }

\begin{figure*}[b!]
\centering
\begin{subfigure}{0.32\textwidth}
\includegraphics[width=\textwidth, trim={0 13pt 0 13pt}, clip]{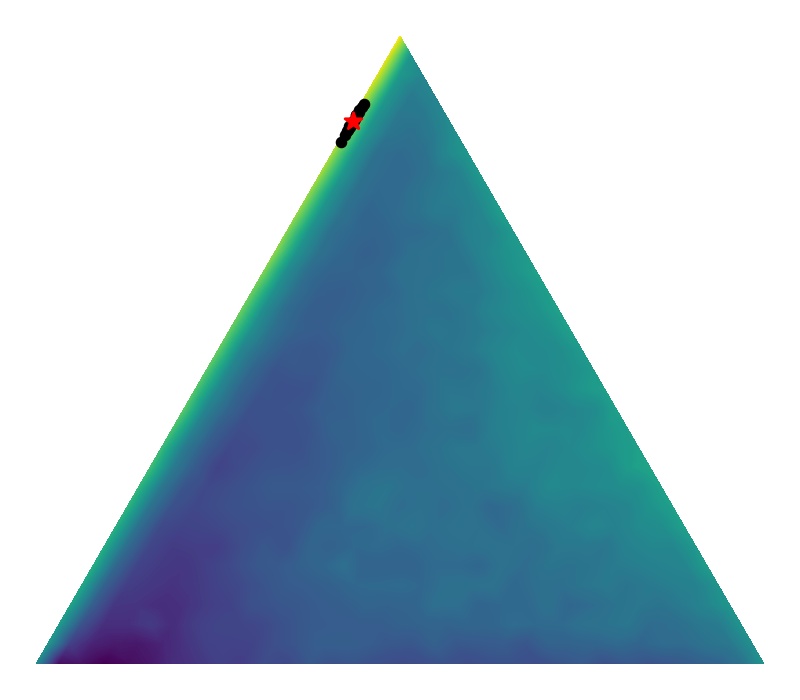}
\subcaption{Deep Ensembles}
\end{subfigure}
\begin{subfigure}{0.32\textwidth}
\includegraphics[width=\textwidth, trim={0 13pt 0 13pt}, clip]{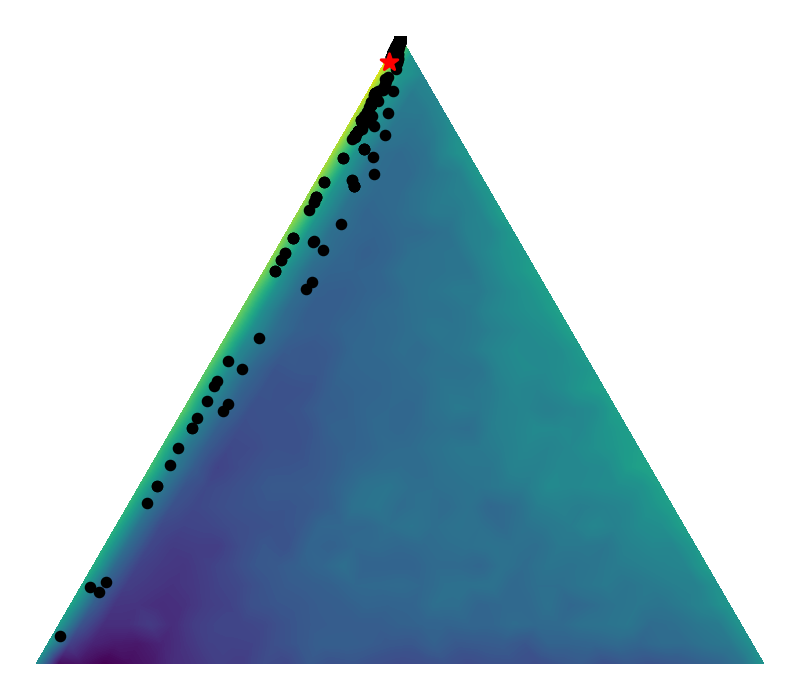}
\subcaption{MC dropout}
\end{subfigure}
\begin{subfigure}{0.32\textwidth}
\includegraphics[width=\textwidth, trim={0 13pt 0 13pt}, clip]{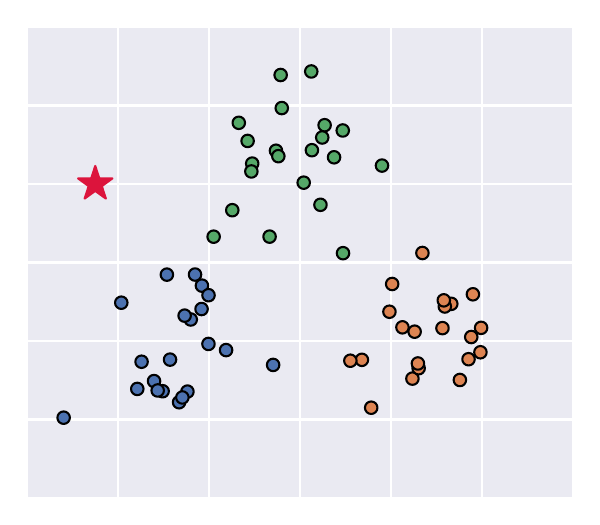}
\subcaption{Training data + new test point}
\end{subfigure}
\caption[]{Model prediction analysis.
Softmax outputs (black) of individual models of Deep Ensembles (a) 
and MC dropout (b), as well as their average output (red) on a probability simplex.
Models were selected on the training data, and evaluated on the new test point (red) depicted in (c).
The background color denotes the maximum likelihood of the training data that is achievable by a model having a predictive distribution (softmax values) equal to the respective location on the simplex.
Deep Ensembles and MC dropout fail to find models predicting the orange class, although there would be likely models that do so.
Details on the experimental setup are given in the appendix, Sec.~\ref{apx:sec:simplex}.}
\label{fig:simplex}
\end{figure*}

Current methods for predictive uncertainty quantification suffer from 
underestimating the epistemic uncertainty \citep{Wilson:20, ParkerHolder:20, DAngelo:21}.
The epistemic uncertainty is given by the respective terms in Eq.~\eqref{eq:epistemic_setting_a} 
for setting (a) and Eq.~\eqref{eq:epistemic_setting_b} 
for our new setting (b).
To estimate these integrals, almost all methods use gradient
descent on the training data.
Thus, posterior modes that are hidden from
the gradient flow remain undiscovered and the epistemic uncertainty is underestimated \citep{Shah:20, DAngelo:21}.
An illustrative example is depicted in Fig.~\ref{fig:simplex}.
Posterior expectations 
as in Eq.~\eqref{eq:epistemic_setting_a} 
and Eq.~\eqref{eq:epistemic_setting_b} 
that define the epistemic uncertainty
are generally approximated using Monte Carlo integration.
A good approximation of posterior integrals through Monte Carlo integration requires 
to capture all large values of the non-negative integrand \citep{Wilson:20}, which is not only
large values of the posterior, but also large values of the KL-divergence.

Variational inference \citep{Graves:11, Blundell:15, Gal:16} and 
ensemble methods \citep{Lakshminarayanan:17} estimate the posterior integral based on models with high posterior.
Posterior modes may be hidden from gradient
descent based techniques as they only discover
mechanistically similar models.
Two models are mechanistically similar if they 
rely on the same input attributes for making their predictions, that is,
they are invariant to the same input attributes \citep{Lubana:22}.
However, gradient descent will always start by extracting
input attributes that are highly correlated to the target as
they determine the steepest descent in the error landscape.
These input attributes create a large basin in the error landscape into
which the parameter vector is drawn via gradient descent.
Consequently, other modes further away from such basins are almost never found \citep{Shah:20, DAngelo:21}.
Thus, the epistemic uncertainty is underestimated.
Another reason that posterior modes may be hidden from gradient
descent is the presence of different labeling hypotheses.
If there is more than one way to
explain the training data, gradient descent will use all of them
as they give the steepest error descent \citep{Scimeca:21}.

Other work focuses on MCMC sampling according to the posterior distribution, 
which is approximated by stochastic gradient variants \citep{Welling:11, Chen:14} 
for large datasets and models.
Those are known to face issues to efficiently explore the highly complex 
and multimodal parameter space and escape local posterior modes.
There are attempts to alleviate the problem \citep{Li:16, Zhang:20}.
However, those methods
do not explicitly look for important posterior modes, where the predictive distributions 
of sampled models contribute strongly to the approximation of the posterior integral,
and thus have large values for the KL-divergence.

\section{Adversarial Models to Estimate the Epistemic Uncertainty}

\paragraph*{Intuition.}
The epistemic uncertainty in Eq.~\eqref{eq:epistemic_setting_a} 
for setting (a) compares possible
models with the BMA. 
Thus, the BMA is
used as reference model.
The epistemic uncertainty in Eq.~\eqref{eq:epistemic_setting_b} 
for our new setting (b) compares
models that are candidates for the true model with the given, pre-selected model. 
Thus, the given, pre-selected model is used as reference model.
If the reference model makes some prediction at the test point,
and if other models (the adversaries) make different predictions 
while explaining the training data equally well, 
then one should be uncertain about the prediction.
Adversarial models are plausible outcomes 
of model selection, while having a different prediction at the
test data point than the reference model.
In court, the same principle is used: 
if the prosecutor presents a scenario but the advocate
presents alternative equally plausible scenarios, the judges become 
uncertain about what happened and rule in favor of the defendant.
We use adversarial models to identify locations
where the integrand of the
integral defining the epistemic uncertainty in 
Eq.~\eqref{eq:epistemic_setting_a} or Eq.~\eqref{eq:epistemic_setting_b}
is large. 
These locations are used to construct a mixture distribution 
that is used for mixture importance sampling
to estimate the desired integrals. 
Using the mixture distribution for sampling, we aim to considerably 
reduce the approximation error of the estimator of the epistemic uncertainty.

\paragraph*{Mixture Importance Sampling.}
We estimate the integrals of epistemic uncertainty in
Eq.~\eqref{eq:epistemic_setting_a} and in Eq.~\eqref{eq:epistemic_setting_b}.
In the following, we focus on setting (b) with 
Eq.~\eqref{eq:epistemic_setting_b}, but all results hold for setting (a) with 
Eq.~\eqref{eq:epistemic_setting_a} as well.
Most methods sample from a distribution $q(\tilde{\Bw})$
to approximate the integral:
\begin{align} \label{eq:epistemic_importance_sampling}
    v \ &= \ \int_\cW \KL{p(\By \mid \Bx, \Bw )}{p(\By \mid \Bx, \tilde{\Bw} )} \ p(\tilde{\Bw} \mid \cD ) \ \Rd \tilde{\Bw} \ = \  \int_\cW \frac{u(\Bx, \Bw, \tilde{\Bw})}{q(\tilde{\Bw})} \ q(\tilde{\Bw}) \ \Rd \tilde{\Bw} \ ,
\end{align}
where $u(\Bx, \Bw, \tilde{\Bw})=
\KL{p(\By \mid \Bx, \Bw )}{p(\By \mid \Bx, \tilde{\Bw} )}  p(\tilde{\Bw} \mid \cD ) $.
As with Deep Ensembles or MC dropout,  
posterior sampling is often approximated  
by a sampling distribution $q(\tilde{\Bw})$ that is close 
to $p(\tilde{\Bw} \mid \cD )$. 
Monte Carlo (MC) integration estimates $v$ by  
\begin{align} \label{eq:epistemic_importance_sampling_estimator}
    \hat{v} &= \frac{1}{N} \sum_{n=1}^N \frac{u(\Bx, \Bw, \tilde{\Bw}_n)}{q(\tilde{\Bw}_n)} \ , \qquad \tilde{\Bw}_n \sim q(\tilde{\Bw}) \ .
\end{align}
If the posterior has different modes,
the estimate under a unimodal approximate distribution
has high variance and converges very slowly \citep{Steele:06}.
Thus, we use mixture importance 
sampling (MIS) \citep{Hesterberg:95}.
MIS utilizes a mixture distribution
instead of the unimodal distribution in 
standard importance sampling \citep{Owen:00}.
Furthermore, many MIS methods
iteratively enhance the
sampling distribution by incorporating new modes \citep{Raftery:10}.
In contrast to the usually applied 
iterative enrichment methods which find new modes by chance,
we have a much more favorable situation. 
We can explicitly search for posterior modes
where the KL divergence is large, as we can
cast it as a supervised learning problem.
Each of these modes determines the location of a 
mixture component of the mixture distribution.
\begin{theorem}
\label{th:mse}
    The expected mean squared error of importance sampling with $q(\tilde \Bw)$ can be bounded by
    \begin{equation}
    \label{eq:expected_mse}
        \EXP_{q(\tilde \Bw)}\left[ \left( \hat{v} \ - \ v \right)^2 \right] \leq  \EXP_{q(\tilde \Bw)}\left[ \left(\frac{u(\Bx, \Bw, \tilde{\Bw})}{q(\tilde \Bw)} \right)^2\right] \frac{4}{N} \ .
    \end{equation}
\end{theorem}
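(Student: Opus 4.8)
The plan is to run the standard bias--variance decomposition of the mean squared error and then to observe that the estimator $\hat v$ of Eq.~\eqref{eq:epistemic_importance_sampling_estimator} is (essentially) unbiased, so that the MSE reduces to a variance that is controlled by the second moment of the weighted integrand $u(\Bx,\Bw,\tilde{\Bw})/q(\tilde{\Bw})$.

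First I would write $\EXP_{q(\tilde{\Bw})}[(\hat v - v)^2] = \VAR_{q(\tilde{\Bw})}[\hat v] + (\EXP_{q(\tilde{\Bw})}[\hat v] - v)^2$. For the squared-bias term, linearity of expectation together with the fact that the $\tilde{\Bw}_n$ are i.i.d.\ draws from $q$ gives $\EXP_{q(\tilde{\Bw})}[\hat v] = \EXP_{q(\tilde{\Bw})}[u(\Bx,\Bw,\tilde{\Bw})/q(\tilde{\Bw})] = \int_\cW u(\Bx,\Bw,\tilde{\Bw})\,\Rd\tilde{\Bw} = v$, the last equality being the definition of $v$ in Eq.~\eqref{eq:epistemic_importance_sampling}; this uses that $q$ dominates the integrand, i.e.\ $q(\tilde{\Bw})>0$ whenever $u(\Bx,\Bw,\tilde{\Bw})>0$. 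Hence the bias vanishes and $\EXP_{q(\tilde{\Bw})}[(\hat v - v)^2] = \VAR_{q(\tilde{\Bw})}[\hat v]$.

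Second, because $\hat v$ is an average of $N$ i.i.d.\ terms, $\VAR_{q(\tilde{\Bw})}[\hat v] = \frac{1}{N}\VAR_{q(\tilde{\Bw})}\big[u(\Bx,\Bw,\tilde{\Bw})/q(\tilde{\Bw})\big] \le \frac{1}{N}\EXP_{q(\tilde{\Bw})}\big[(u(\Bx,\Bw,\tilde{\Bw})/q(\tilde{\Bw}))^2\big]$, simply dropping the subtracted squared mean. This already yields the claim with the constant $1 \le 4$. The slack factor $4$ is what is needed once one accounts for the estimator actually used in practice: the posterior $p(\tilde{\Bw}\mid\cD)$ inside $u$ is known only up to its normalization $p(\cD)$, which has to be estimated from the same sample, so $\hat v$ becomes a self-normalized (ratio) estimator.

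The only delicate point is this last refinement. For a self-normalized estimator one must control the random denominator $\frac{1}{N}\sum_{n=1}^N p(\cD\mid\tilde{\Bw}_n)p(\tilde{\Bw}_n)/q(\tilde{\Bw}_n)$ away from zero, e.g.\ by a truncation or clipping argument, and it is precisely this step that produces a constant such as $4$ rather than $1$ (cf.\ standard analyses of self-normalized importance sampling). For the plain estimator of Eq.~\eqref{eq:epistemic_importance_sampling_estimator} the bound follows immediately from the two steps above, and in either case the takeaway is the same: to make the error small one should choose $q$ so that $\EXP_{q(\tilde{\Bw})}[(u(\Bx,\Bw,\tilde{\Bw})/q(\tilde{\Bw}))^2]$ is small, i.e.\ put mass of $q$ where the non-negative integrand $u$ is large, which is exactly what searching for adversarial models accomplishes.
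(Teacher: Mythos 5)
Your proposal is correct, but it proves the bound by a genuinely different route than the paper. The paper's proof is a one-liner: it invokes Theorem~1 of \citet{Akyildiz:21}, treating the non-negative integrand $u(\Bx, \Bw, \tilde{\Bw})$ as an unnormalized distribution and setting $\varphi = 1$, so the constant $4$ in Eq.~\eqref{eq:expected_mse} is simply inherited from that theorem, whose natural setting is self-normalized importance sampling. You instead give a self-contained elementary argument for the plain estimator of Eq.~\eqref{eq:epistemic_importance_sampling_estimator}: the bias--variance decomposition plus the domination condition ($q(\tilde\Bw) > 0$ wherever $u(\Bx,\Bw,\tilde\Bw) > 0$) shows $\hat v$ is unbiased for $v$ as defined in Eq.~\eqref{eq:epistemic_importance_sampling}, and then
$\VAR_{q(\tilde\Bw)}[\hat v] = \frac{1}{N}\VAR_{q(\tilde\Bw)}\left[\frac{u(\Bx,\Bw,\tilde\Bw)}{q(\tilde\Bw)}\right] \leq \frac{1}{N}\, \EXP_{q(\tilde\Bw)}\left[\left(\frac{u(\Bx,\Bw,\tilde\Bw)}{q(\tilde\Bw)}\right)^2\right]$,
which gives the claimed inequality with the sharper constant $1$ in place of $4$ (and trivially whenever the second moment is infinite). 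Your diagnosis of where the factor $4$ actually comes from --- the self-normalized ratio estimator one uses when the posterior is known only up to normalization, as in the paper's practical implementation --- is accurate and matches the cited theorem, although your sketch of that refinement (controlling the random denominator via truncation) is only a pointer to standard SNIS analyses rather than a worked-out step; that gap is immaterial here because the theorem as stated concerns the estimator of Eq.~\eqref{eq:epistemic_importance_sampling_estimator}, for which your two-step argument already suffices. In short, the paper's citation buys coverage of the self-normalized case with no extra work, while your argument buys transparency, independence from the external result, and a better constant in the unbiased case.
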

\begin{proof}
The inequality Eq.~\eqref{eq:expected_mse} follows from Theorem~1 in
\cite{Akyildiz:21}, when considering $0 \leq u(\Bx, \Bw, \tilde{\Bw})$ 
as an unnormalized distribution
and setting $\varphi=1$.
\end{proof}
Approximating only the posterior $p(\tilde{\Bw} \mid \cD )$ as done by 
Deep Ensembles or MC dropout is insufficient to guarantee a low 
expected mean squared error,
since the sampling variance cannot be bounded (see appendix Sec.~\ref{sec:importance_sampling_variance}).
\begin{corollary}
    With constant $c$, $\EXP_{q(\tilde \Bw)}\left[ \left( \hat{v} \ - \ v \right)^2 \right]  
    \leq  4  c^2 / N$
    holds if
    $u(\Bx, \Bw, \tilde{\Bw}) \leq c \ q(\tilde \Bw)$. 
\end{corollary}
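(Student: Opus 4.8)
The plan is to obtain the corollary as a direct specialization of Theorem~\ref{th:mse}: it suffices to bound the expectation appearing on the right-hand side of Eq.~\eqref{eq:expected_mse} by the constant $c^2$. First I would record the standing assumption (implicit already in writing the importance-sampling ratio in Eq.~\eqref{eq:epistemic_importance_sampling}) that $q(\tilde\Bw)>0$ wherever $u(\Bx,\Bw,\tilde\Bw)>0$, so that the ratio $u(\Bx,\Bw,\tilde\Bw)/q(\tilde\Bw)$ is well defined $q$-almost everywhere, and is nonnegative because $0\leq u(\Bx,\Bw,\tilde\Bw)$.

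Next I would invoke the hypothesis $u(\Bx,\Bw,\tilde\Bw)\leq c\,q(\tilde\Bw)$. Dividing by $q(\tilde\Bw)>0$ gives the pointwise bound $0\leq u(\Bx,\Bw,\tilde\Bw)/q(\tilde\Bw)\leq c$ (which also forces $c\geq 0$). Since $t\mapsto t^2$ is monotone on $[0,c]$, squaring yields $\left(u(\Bx,\Bw,\tilde\Bw)/q(\tilde\Bw)\right)^2\leq c^2$ $q$-almost everywhere, and therefore, taking the expectation under $q(\tilde\Bw)$ of a quantity dominated by the constant $c^2$,
\begin{equation*}
\EXP_{q(\tilde \Bw)}\left[\left(\frac{u(\Bx,\Bw,\tilde\Bw)}{q(\tilde\Bw)}\right)^2\right]\ \leq\ c^2 \ .
\end{equation*}

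Finally I would substitute this estimate into the bound of Theorem~\ref{th:mse}, Eq.~\eqref{eq:expected_mse}, to conclude $\EXP_{q(\tilde\Bw)}\left[(\hat v-v)^2\right]\leq 4c^2/N$, which is exactly the claim. There is no real obstacle here: the only point requiring a word of care is the well-definedness of $u/q$ on the support of $q$, and that is already part of the importance-sampling framework set up before Theorem~\ref{th:mse}; the rest is a one-line monotonicity-and-substitution argument, so the corollary is essentially immediate from the theorem.
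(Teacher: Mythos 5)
Your argument is correct and is exactly the (implicit) one the paper relies on: plug the pointwise bound $u(\Bx,\Bw,\tilde\Bw)/q(\tilde\Bw)\leq c$ into the expectation on the right-hand side of Eq.~\eqref{eq:expected_mse} from Theorem~\ref{th:mse} to get $\EXP_{q(\tilde\Bw)}\bigl[(u/q)^2\bigr]\leq c^2$ and hence the stated $4c^2/N$ bound. Your added remarks on well-definedness of $u/q$ on the support of $q$ and the nonnegativity of $c$ are sensible but do not change the route; the corollary is indeed immediate from the theorem.
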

Consequently, $q(\tilde \Bw)$ must have modes where 
$u(\Bx, \Bw, \tilde{\Bw})$ has modes even if the $q$-modes are a factor $c$ smaller.
The modes of $u(\Bx, \Bw, \tilde{\Bw})$ are models $\tilde{\Bw}$ 
with both high posterior and high KL-divergence.
We are searching for these modes to determine the locations $\breve{\Bw}_k$ of 
the components of a mixture distribution $q(\tilde \Bw)$:
\vspace{-0.2cm} %
\begin{align}
\label{eq:mixture_distribution}
  q(\tilde \Bw) \ = \ \sum_{k=1}^K \alpha_k \ \cP(\tilde \Bw \ ; \breve{\Bw}_k, \Bth) \ ,
\end{align}
with $\alpha_k = 1 /K$ for $K$ such models $\breve{\Bw}_k$ that determine a mode.
Adversarial model search finds the locations $\breve{\Bw}_k$ of the mixture components, 
where $\breve{\Bw}_k$ is an adversarial model.
The reference model does not define a mixture component, as it has
zero KL-divergence to itself.
We then sample from a distribution $\cP$ at the local posterior mode with mean $\breve{\Bw}_k$ and a set of shape parameters $\Bth$.
The simplest choice for $\cP$ is a Dirac delta distribution, but one could use e.g.\ a local Laplace approximation of the posterior \citep{MacKay:92b}, or a Gaussian distribution in some weight-subspace \citep{Maddox:19}.
Furthermore, one could use $\breve{\Bw}_k$ as starting point for SG-MCMC chains \citep{Welling:11, Chen:14, Zhang:20, Zhang:22}.
More details regarding MIS are given in the appendix in Sec.~\ref{sec:importance_sampling_variance}.
In the following, we propose an algorithm to find those models with both high posterior and high KL-divergence to the predictive distribution of the reference model.

\paragraph*{Adversarial Model Search.}
Adversarial model search is the concept of 
searching for a model that has a large distance / divergence to
the reference predictive distribution and at the same time a high posterior.
We call such models ''adversarial models'' as they 
act as adversaries to the reference model by contradicting its prediction.
A formal definition of an adversarial model is given by Def.~\ref{def:adversarial_models}:

\begin{definition}
    Given are a new test data point $\Bx$, a reference conditional probability model
    $p(\By \mid \Bx, \Bw)$ from a model class parameterized by $\Bw$, 
    a divergence or distance measure $\rD(\cdot,\cdot)$ for probability distributions,
     $~{\gamma>0}$, $~{\Lambda > 0}$, and a dataset $\cD$.
    Then a model with parameters $\tilde{\Bw}$ 
    that satisfies the inequalities
    $~{|\log  p(\Bw \mid \cD) - \log p(\tilde{\Bw} \mid \cD)  | \leq \gamma}$ 
    and
    $~{\rD({p(\By \mid \Bx,\Bw), p(\By \mid \Bx, \tilde{\Bw}))} \geq \Lambda}$ 
    is called an $(\gamma, \Lambda)-${adversarial model}.
    \label{def:adversarial_models}
\end{definition}

 Adversarial model search corresponds to the following optimization problem:
\begin{align}
\label{eq:minProb}
 &\max_{\Bde \in \Delta}  \ \DI{p(\By \mid \Bx, \Bw )}{p(\By \mid \Bx, \Bw \ + \ \Bde )}   \quad  \mbox{s.t.} \ \log p(\Bw \mid \cD) \ - \ \log p(\Bw \ + \ \Bde \mid \cD) \ \leq \ \gamma \ .
\end{align}
We are searching for a weight perturbation $\Bde$ that maximizes the
distance $\DI{\cdot}{\cdot}$ to the reference distribution without decreasing the log posterior more than $\gamma$.
The search for adversarial models is restricted to $\Bde \in \Delta$,
for example by only optimizing the last layer of the reference model 
or by bounding the norm of $\Bde$.
This optimization problem can be rewritten as:
\begin{align}
  &\max_{\Bde \in \Delta} \ \DI{p(\By \mid \Bx, \Bw )}{p(\By \mid \Bx, \Bw \ + \ \Bde )}   \ + \ c \ \left( \log p(\Bw \ + \ \Bde \mid \cD)  \ - \
    \log p(\Bw \mid \cD) \ + \ \gamma \right) \ .
\end{align}
where $c$ is a hyperparameter.
According to the {\em Karush-Kuhn-Tucker (KKT) theorem} \citep{Karush:39,Kuhn:50,May:20,Luenberger:16}:
If $\Bde^*$ is the solution to the problem Eq.~\eqref{eq:minProb}, then
there exists a $c^* \geq 0$ with
$\nabla_{\Bde} \cL(\Bde^*,c^*) = \BZo$ 
($ \cL$ is the Lagrangian) and
$c^* \ \left( \log p(\Bw \mid \cD)  -  \log p(\Bw  +  \Bde^* \mid \cD)  -  \gamma \right) = 0$.
This is a necessary condition for an optimal point according to 
Theorem on Page 326 of \citet{Luenberger:16}.

We solve this optimization problem by
the penalty method, which relies on the KKT theorem \citep{Zangwill:67}.
A penalty algorithm solves a series of unconstrained problems, solutions of which converge to the solution of the original constrained problem (see e.g. \cite{Fiacco:90}).
The unconstrained problems are constructed by adding a weighted penalty function measuring the constraint violation to the objective function. At every step, the weight of the penalty is increased, thus the constraints are less violated.
If exists, the solution to the constraint optimization problem is an adversarial model that is located within a posterior mode but has a different predictive distribution compared to the reference model.
We summarize the adversarial model search in Alg.~\ref{alg:ams}.

\clearpage

\renewcommand{\algorithmicensure}{\textbf{Supplies:}}
\renewcommand{\algorithmicrequire}{\textbf{Requires:}}
\begin{algorithm}[h!]
  \caption{Adversarial Model Search (used in QUAM)}
  \label{alg:ams}
  \begin{algorithmic}[1]
    \ENSURE Adversarial model $\breve{\Bw}$ with maximum\ $\rL_{\text{adv}}$ and $\rL_{\text{pen}} \leq 0$
    \REQUIRE Test point $\Bx$,
    training dataset $\cD = \{(\Bx_k, \By_k)\}_{k=1}^{K}$, reference model $\Bw$, loss function $l$, loss of reference model on the training dataset $\rL_{\text{ref}} = \frac{1}{K} \sum_{k=1}^K l(p(\By \mid \Bx_k, \Bw), \By_k)$, 
    minimization procedure MINIMIZE, number of penalty iterations $M$, initial penalty parameter $c_0$, penalty parameter increase scheduler $\eta$, slack parameter $\gamma$, distance / divergence measure $\DI{\cdot}{\cdot}$.    
    \STATE $\breve \Bw \leftarrow \Bw ; \;\; \tilde \Bw \leftarrow \Bw ; \;\; c \leftarrow c_0$
    \FOR{$m \leftarrow 1$ to $M$}
    \STATE $\rL_{\text{pen}} \leftarrow \frac{1}{K} \sum_{k=1}^K l(p(\By \mid \Bx_k, \tilde{\Bw}  ), \By_k) \ - \ \left( \rL_{\text{ref}} \ + \ \gamma \right)$
    \STATE $\rL_{\text{adv}} \leftarrow - \DI{p(\By \mid \Bx, \Bw )}{p(\By \mid \Bx, \tilde{\Bw}  )}$ 
    \STATE $\rL \leftarrow  \rL_{\text{adv}} + c \ \rL_{\text{pen}}$
    \STATE $\tilde \Bw \leftarrow \text{MINIMIZE}(\rL(\tilde \Bw))$
    \IF{$\rL_{\text{adv}}$ larger than all previous and $\rL_{\text{pen}} \leq 0$}
    \STATE $\breve \Bw \leftarrow \tilde \Bw$ 
    \ENDIF
    \STATE $c \leftarrow \eta(c)$
    \ENDFOR
    \STATE \textbf{return} $\breve \Bw$ 
  \end{algorithmic}
  \label{algo:ams}
\end{algorithm}

\vspace{-0.2cm}
\paragraph{Practical Implementation.}
Empirically, we found that directly executing the optimization procedure defined in Alg.~\ref{alg:ams} tends to result in adversarial models with similar predictive distribution for a given input across multiple searches. 
The vanilla implementation of Alg.~\ref{alg:ams} corresponds to an \emph{untargeted} attack, known from the literature on adversarial attacks \citep{Szegedy:13, Biggio:13}. To prevent the searches from converging to a single solution, we optimize the cross-entropy loss for one specific class during each search, which corresponds to a \emph{targeted} attack. Each resulting adversarial model represents a local optimum of Eq.~\eqref{eq:minProb}.
We execute as many adversarial model searches as there are classes, dedicating one search to each class, unless otherwise specified. 
To compute Eq.~\eqref{eq:epistemic_importance_sampling_estimator},
we use the predictive distributions $p(\By \mid \Bx, \tilde\Bw)$ of all models $\tilde{\Bw}$ encountered during each penalty iteration of all searches, weighted by their posterior probability.
The posterior probability is approximated with the negative exponential training loss, the likelihood, of models $\tilde{\Bw}$.
This approximate posterior probability is scaled with a temperature parameter, set as a hyperparameter.
Further details are given in the appendix Sec.~\ref{apx:sec:ams_details}.

\section{Experiments}
\vspace{-0.1cm}
In this section, we compare previous uncertainty quantification methods 
and our method QUAM in a set of experiments.
First, we assess the considered methods on a synthetic benchmark, 
on which it is feasible to compute a ground truth epistemic uncertainty.
Then, we conduct challenging out-of-distribution (OOD) detection, 
adversarial example detection, misclassification detection and selective prediction experiments in the vision domain.
We compare (1) QUAM,
(2) cyclical Stochastic Gradient Hamiltonian Monte Carlo (cSG-HMC) \citep{Zhang:20}, 
(3) an efficient Laplace approximation (Laplace) \citep{Daxberger:21}, 
(4) MC dropout (MCD) \citep{Gal:16} and 
(5) Deep Ensembles (DE) \citep{Lakshminarayanan:17} 
on their ability to estimate the epistemic uncertainty.
Those baseline methods, especially Deep Ensembles, 
are persistently among the best 
performing uncertainty quantification methods 
across various benchmark tasks \citep{Filos:19, Ovadia:19, Caldeira:20, Band:22}

\subsection{Epistemic Uncertainty on Synthetic Dataset}

\begin{figure*}
\captionsetup[subfigure]{aboveskip=-0.5pt,belowskip=-1pt}
\centering
\begin{subfigure}{0.325\textwidth}
\includegraphics[width=\textwidth]{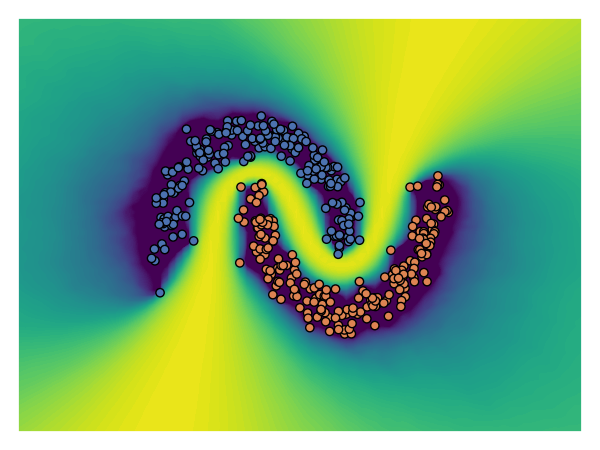}
\subcaption{\textbf{Ground Truth} - HMC}
\end{subfigure}
\begin{subfigure}{0.325\textwidth}
\includegraphics[width=\textwidth]{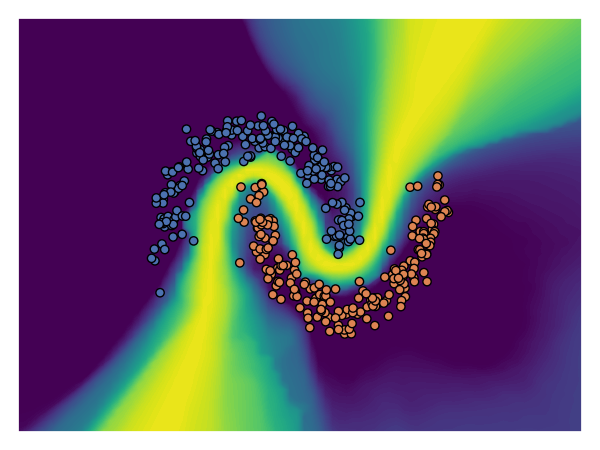}
\subcaption{cSG-HMC}
\end{subfigure}
\begin{subfigure}{0.325\textwidth}
\includegraphics[width=\textwidth]{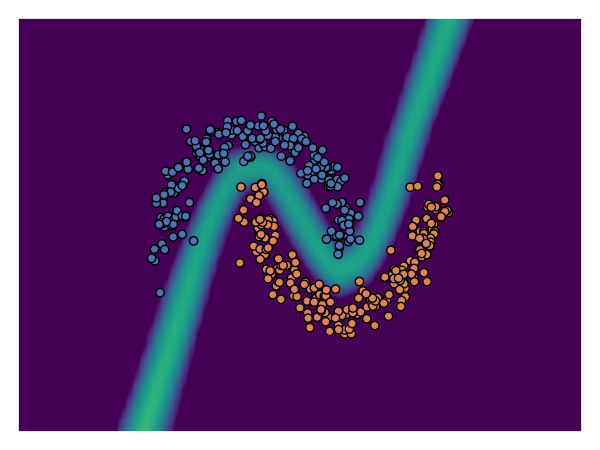}
\subcaption{Laplace}
\end{subfigure}
\begin{subfigure}{0.325\textwidth}
\includegraphics[width=\textwidth]{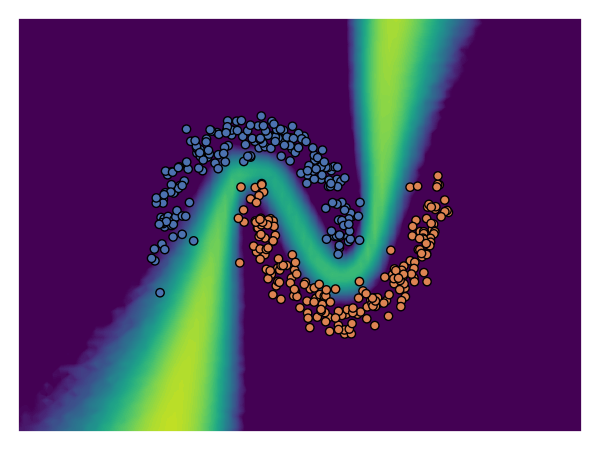}
\subcaption{MC dropout}
\end{subfigure}
\begin{subfigure}{0.325\textwidth}
\includegraphics[width=\textwidth]{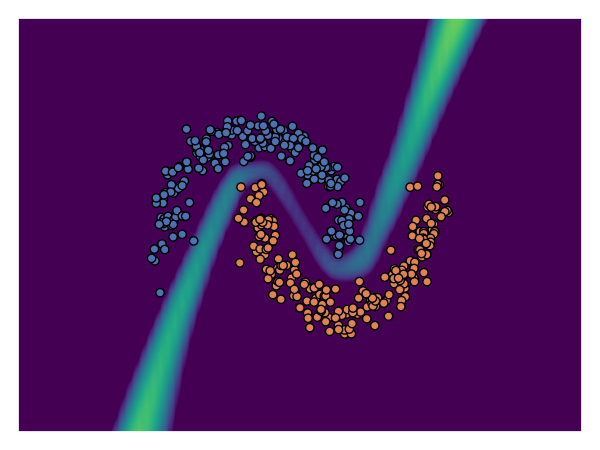}
\subcaption{Deep Ensembles}
\end{subfigure}
\begin{subfigure}{0.325\textwidth}
\includegraphics[width=\textwidth]{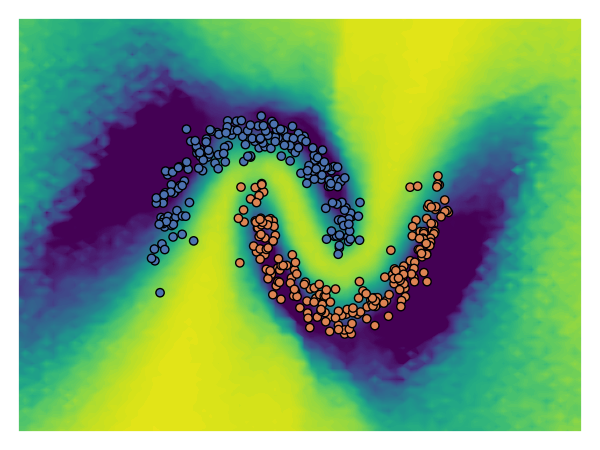}
\subcaption{\textbf{Our Method} - QUAM}
\end{subfigure}
\caption[]{Epistemic uncertainty as in Eq.~\eqref{eq:epistemic_setting_a} for two-moons. Yellow denotes high epistemic uncertainty. Purple denotes low epistemic uncertainty. HMC is considered as ground truth \citep{Izmailov:21}
and is most closely matched by QUAM.
Artifacts for QUAM arise because it is applied to each test point individually, whereas other methods use the same sampled models for all test points.}
\label{fig:res:classification_twomoon_setting_a}
\vspace{-0.2cm}
\end{figure*}

We evaluated all considered methods on the two-moons dataset, created using the implementation of \cite{Pedregosa:11}. %
To obtain the ground truth uncertainty, we utilized Hamiltonian Monte Carlo (HMC) \citep{Neal:96}.
HMC is regarded as the most precise algorithm to approximate posterior expectations \citep{Izmailov:21}, but necessitates extreme computational expenses to be applied to models and datasets of practical scale.
The results are depicted in Fig.~\ref{fig:res:classification_twomoon_setting_a}.
QUAM most closely matches the uncertainty estimate of the ground truth 
epistemic uncertainty obtained by HMC and 
excels especially on the regions further away from the decision boundary 
such as in the top left and bottom right of the plots.
All other methods fail to capture the epistemic uncertainty in those regions 
as gradient descent on the training set fails to capture posterior modes 
with alternative predictive distributions in those parts 
and misses the important integral components.
Experimental details and results for the epistemic 
uncertainty as in Eq.~\eqref{eq:epistemic_setting_b} 
are given in the appendix Sec.~\ref{apx:sec:synthetic}.

\subsection{Epistemic Uncertainty on Vision Datasets}
We benchmark the ability of different methods to estimate the epistemic uncertainty 
of a given, pre-selected model 
(setting (b) as in Eq.~\eqref{eq:epistemic_setting_b}) 
in the context of 
(i) out-of-distribution (OOD) detection, 
(ii) adversarial example detection, 
(iii) misclassification detection
and (iv) selective prediction.
In all experiments, we assume to have access to a pre-trained model 
on the in-distribution (ID) training dataset, which we refer to as reference model.
The epistemic uncertainty is expected to be higher
for OOD samples, as they can be assigned to multiple ID classes,
depending on the utilized features.
Adversarial examples indicate that the model is misspecified 
on those inputs, thus we expect a higher epistemic uncertainty, the uncertainty about the model parameters.
Furthermore, we expect higher epistemic uncertainty for misclassified samples than for correctly classified samples. 
Similarly, we expect the classifier to perform better on a subset of more certain samples.
This is tested by evaluating the accuracy of the classifier on retained subsets of a certain fraction of samples with the lowest epistemic uncertainty \citep{Filos:19, Band:22}.
We report the AUROC for classifying the ID vs. OOD samples (i),
the ID vs. the adversarial examples (ii), or the correctly classified vs. the misclassified samples (iii), 
using the epistemic uncertainty as score to distinguish the two
classes respectively.
For the selective prediction experiment (iv), we report the AUC of the accuracy vs. fraction of retained samples, using the epistemic uncertainty to determine the retained subsets.

\paragraph{MNIST.}
We perform OOD detection on the FMNIST \citep{Xiao:17}, KMNIST \citep{Clanuwat:18}, EMNIST \citep{Cohen:17} and OMNIGLOT \citep{Lake:15}
test datasets as OOD datasets, 
using the LeNet \citep{LeCun:98} architecture.
The test dataset of MNIST \citep{LeCun:98} is used as ID dataset.
We utilize the aleatoric uncertainty of the reference model 
(as in Eq.~\eqref{eq:epistemic_setting_b}) 
as a baseline to assess the added value 
of estimating the epistemic uncertainty
of the reference model.
The results are listed in Tab.~\ref{tab:res:ood_mnist}.
QUAM outperforms all other methods on this task, with Deep Ensembles being the runner
up method on all dataset pairs. 
Furthermore, we observed, that only the epistemic uncertainties obtained by Deep Ensembles and QUAM are able to surpass the performance of using the aleatoric uncertainty of the reference model.

\setlength{\tabcolsep}{7pt}
\renewcommand{\arraystretch}{1.2}
\begin{table}[t!]
\centering
\caption[]{MNIST results: AUROC using the epistemic uncertainty of a given, pre-selected model (as in Eq.~\eqref{eq:epistemic_setting_b}) as a score to distinguish between ID (MNIST) and OOD samples.
We also report the AUROC when using the aleatoric uncertainty of the 
reference model (Reference).\\
\vspace{-0.2cm}}
\label{tab:res:ood_mnist}
\begin{tabular}{ccccccc}
\hline
$\cD_{\text{ood}}$ & Reference       & cSG-HMC             & Laplace             & MCD          & DE         & QUAM                \\ \hline
FMNIST              &            $.986_{\pm .005}$         &         $.977_{\pm .004}$            &               $.978_{\pm .004}$      & $.978_{\pm .005}$ & $.988_{\pm .001}$ & $\boldsymbol{.994}_{\pm .001}$ \\
KMNIST              &         $.966_{\pm .005}$            &        $.957_{\pm .005}$             &                   $.959_{\pm .006}$  & $.956_{\pm .006}$ & $.990_{\pm .001}$ & $\boldsymbol{.994}_{\pm .001}$ \\
EMNIST              & $.888_{\pm .007}$ & $.869_{\pm .012}$ & $.877_{\pm .011}$ & $.876_{\pm .008}$ & $.924_{\pm .003}$ & $\boldsymbol{.937}_{\pm .008}$ \\
OMNIGLOT            &         $.973_{\pm .003}$            &          $.963_{\pm .004}$           & $.963_{\pm .003}$ &           $.965_{\pm .003}$          & $.983_{\pm .001}$ &  $\boldsymbol{.992}_{\pm .001}$ \\ \hline
\end{tabular}
\vspace{-0.2cm}
\end{table}
\renewcommand{\arraystretch}{1}

\setlength{\tabcolsep}{5pt} %
\renewcommand{\arraystretch}{1.2}
\begin{table}[t!]
\centering
\caption[]{ImageNet-1K results: AUROC using the epistemic uncertainty of a given, pre-selected model (as in Eq.~\eqref{eq:epistemic_setting_b}) to distinguish between ID (ImageNet-1K) and OOD samples. 
Furthermore, we report the AUROC when using the epistemic uncertainty for misclassification detection and the AUC of accuracy over fraction of retained predictions on the ImageNet-1K validation dataset.
We also report results for all experiments, using the aleatoric uncertainty of the reference model (Reference).\\
\vspace{-0.2cm}}
\label{tab:res:imagenet}
\begin{tabular}{ccccccc}
\hline
$\cD_{\text{ood}}$ // Task & Reference & cSG-HMC            & MCD                & DE (LL)           & DE (all)          & QUAM                \\ \hline
ImageNet-O & $.626_{\pm .004}$ & $.677_{ \pm .005}$ & $.680_{\pm .003}$  & $.562_{\pm .004}$ & $.709_{\pm .005}$ & $\boldsymbol{.753}_{\pm .011}$ \\
ImageNet-A & $.792_{\pm .002}$ & $.799_{\pm .001}$  & $.827_{\pm .002}$  & $.686_{\pm .001}$ & $\boldsymbol{.874}_{\pm .004}$ & $\boldsymbol{.872}_{\pm .003}$ \\ \hline \hline
Misclassification  & $.867_{\pm .007}$ & $.772_{\pm .011}$  & $.796_{\pm .014}$  & $.657_{\pm .009}$ & $.780_{\pm .009}$ & $\boldsymbol{.904}_{\pm .008}$ \\ 
Selective prediction  & $.958_{\pm .003}$ & $.931_{\pm .003}$  & $.935_{\pm .006}$  & $.911_{\pm .004}$ & $.950_{\pm .002}$ & $\boldsymbol{.969}_{\pm .002}$ \\\hline
\end{tabular}
\vspace{-0.3cm}
\end{table}
\renewcommand{\arraystretch}{1}

\paragraph{ImageNet-1K.}
We conduct OOD detection, adversarial example detection, misclassification detection and selective prediction experiments on ImageNet-1K \citep{Deng:09}.
As OOD dataset, we use ImageNet-O \citep{Hendrycks:21}, 
which is a challenging OOD dataset that was explicitly
created to be classified as an ID dataset with high confidence by conventional ImageNet-1K classifiers.
Similarly, ImageNet-A \citep{Hendrycks:21} is a dataset 
consisting of natural adversarial examples,
which belong to the ID classes of ImageNet-1K, 
but are misclassified with high confidence
by conventional ImageNet-1K classifiers.
Furthermore, we evaluated the utility of the uncertainty score 
for misclassification detection 
of predictions of the reference model on the ImageNet-1K validation dataset.
On the same dataset, we evaluated the accuracy of the reference model when only predicting on fractions of samples with the lowest epistemic uncertainty.

All ImageNet experiments were performed on variations of the EfficientNet architecture \citep{Tan:19}.
Recent work by \cite{Kirichenko:22} showed that typical ImageNet-1K classifiers 
learn desired features of the data even if they rely on simple, 
spurious features for their prediction.
Furthermore, they found, that last layer retraining on a dataset without the spurious correlation is 
sufficient to re-weight the importance that the classifier places on 
different features. 
This allows the classifier to ignore the spurious features and utilize the desired features for its prediction.
Similarly, we apply QUAM on the last layer of the reference model.
We compare against cSG-HMC applied to the last layer, MC dropout and Deep Ensembles. 
MC dropout was applied to the last layer as well, since the EfficientNet architectures utilize dropout only before the last layer.
Two versions of Deep Ensembles were considered.
First, Deep Ensembles aggregated from pre-trained EfficientNets of different network sizes (DE (all)).
Second, Deep Ensembles of retrained last layers on the same encoder network (DE (LL)). 
We further utilize the aleatoric uncertainty of the reference model 
(as in Eq.~\eqref{eq:epistemic_setting_b}) 
as a baseline to assess the additional benefit of estimating the epistemic uncertainty of the reference model. The Laplace approximation was not feasible to compute on our hardware, 
even only for the last layer.

The results are listed in Tab.~\ref{tab:res:imagenet}.
Plots showing the respective curves of each experiment are depicted in Fig.~\ref{fig:roc} in the appendix.
We observe that using the epistemic uncertainty provided by DE (LL) has the worst performance throughout all experiments. 
While DE (all) performed second best on most tasks, MC dropout outperforms it on OOD detection on the ImageNet-O dataset.
QUAM outperforms all other methods on all tasks we evaluated, except for ImageNet-A, where it performed on par with DE (all).
Details about all experiments and additional results are given in the appendix Sec.~\ref{sec:apx:vision_experiments}.

\paragraph{Compute Efficiency.}

As an ablation study, we investigate the performance of QUAM under a restricted computational budget. 
Therefore, the searches for adversarial models were performed on only a subset of classes instead of each eligible class, specifically the top $N$ most probable classes according to the predictive distribution of the given, pre-selected model. 
The computational budget between QUAM and MC dropout was matched by accounting for the number of forward pass equivalents required by each method. In this context, we assume that the backward pass corresponds to the computational cost of two forward passes.
The results depicted in Fig.~\ref{fig:topx} show that QUAM outperforms MC dropout even under a very limited computational budget.
Furthermore, training a single additional ensemble member for Deep Ensembles requires more compute than evaluating the entire ImageNet-O and ImageNet-A datasets with QUAM when performed on all $1000$ classes.

\begin{figure*}
\centering
\begin{subfigure}{0.49\textwidth}
\includegraphics[width=\textwidth, trim={12pt 12pt 0 12pt}, clip]{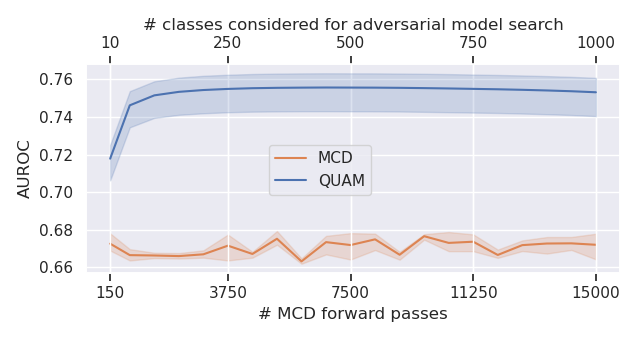}
\end{subfigure}
\begin{subfigure}{0.49\textwidth}
\includegraphics[width=\textwidth, trim={0 12pt 12pt 12pt}, clip]{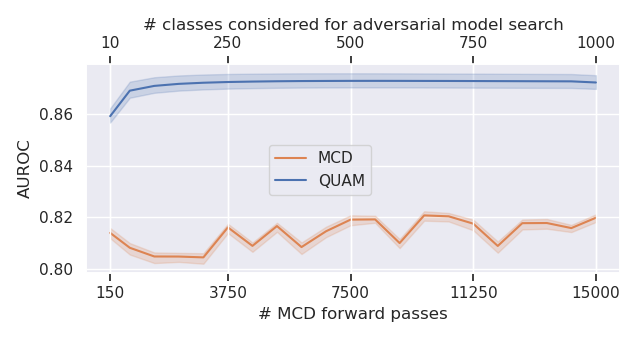}
\end{subfigure}
\caption[]{Inference speed vs. performance. MCD and QUAM evaluated on equal computational budget in terms of forward pass equivalents on ImageNet-O (left) and ImageNet-A (right) tasks.}
\label{fig:topx}
\vspace{-0.2cm}
\end{figure*}

\section{Related Work}
Quantifying predictive uncertainty, especially for deep learning models,
is an active area of research.
Classical uncertainty quantification methods such as
Bayesian Neural Networks (BNNs) \citep{MacKay:92b, Neal:96}
are challenging for deep learning, 
since (i) the Hessian or maximum-a-posterior (MAP) is difficult to estimate and (ii)
regularization and
normalization techniques cannot be treated \citep{Antoran:22}.
Epistemic neural networks \citep{Osband:21}
add a variance term (the epinet) to the output only.
Bayes By Backprop \citep{Blundell:15} and variational neural networks \citep{Oleksiienko:22} 
work only for small models as they require considerably more parameters.
MC dropout \citep{Gal:16}
casts applying dropout during inference as sampling from an approximate distribution.
MC dropout was generalized to MC dropconnect \citep{Mobiny:21}.
Deep Ensembles \citep{Lakshminarayanan:17} are often the
best-performing uncertainty quantification method \citep{Ovadia:19, Wursthorn:22}.
Masksembles or Dropout Ensembles
combine ensembling with MC dropout \citep{Durasov:21}.
Stochastic Weight Averaging approximates the posterior over the weights \citep{Maddox:19}.
Single forward pass methods are fast and they 
aim to capture a different notion of epistemic uncertainty through the distribution or distances of latent representations \citep{Bradshaw:17, Liu:20, Mukhoti:21, vAmersfoort:21, Postels:21} rather than through posterior integrals.
For further methods and a general overview of uncertainty estimation 
see e.g. \cite{Huellermeier:21, Abdar:21} and \cite{Gawlikowski:21}.

\section{Conclusion}

We have introduced QUAM, a novel method that 
quantifies predictive uncertainty using adversarial models.
Adversarial models identify important posterior modes 
that are missed by previous uncertainty quantification methods.
We conducted various experiments on deep neural networks, 
for which epistemic uncertainty is challenging to estimate.
On a synthetic dataset, we highlighted the strength of our method to capture 
epistemic uncertainty.
Furthermore, we conducted experiments on large-scale benchmarks in the vision domain,
where QUAM outperformed all previous methods.

Searching for adversarial models is computationally expensive
and has to be done for each new test point. 
However, more efficient versions can be utilized. 
One can search for 
adversarial models while restricting the search 
to a subset of the parameters, e.g.\ to the last layer 
as was done for the ImageNet experiments, to the normalization parameters, or to the bias weights. 
Furthermore, there have been several advances for efficient 
fine-tuning of large models \citep{Houlsby:19, Hu:21}.
Utilizing those for more efficient versions of our algorithm 
is an interesting direction for future work.

Nevertheless, high stake applications justify this effort
to obtain the best estimate of predictive uncertainty for each new test point.
Furthermore, QUAM is applicable to quantify the predictive uncertainty of 
any single given model, regardless of whether 
uncertainty estimation was considered during the modeling process.
This allows to assess the predictive uncertainty of foundation models or 
specialized models that are obtained externally.

\section*{Acknowledgements}

We would like to thank Angela Bitto-Nemling for providing relevant literature, organizing meetings, and giving feedback on this research project.
Furthermore, we would like to thank Angela Bitto-Nemling, Daniel Klotz, and Sebastian Lehner for insightful discussions and provoking questions.
The ELLIS Unit Linz, the LIT AI Lab, the Institute for Machine Learning, are supported by the Federal State Upper Austria. We thank the projects AI-MOTION (LIT-2018-6-YOU-212), DeepFlood (LIT-2019-8-YOU-213), Medical Cognitive Computing Center (MC3), INCONTROL-RL (FFG-881064), PRIMAL (FFG-873979), S3AI (FFG-872172), DL for GranularFlow (FFG-871302), EPILEPSIA (FFG-892171), AIRI FG 9-N (FWF-36284, FWF-36235), AI4GreenHeatingGrids(FFG- 899943), INTEGRATE (FFG-892418), ELISE (H2020-ICT-2019-3 ID: 951847), Stars4Waters (HORIZON-CL6-2021-CLIMATE-01-01). We thank Audi.JKU Deep Learning Center, TGW LOGISTICS GROUP GMBH, Silicon Austria Labs (SAL), FILL Gesellschaft mbH, Anyline GmbH, Google, ZF Friedrichshafen AG, Robert Bosch GmbH, UCB Biopharma SRL, Merck Healthcare KGaA, Verbund AG, GLS (Univ. Waterloo) Software Competence Center Hagenberg GmbH, T\"{U}V Austria, Frauscher Sensonic, TRUMPF and the NVIDIA Corporation.

\bibliography{arxiv}
\bibliographystyle{plainnat}

\newpage
\appendix
\counterwithin{figure}{section}
\counterwithin{table}{section}

\addtocontents{toc}{\protect\setcounter{tocdepth}{3}}

\section*{Appendix}

This is the appendix of the paper ''\textbf{Quantification of Uncertainty with Adversarial Models}''.
It consists of three sections.
In view of the increasing influence of
contemporary machine learning research on the broader public,
section \ref{apx:sec:societal_impact} gives a societal impact statement.
Following to this, section \ref{apx:sec:theory} gives details of our theoretical results,
foremost about the measure of uncertainty used throughout our work.
Furthermore, Mixture Importance Sampling for variance reduction is discussed.
Finally, section \ref{apx:sec:experimental_details} gives details about the experiments
presented in the main paper, as well as further experiments.

\renewcommand*{\contentsname}{Contents of the Appendix}
\tableofcontents
\listoffigures
\listoftables %

\section{Societal Impact Statement} \label{apx:sec:societal_impact}

In this work, we have focused on improving the predictive uncertainty estimation for machine learning models, specifically deep learning models. Our primary goal is to enhance the robustness and reliability of these predictions, which we believe have several positive societal impacts.

\begin{enumerate}
\item {\bf Improved decision-making:} By providing more accurate predictive uncertainty estimates, we enable a broad range of stakeholders to make more informed decisions. This could have implications across various sectors, including healthcare, finance, and autonomous vehicles, where decision-making based on machine learning predictions can directly affect human lives and economic stability.

\item {\bf Increased trust in machine learning systems:} By enhancing the reliability of machine learning models, our work may also contribute to increased public trust in these systems. This could foster greater acceptance and integration of machine learning technologies in everyday life, driving societal advancement.

\item {\bf Promotion of responsible machine learning:} Accurate uncertainty estimation is crucial for the responsible deployment of machine learning systems. By advancing this area, our work promotes the use of those methods in an ethical, transparent, and accountable manner.
\end{enumerate}

While we anticipate predominantly positive impacts, it is important to acknowledge potential negative impacts or challenges.

\begin{enumerate}
\item {\bf Misinterpretation of uncertainty:} Even with improved uncertainty estimates, there is a risk that these might be misinterpreted or misused, potentially leading to incorrect decisions or unintended consequences. It is vital to couple advancements in this field with improved education and awareness around the interpretation of uncertainty in AI systems.

\item {\bf Increased reliance on machine learning systems:} While increased trust in machine learning systems is beneficial, there is a risk it could lead to over-reliance on these systems, potentially resulting in reduced human oversight or critical thinking. It's important that robustness and reliability improvements don't result in blind trust.

\item {\bf Inequitable distribution of benefits:} As with any technological advancement, there is a risk that the benefits might not be evenly distributed, potentially exacerbating existing societal inequalities. We urge policymakers and practitioners to consider this when implementing our findings.
\end{enumerate}

In conclusion, while our work aims to make significant positive contributions to society, we believe it is essential to consider these potential negative impacts and take steps to mitigate them proactively.

\clearpage
\section{Theoretical Results} \label{apx:sec:theory}

\subsection{Measuring Predictive Uncertainty}\label{sec:theory:measures_of_uncertainty}

In this section, we first discuss the usage of the entropy and the cross-entropy
as measures of predictive uncertainty.
Following this, we introduce the two settings (a) and (b) 
(see Sec.~\ref{sec:definition_uncertainty}) in detail for the 
predictive distributions of probabilistic models in classification and regression.
Finally, we discuss Mixture Importance Sampling for variance reduction of the uncertainty estimator.

\subsubsection{Entropy and Cross-Entropy as Measures of Predictive Uncertainty}\label{sec:theory:cross_entropy}

\cite{Shannon:48} defines the entropy $\ENT{\Bp} = - \sum_{i=1}^N p_i \log p_i$ 
as a measure of the amount of uncertainty of a discrete probability
distribution $\Bp=(p_1,\ldots,p_N)$ and states that it measures 
how much ''choice'' is involved in the selection of a class $i$.
See also \cite{Jaynes:57,Cover:06} for an elaboration on this topic.
The value $- \log p_i$ has been called "surprisal" \citep{Tribus:61} 
(page 64, Subsection 2.9.1) and has been
used in computational linguistics \citep{Hale:01}.
Hence, the entropy is the expected or mean surprisal.
Instead of ''surprisal'' also the terms ''information content'', 
''self-information'', or ''Shannon information'' are used.

The cross-entropy $ \CE{\Bp}{\Bq} = - \sum_{i=1}^N p_i \log q_i$ between 
two discrete probability
distributions $~{\Bp=(p_1,\ldots,p_N)}$ and $\Bq=(q_1,\ldots,q_N)$ 
measures the expectation of the surprisal of $\Bq$ under distribution $\Bp$.
Like the entropy, the cross-entropy is a mean of surprisals, therefore
can be considered as a measure to quantify uncertainty.
The higher surprisals are on average, the higher the uncertainty.
The cross-entropy has increased uncertainty compared to the entropy since
more surprising events are expected when selecting events 
via $\Bp$ instead of $\Bq$.
Only if those distributions coincide, there is no additional surprisal and the cross-entropy
is equal to the entropy of the distributions.
The cross-entropy depends on the uncertainty of the two distributions and 
how different they are. 
In particular, high surprisal of $q_i$ and low
surprisal of $p_i$ strongly increase the cross-entropy since unexpected events
are more frequent, that is, we are more often surprised.
Thus, the cross-entropy does not only measure the uncertainty under distribution $\Bp$, 
but also the difference of the distributions.
The average surprisal via the cross-entropy depends on the uncertainty 
of $\Bp$ and the difference between $\Bp$ and $\Bq$:
\begin{align}
 \CE{\Bp}{\Bq} \ &= \ - \ \sum_{i=1}^N p_i \ \log q_i \\ \nonumber
  &= \ 
  - \sum_{i=1}^N p_i \ \log p_i \ + \   \sum_{i=1}^N p_i \ \log \frac{p_i}{q_i} 
  \\ \nonumber
  &= \ \ENT{\Bp} \ + \ \KL{\Bp}{\Bq} \ ,
\end{align}
where the Kullback-Leibler divergence $\KL{\cdot}{\cdot}$ is
\begin{align}
 \KL{\Bp}{\Bq} \ &= \ \sum_{i=1}^N p_i \log \frac{p_i}{q_i} \  .
\end{align}
The Kullback-Leibler divergence measures the difference in the distributions
via their average difference of surprisals.
Furthermore, it measures the decrease in uncertainty 
when shifting from the estimate $\Bp$ to the true $\Bq$ \citep{Seidenfeld:86,Adler:08}.

Therefore, the cross-entropy can serve to measure the total uncertainty,
where the entropy is used as aleatoric uncertainty 
and the difference of distributions is used as the epistemic uncertainty.
We assume that $\Bq$ is the true distribution that is estimated by
the distribution $\Bp$. We quantify the total uncertainty of
$\Bp$ as the sum of the entropy of $\Bp$ (aleatoric uncertainty) 
and the Kullback-Leibler divergence to $\Bq$ (epistemic uncertainty).
In accordance with \citet{Apostolakis:90} and \citet{Helton:97},
the aleatoric uncertainty
measures the stochasticity of sampling from $\Bp$, while
the epistemic uncertainty 
measures the deviation of the parameters $\Bp$
from the true parameters $\Bq$.

In the context of quantifying uncertainty through probability distributions, 
other measures such as the variance have been proposed \citep{Zidek:03}.
For uncertainty estimation in the context of deep learning systems, e.g. \cite{Gal:16thesis, Kendall:17, Depeweg:18} proposed to use the variance of the BMA predictive distribution 
as a measure of uncertainty.
Entropy and variance capture different notions of uncertainty and investigating measures based
on the variance of the predictive distribution is an interesting avenue for future work.

\subsubsection{Classification}

\paragraph{Setting (a): Expected uncertainty when selecting a model.}

We assume to have training data $\cD$ and an input $\Bx$.
We want to know the uncertainty in predicting a class $\By$ from $\Bx$
when we first choose a model $\tilde{\Bw}$  
based on the posterior $p(\tilde{\Bw} \mid \cD )$ 
an then use the chosen model $\tilde{\Bw}$ to choose a class for input $\Bx$
according to the predictive distribution $p(\By \mid \Bx, \tilde{\Bw} )$.
The uncertainty in predicting the class
arises from choosing a model (epistemic) and from
choosing a class using this probabilistic model (aleatoric).

Through Bayesian model averaging, we obtain the following probability of selecting a class:
\begin{align}
 p(\By \mid \Bx, \cD) \ &= \ 
   \int_{\cW }    p(\By \mid \Bx, \tilde{\Bw} )
 \ p(\tilde{\Bw} \mid \cD ) \ \Rd \tilde{\Bw}  \ .
\end{align}
The total uncertainty is commonly measured as the entropy of this probability distribution \citep{Houlsby:11,Gal:16thesis,Depeweg:18,Huellermeier:21}:
\begin{align}
   \ENT{p(\By \mid \Bx, \cD )}  \ .  
\end{align}
We can reformulate the total uncertainty as the expected cross-entropy:
\begin{align}
 \ENT{p(\By \mid \Bx, \cD )} \ &= \ - \ \sum_{\By \in \cY}   p(\By \mid \Bx, \cD ) \ \log p(\By \mid \Bx, \cD ) \\ \nonumber
 &= \ - \ \sum_{\By \in \cY} \log p(\By \mid \Bx, \cD ) 
 \int_{\cW } p(\By \mid \Bx, \tilde{\Bw} ) \ 
 \ p(\tilde{\Bw} \mid \cD ) \ \Rd \tilde{\Bw}  \\ \nonumber
 &= \ \int_{\cW } \left( - \ \sum_{\By \in \cY}  p(\By \mid \Bx, \tilde{\Bw} ) \ \log p(\By \mid \Bx, \cD ) \right) 
 \ p(\tilde{\Bw} \mid \cD )  \ \Rd \tilde{\Bw}  \\ \nonumber
 &= \int_{\cW }   \CE{p(\By \mid \Bx, \tilde{\Bw} )}{p(\By \mid \Bx, \cD )}
 \ p(\tilde{\Bw} \mid \cD ) \ \Rd \tilde{\Bw} \ .  
\end{align}
We can split the total uncertainty into the aleatoric and epistemic uncertainty \citep{Houlsby:11, Gal:16thesis, Smith:18}:
\begin{align}
\label{eq:ce_h_kl}
 \int_{\cW }  & \CE{p(\By \mid \Bx, \tilde{\Bw} )}{p(\By \mid \Bx, \cD )}
 \ p(\tilde{\Bw} \mid \cD ) \ \Rd \tilde{\Bw}  \\ \nonumber
 &= \
\int_{\cW }   \left( \ENT{p(\By \mid \Bx, \tilde{\Bw} )} \ + \ \KL{p(\By \mid \Bx, \tilde{\Bw} )}{p(\By \mid \Bx, \cD )} \right) 
 \ p(\tilde{\Bw} \mid \cD ) \ \Rd \tilde{\Bw} \\ \nonumber
&= \ \int_{\cW }    \ENT{p(\By \mid \Bx, \tilde{\Bw} )} 
 \ p(\tilde{\Bw} \mid \cD ) \ \Rd \tilde{\Bw}
\ + \ 
\int_{\cW } \KL{p(\By \mid \Bx, \tilde{\Bw} )}{p(\By \mid \Bx, \cD )}
 \ p(\tilde{\Bw} \mid \cD ) \ \Rd \tilde{\Bw} \\\nonumber
 &= \ \int_{\cW }    \ENT{p(\By \mid \Bx, \tilde{\Bw} )} 
 \ p(\tilde{\Bw} \mid \cD ) \ \Rd \tilde{\Bw}
\ + \ \MI{Y}{W \mid \Bx, \cD} \ .
\end{align}

We verify the last equality in Eq.~\eqref{eq:ce_h_kl}, i.e. that the Mutual Information is equal to the expected Kullback-Leibler divergence:
\begin{align}
\MI{Y}{W \mid \Bx, \cD} \ &= \
 \int_{\cW} \sum_{\By \in \cY}
 p(\By,\tilde{\Bw }\mid \Bx, \cD) \ \log \frac{p(\By,\tilde{\Bw }\mid \Bx, \cD)}
  {p(\By \mid \Bx, \cD) \ p(\tilde{\Bw }\mid \cD)}   \ \Rd \tilde{\Bw} \\ \nonumber
  &= \
 \int_{\cW} \sum_{\By \in \cY}
  p(\By \mid \Bx, \tilde{\Bw}) \   p(\tilde{\Bw }\mid \cD)  \ \log \frac{ p(\By \mid \Bx, \tilde{\Bw}) \   p(\tilde{\Bw }\mid \cD) }
  {p(\By \mid \Bx, \cD) \ p(\tilde{\Bw }\mid \cD)}   \ \Rd \tilde{\Bw} \\ \nonumber
  &= \
 \int_{\cW} \sum_{\By \in \cY}
  p(\By \mid \Bx, \tilde{\Bw})   \ \log \frac{ p(\By \mid \Bx, \tilde{\Bw})  }
  {p(\By \mid \Bx, \cD)}  \   p(\tilde{\Bw }\mid \cD) \ \Rd \tilde{\Bw} 
  \\ \nonumber
  &= \
 \int_{\cW}  \KL{p(\By \mid \Bx, \tilde{\Bw})}
  { p(\By \mid \Bx, \cD)}  \  p(\tilde{\Bw }\mid \cD) \ \Rd \tilde{\Bw} \ .
\end{align}
This is possible because the label is dependent on the selected model.
First, a model is selected, then a label is chosen with the selected model.
To summarize, the predictive uncertainty is measured by:
\begin{align} \label{eq:setting_a_all_versions}
   \ENT{p(\By \mid \Bx, \cD )}
   \ &= \ \int_{\cW }    \ENT{p(\By \mid \Bx, \tilde{\Bw} )} 
 \ p(\tilde{\Bw} \mid \cD ) \ \Rd \tilde{\Bw}
\ + \ \MI{Y}{W \mid \Bx, \cD} \\ \nonumber
&= \ \int_{\cW }    \ENT{p(\By \mid \Bx, \tilde{\Bw} )} 
 \ p(\tilde{\Bw} \mid \cD ) \ \Rd \tilde{\Bw}
\\ \nonumber
&+ \ 
\int_{\cW } \KL{p(\By \mid \Bx, \tilde{\Bw} )}{p(\By \mid \Bx, \cD )}
 \ p(\tilde{\Bw} \mid \cD ) \ \Rd \tilde{\Bw} \\\nonumber
 &= \int_{\cW } \CE{p(\By \mid \Bx, \tilde{\Bw} )}{p(\By \mid \Bx, \cD )}
 \ p(\tilde{\Bw} \mid \cD ) \ \Rd \tilde{\Bw} \ .  \nonumber
\end{align}
The total uncertainty is given by the entropy of the Bayesian model average predictive distribution, which we showed is equal to the expected
cross-entropy between the predictive distributions of candidate models $\tilde{\Bw}$ selected
according to the posterior and the Bayesian model average predictive distribution.
The aleatoric uncertainty is the expected entropy of candidate models 
drawn from the posterior, which can also be interpreted as the
entropy we expect when selecting a model according to the posterior.
Therefore, if all models likely under the posterior have low surprisal,
the aleatoric uncertainty in this setting is low.
The epistemic uncertainty is the expected KL divergence between the
the predictive distributions of candidate models and the Bayesian model average predictive distribution.
Therefore, if all models likely under the posterior have low divergence
of their predictive distribution to the Bayesian model average predictive distribution, 
the epistemic uncertainty in this setting is low.

\paragraph{Setting (b): Uncertainty of a given, pre-selected model.}

We assume to have training data $\cD$, an input $\Bx$,
and a given, pre-selected model with parameters $\Bw$ and predictive distribution $p(\By \mid \Bx, \Bw )$.
Using the predictive distribution of the model, a class $\By$ is selected based on $\Bx$,
therefore there is uncertainty about which $\By$ is selected.
Furthermore, we assume that the true model with predictive distribution $p(\By \mid \Bx, \Bw^* )$ 
and parameters $\Bw^*$ has generated the training data $\cD$ and will also
generate the observed (real world) $\By^*$ from $\Bx$ that we want to predict.
The true model is only revealed later, e.g.\ via more samples or 
by receiving knowledge about $\Bw^*$.
Hence, there is uncertainty about the parameters of the true model.
Revealing the true model is viewed as drawing a true model from
all possible true models according to their agreement with $\cD$.
Note, to reveal the true model is not necessary in our framework but helpful
for the intuition of drawing a true model.
We neither consider uncertainty about the model class nor the modeling 
nor about the training data.
In summary, there is uncertainty about drawing a class from the 
predictive distribution of the given, pre-selected model and
uncertainty about drawing the true parameters of the model distribution.

According to \citet{Apostolakis:90} and \citet{Helton:97},
the aleatoric uncertainty is the variability of
selecting a class $\By$ via $p(\By \mid \Bx, \Bw )$.
Using the entropy,
the aleatoric uncertainty is
\begin{align}
 &\ENT{p(\By \mid \Bx, \Bw )} \ .
\end{align}
Also according to \citet{Apostolakis:90} and \citet{Helton:97},
the epistemic uncertainty is the uncertainty about the
parameters $\Bw$ of the distribution, that is, a difference measure
between $\Bw$ and the true parameters $\Bw^*$.
We use as a measure for the epistemic uncertainty the Kullback-Leibler divergence:
\begin{align}
 &\KL{p(\By \mid \Bx, \Bw )}{p(\By \mid \Bx, \Bw^* )} \ .
\end{align}
The total uncertainty is the aleatoric uncertainty plus the epistemic uncertainty,
which is the cross-entropy between
$p(\By \mid \Bx, \Bw )$ and $p(\By \mid \Bx, \Bw^*)$:
\begin{align}
 \CE{p(\By \mid \Bx, \Bw )}{p(\By \mid \Bx, \Bw^* )} \ &= \ 
  \ENT{p(\By \mid \Bx, \Bw )} \ + \ 
  \KL{p(\By \mid \Bx, \Bw )}{p(\By \mid \Bx, \Bw^* )} \ .
\end{align}
However, we do not know the true parameters $\Bw^*$.
The posterior $p(\tilde{\Bw} \mid \cD )$ gives us the
likelihood of $\tilde{\Bw}$ being the true parameters $\Bw^*$.
We assume that the true model is revealed later.
Therefore we use the 
expected Kullback-Leibler divergence for the epistemic uncertainty:
\begin{align}
&\int_{\cW }    \KL{p(\By \mid \Bx, \Bw )}{p(\By \mid \Bx, \tilde{\Bw} )} 
 \ p(\tilde{\Bw} \mid \cD ) \ \Rd \tilde{\Bw}  \ .
\end{align}
Consequently, the total uncertainty is
\begin{align}
&\ENT{p(\By \mid \Bx, \Bw )} \ + \ 
 \int_{\cW }    \KL{p(\By \mid \Bx, \Bw )}{p(\By \mid \Bx, \tilde{\Bw} )} 
 \ p(\tilde{\Bw} \mid \cD ) \ \Rd \tilde{\Bw}  \ .
\end{align}
The total uncertainty can therefore be expressed by the expected cross-entropy as it was in setting (a) (see Eq.~\eqref{eq:setting_a_all_versions}), but between
$p(\By \mid \Bx, \Bw )$ and $p(\By \mid \Bx, \tilde{\Bw})$:
\begin{align}
 \int_{\cW } & \CE{p(\By \mid \Bx, \Bw )}{p(\By \mid \Bx, \tilde{\Bw} )}
 \ p(\tilde{\Bw} \mid \cD ) \ \Rd \tilde{\Bw}  \\ \nonumber
 &= \
\int_{\cW }   \left(  \ENT{p(\By \mid \Bx, \Bw )} \ + \ \KL{p(\By \mid \Bx, \Bw )}{p(\By \mid \Bx, \tilde{\Bw} )}
  \right) 
 \ p(\tilde{\Bw} \mid \cD ) \ \Rd \tilde{\Bw} \\ \nonumber
&= \  \ENT{p(\By \mid \Bx, \Bw )} \ + \ 
\int_{\cW }    \KL{p(\By \mid \Bx, \Bw )}{p(\By \mid \Bx, \tilde{\Bw} )}
 \ p(\tilde{\Bw} \mid \cD ) \ \Rd \tilde{\Bw} \ .
\end{align}

\subsubsection{Regression}
We follow \cite{Depeweg:18} and measure the predictive uncertainty in a regression setting using the differential entropy $\ENT{p(y \mid \Bx, \Bw)} = - \int_\cY p(y \mid \Bx, \Bw) \log p(y \mid \Bx, \Bw) \Rd y$ of the predictive distribution $p(y \mid \Bx, \Bw)$ of a probabilistic model.
In the following, we assume that we are modeling a Gaussian distribution, but other 
continuous probability distributions e.g. a Laplace lead to similar results.
The model thus has to provide estimators for the mean $\mu(\Bx, \Bw)$ and variance $\sigma^2(\Bx, \Bw)$ of the Gaussian.
The predictive distribution is given by 
\begin{align}
p(y \mid \Bx, \Bw ) &= (2 \pi \ \sigma^2(\Bx, \Bw))^{- \frac{1}{2}} \exp{\bigg\{-\frac{(y - \mu(\Bx, \Bw))^2}{2 \ \sigma^2(\Bx, \Bw)}\bigg\}} \ .
\end{align}
The differential entropy of a Gaussian distribution is given by
\begin{align}
     \ENT{p(y \mid \Bx, \Bw )} &= - \int_\cY \ p(y \mid \Bx, \Bw ) \log p(y \mid \Bx, \Bw ) \ \Rd y \\ \nonumber
     &= \frac{1}{2} \ \log(\sigma^2(\Bx, \Bw)) + \log(2 \pi) + \frac{1}{2} \ .
\end{align}
The KL divergence between two Gaussian distributions is given by
\begin{align}
    &\KL{p(y \mid \Bx, \Bw)}{p(y \mid \Bx, \tilde{\Bw})} \\ \nonumber
    &= - \int_\cY \ p(y \mid \Bx, \Bw ) \log \left( \frac{p(y \mid \Bx, \Bw )}{p(y \mid \Bx, \tilde{\Bw} )} \right) \ \Rd y \\ \nonumber
    &= \frac{1}{2} \ \log \left( \frac{\sigma^2(\Bx, \tilde{\Bw})}{\sigma^2(\Bx, \Bw)} \right) \ + \ \frac{\sigma^2(\Bx, \Bw) \ + \ \left( \mu(\Bx, \Bw) \ - \ \mu(\Bx, \tilde{\Bw}) \right)^2}{2 \ \sigma^2(\Bx, \tilde{\Bw})} \ - \ \frac{1}{2} \ .
\end{align}

\paragraph{Setting (a): Expected uncertainty when selecting a model.}
\cite{Depeweg:18} consider the differential entropy of the Bayesian model average $~{p(y \mid \Bx, \cD) = \int_W p(y \mid \Bx, \tilde{\Bw}) p(\tilde{\Bw} \mid \cD) \Rd \tilde{\Bw}}$, which is equal to the expected cross-entropy and can be decomposed into the expected differential entropy and Kullback-Leibler divergence.
Therefore, the expected uncertainty when selecting a model is given by
\begin{align}
    \int_{\cW } & \CE{p(y \mid \Bx, \tilde{\Bw} )}{p(y \mid \Bx, \cD )}
 \ p(\tilde{\Bw} \mid \cD ) \ \Rd \tilde{\Bw} \ = \ \ENT{p(y \mid \Bx, \cD )} \\ \nonumber
 &= \int_{\cW } \ENT{p(y \mid \Bx, \tilde{\Bw} )} \ p(\tilde{\Bw} \mid \cD ) \ \Rd \tilde{\Bw} + \int_{\cW } \KL{p(y \mid \Bx, \tilde{\Bw})}{p(y \mid \Bx, \cD)} \ p(\tilde{\Bw} \mid \cD ) \ \Rd \tilde{\Bw} \\ \nonumber
 &= \int_\cW \ \frac{1}{2} \ \log(\sigma^2(\Bx, \tilde{\Bw}))  \ p(\tilde{\Bw} \mid \cD ) \ \Rd \tilde{\Bw}  \ + \ \log(2 \pi) \\ \nonumber
 &+ \ \int_\cW \KL{p(y \mid \Bx, \tilde{\Bw})}{p(y \mid \Bx, \cD)} \ p(\tilde{\Bw} \mid \cD) \ \Rd \tilde{\Bw} \ .
\end{align}

\paragraph{Setting (b): Uncertainty of a given, pre-selected model.}
Synonymous to the classification setting, the uncertainty of a given, pre-selected model $\Bw$ is given by
\begin{align}
    \int_{\cW } & \CE{p(y \mid \Bx, \Bw )}{p(y \mid \Bx, \tilde{\Bw} )}
 \ p(\tilde{\Bw} \mid \cD ) \ \Rd \tilde{\Bw} \\ \nonumber
 &= \  \ENT{p(\By \mid \Bx, \Bw )} \ + \ 
\int_{\cW }    \KL{p(\By \mid \Bx, \Bw )}{p(\By \mid \Bx, \tilde{\Bw} )}
 \ p(\tilde{\Bw} \mid \cD ) \ \Rd \tilde{\Bw} \\ \nonumber
 &= \frac{1}{2} \ \log(\sigma^2(\Bx, \Bw)) \ + \ \log(2 \pi) \\ \nonumber
 &+ \ \int_\cW \ \frac{1}{2} \ \log \left( \frac{\sigma^2(\Bx, \tilde{\Bw})}{\sigma^2(\Bx, \Bw)} \right) \ + \ \frac{\sigma^2(\Bx, \Bw) \ + \ \left( \mu(\Bx, \Bw) \ - \ \mu(\Bx, \tilde{\Bw}) \right)^2}{2 \ \sigma^2(\Bx, \tilde{\Bw})} \ p(\tilde{\Bw} \mid \cD ) \ \Rd \tilde{\Bw} \ . %
\end{align}

\paragraph{Homoscedastic, Model Invariant Noise.}

We assume, that noise is homoscedastic for all inputs $\Bx \in \cX$, thus $\sigma^2(\Bx, \Bw) = \sigma^2(\Bw)$.
Furthermore, most models in regression do not explicitly model the variance in their training objective.
For such a model $\Bw$, we can estimate the variance on a validation dataset $\cD_{\text{val}} = \{(\Bx_n, y_n)\}|_{n=1}^N$ as
\begin{align}
    \hat{\sigma}^2(\Bw) = \frac{1}{N} \ \sum_{n=1}^N \ (y_n - \mu(\Bx_n, \Bw))^2 \ .
\end{align}
If we assume that all reasonable models under the posterior will have similar variances ($\hat{\sigma}^2(\Bw) \approx \sigma^2(\tilde{\Bw}) \, \text{for} \, \tilde{\Bw} \sim p(\tilde{\Bw} \mid \cD)$), the uncertainty of a prediction using the given, pre-selected model $\Bw$ is given by
\begin{align}
    \int_{\cW } & \CE{p(y \mid \Bx, \Bw )}{p(y \mid \Bx, \tilde{\Bw} )}
 \ p(\tilde{\Bw} \mid \cD ) \ \Rd \tilde{\Bw} \\ \nonumber
 &\approx \frac{1}{2} \ \log(\hat{\sigma}^2(\Bw)) \ + \ \log(2 \pi) \\ \nonumber
 &+ \ \int_\cW \ \frac{1}{2} \ \log \left( \frac{\hat{\sigma}^2(\Bw)}{\hat{\sigma}^2(\Bw)} \right) \ + \ \frac{\hat{\sigma}^2(\Bw) \ + \ \left( \mu(\Bx, \Bw) \ - \ \mu(\Bx, \tilde{\Bw}) \right)^2}{2 \ \hat{\sigma}^2(\Bw)} \ p(\tilde{\Bw} \mid \cD ) \ \Rd \tilde{\Bw} \\ \nonumber
 &= \frac{1}{2} \ \log(\hat{\sigma}^2(\Bw)) \ + \ \frac{1}{\hat{\sigma}^2(\Bw)} \ \int_\cW \ \left( \mu(\Bx, \Bw) \ - \ \mu(\Bx, \tilde{\Bw}) \right)^2 \ p(\tilde{\Bw} \mid \cD ) \ \Rd \tilde{\Bw} \ + \frac{1}{2} \ + \ \log(2 \pi)  \ . 
\end{align}

\subsection{Mixture Importance Sampling for Variance Reduction}\label{sec:importance_sampling_variance}

The epistemic uncertainties in Eq.~\eqref{eq:epistemic_setting_a} and Eq.~\eqref{eq:epistemic_setting_b}
are expectations of KL divergences over the posterior. 
We have to approximate these integrals.

If the posterior has different modes,
a concentrated importance sampling function
has a high variance of estimates,
therefore converges very slowly \citep{Steele:06}.
Thus, we use mixture importance 
sampling (MIS) \citep{Hesterberg:95}.
MIS uses a mixture model for sampling, 
instead of a unimodal model of standard importance sampling \citep{Owen:00}.
Multiple importance sampling \cite{Veach:95} is similar to MIS 
and equal to it for balanced heuristics \citep{Owen:00}.
More details on these and similar methods
can be found in \cite{Owen:00,Cappe:04,Elvira:15,Elvira:19,Steele:06,Raftery:10}.
MIS has been very successfully applied to estimate
multimodal densities.
For example, the evidence lower bound (ELBO) \citep{Kingma:14} has been improved
by multiple importance sampling ELBO \citep{Kviman:22}.
Using a mixture model
should ensure that at least one of its components 
will locally match the shape of the integrand.
Often, MIS
iteratively enrich the sampling distribution by new modes \citep{Raftery:10}.

In contrast to iterative enrichment, which finds modes by chance,
we are able to explicitly search for posterior modes,
where the integrand of the
definition of epistemic uncertainty is large.
For each of these modes, we define a component of the
mixture from which we then sample.
We have the huge advantage to have explicit expressions for the integrand.
The integrand of the epistemic uncertainty in
Eq.~\eqref{eq:epistemic_setting_a} and Eq.~\eqref{eq:epistemic_setting_b} has the form 
\begin{align}
   \DI{p(\By \mid \Bx, \Bw )}{p(\By \mid \Bx, \tilde{\Bw} )}
   \ p(\tilde{\Bw} \mid \cD ) \ ,
   \label{eq:distance_apdx}
\end{align}
where $\DI{\cdot}{\cdot}$ is a distance or divergence of distributions which
is computed using the parameters that determine those distributions.
The distance/divergence $\DI{\cdot}{\cdot}$ eliminates the aleatoric uncertainty, which
is present in $p(\By \mid \Bx, \Bw )$ and $p(\By \mid \Bx, \tilde{\Bw} )$.
Essentially, $\DI{\cdot}{\cdot}$ reduces distributions to functions of their parameters.

Importance sampling is applied to estimate integrals of the form
\begin{align}
    \label{eq:importance_sampling}
    s &= \int_\cX f(\Bx) \ p(\Bx) \ \Rd \Bx  = \int_\cX \frac{f(\Bx) \ p(\Bx)}{q(\Bx)} \ q(\Bx) \ \Rd \Bx \ ,
\end{align}
with integrand $f(x)$ and probability distributions $p(\Bx)$ and $q(\Bx)$, when it is easier to sample according to $q(\Bx)$ than $p(\Bx)$.
The estimator of Eq.~\eqref{eq:importance_sampling} when drawing $\Bx_n$ according to $q(\Bx)$ is given by
\begin{align}
    \hat s = \frac{1}{N} \sum_{n=1}^N \frac{f(\Bx_n) \ p(\Bx_n)}{q(\Bx_n)} \ .
\end{align}
The asymptotic variance $\sigma^2_{s}$ of importance sampling is given by (see e.g. \citet{Owen:00}):
\begin{align} \label{eq:is_variance}
    \sigma^2_{s} &= \int_\cX \left( \frac{f(\Bx) \ p(\Bx)}{q(\Bx)} \ - \ s \right)^2 \ q(\Bx) \ \Rd \Bx \\ \nonumber
    &= \int_\cX \left( \frac{f(\Bx) \ p(\Bx)}{q(\Bx)} \right)^2 \ q(\Bx) \ \Rd \Bx \ - \ s^2 \ ,
\end{align}
and its estimator when drawing $\Bx_n$ from $q(\Bx)$ is given by
\begin{align}
    \hat \sigma^2_{s} &= \frac{1}{N} \sum_{n=1}^N \left( \frac{f(\Bx_n) \ p(\Bx_n)}{q(\Bx_n)} \ - \ s \right)^2 \\ \nonumber
    &= \frac{1}{N} \sum_{n=1}^N \left( \frac{f(\Bx_n) \ p(\Bx_n)}{q(\Bx_n)} \right)^2 \ - \ s^2 \ .
\end{align}
We observe, that the variance is determined by the term $\frac{f(\Bx) p(\Bx)}{q(\Bx)}$, thus we want $q(\Bx)$ to be proportional to $f(\Bx) p(\Bx)$.
Most importantly, $q(\Bx)$ should not be close to zero for large $f(\Bx) p(\Bx)$.
To give an intuition about the severity of unmatched modes, we depict an educational example in Fig.~\ref{fig:variance_gaussians}.
Now we plug in the form of the integrand given by Eq.~\eqref{eq:distance_apdx} into Eq.~\eqref{eq:importance_sampling}, to calculate the expected divergence $\DI{\cdot}{\cdot}$ under the model posterior $p(\tilde{\Bw} \mid \cD )$.
This results in
\begin{align}
    v &= \int_\cW \frac{\DI{p(\By \mid \Bx, \Bw )}{p(\By \mid \Bx, \tilde{\Bw} )} \ p(\tilde{\Bw} \mid \cD )}{q(\tilde{\Bw})} \ q(\tilde{\Bw}) \ \Rd \tilde \Bw \ , 
\end{align}
with estimate 
\begin{align}
   \hat v &= \frac{1}{N} \ \sum_{n=1}^N \frac{\DI{p(\By \mid \Bx, \Bw )}{p(\By \mid \Bx, \tilde{\Bw}_n )}
   \ p(\tilde{\Bw}_n \mid \cD )}{q(\tilde \Bw_n)} \ .
\end{align}
The variance is given by
\begin{align} \label{eq:importance_sampling_variance}
    \sigma^2_{v} \ &= \ \int_\cW \left( \frac{\DI{p(\By \mid \Bx, \Bw )}{p(\By \mid \Bx, \tilde{\Bw})}
   \ p(\tilde{\Bw} \mid \cD )}{q(\tilde \Bw)} \ - \ v \right)^2 \
                   q(\tilde \Bw) \ \Rd \tilde \Bw \\ \nonumber
                   &= \  \int_\cW \left( \frac{\DI{p(\By \mid \Bx, \Bw )}{p(\By \mid \Bx, \tilde{\Bw})}
   \ p(\tilde{\Bw} \mid \cD )}{q(\tilde \Bw)}\right)^2 \
                   q(\tilde \Bw) \ \Rd \tilde \Bw 
                     \ - \ v^2 \ .
\end{align}
The estimate for the variance is given by
\begin{align} \label{eq:importance_sampling_variance_emp}
    \hat \sigma^2_{v} &= \frac{1}{N} \sum_{n=1}^N \left( \frac{\DI{p(\By \mid \Bx, \Bw )}{p(\By \mid \Bx, \tilde{\Bw}_n )}
   \ p(\tilde{\Bw}_n \mid \cD )}{q(\tilde \Bw_n)} \ - \ v \right)^2 \\ \nonumber
   &= \frac{1}{N} \sum_{n=1}^N \left( \frac{\DI{p(\By \mid \Bx, \Bw )}{p(\By \mid \Bx, \tilde{\Bw}_n )}
   \ p(\tilde{\Bw}_n \mid \cD )}{q(\tilde \Bw_n)} \right)^2  \ - \ v^2 \ ,
\end{align}
where $\tilde \Bw_n$ is drawn according to $q(\tilde \Bw)$.
The asymptotic ($N \to \infty$) confidence intervals are given by
\begin{align}
 &\lim_{N \to \infty} \PR \left( - \ a \ \frac{\sigma_{v}}{\sqrt{N}} 
  \ \leq \  \hat{v} \ - \ v  \ \leq \  b \ \frac{\sigma_{v}}{\sqrt{N}} \right) 
  \ = \
  \frac{1}{\sqrt{2 \ \pi}} \ \int_{-a}^b \ \exp ( - \ 1/2 \  t^2 ) \ \Rd t  \ .
\end{align}
Thus, $\hat{v}$ converges with $\frac{\sigma_{v}}{\sqrt{N}}$ to $v$.
The asymptotic confidence interval is proofed in \citet{Weinzierl:00}
and \citet{Hesterberg:96} using the Lindeberg–L\'{e}vy central limit theorem which
ensures the asymptotic normality of the estimate $\hat{v}$. 
The $q(\tilde \Bw)$ that minimizes the variance is
\begin{align}
\label{eq:AminVar}
q(\tilde \Bw) = \frac{\DI{p(\By \mid \Bx, \Bw )}{p(\By \mid \Bx, \tilde{\Bw})}
   \ p(\tilde{\Bw} \mid \cD )}{v} \ .
\end{align}
Thus we want to find a density $q(\tilde \Bw)$ that is proportional to $\DI{p(\By \mid \Bx, \Bw )}{p(\By \mid \Bx, \tilde{\Bw})}
   \ p(\tilde{\Bw} \mid \cD )$.
Only approximating the posterior $p(\tilde{\Bw} \mid \cD )$ as 
Deep Ensembles or MC dropout is insufficient to guarantee a low expected error,
since the sampling variance cannot be bounded, as $\sigma^2_v$ could get
arbitrarily big if the distance is large but the probability under the sampling distribution is very small.
For $q(\tilde \Bw) \propto  p(\tilde{\Bw} \mid \cD )$ and non-negative, unbounded,
but continuous $\DI{\cdot}{\cdot}$, the variance $ \sigma^2_{v}$ given by Eq.~\eqref{eq:importance_sampling_variance} cannot be
bounded.

For example, if $\DI{\cdot}{\cdot}$ is the KL-divergence and both $p(\By \mid \Bx, \Bw )$
and $p(\By \mid \Bx, \tilde{\Bw})$ are Gaussians where the 
means $\mu(\Bx, \Bw)$, $\mu(\Bx, \tilde{\Bw})$ 
and variances $\sigma^2(\Bx, \Bw)$, $\sigma^2(\Bx, \tilde{\Bw})$
are estimates provided by the models, the KL is unbounded.
The KL divergence between two Gaussian distributions is given by
\begin{align}
    &\KL{p(y \mid \Bx, \Bw)}{p(y \mid \Bx, \tilde{\Bw})} \\ \nonumber
    &= - \int_\cY \ p(y \mid \Bx, \Bw ) \log \left( \frac{p(y \mid \Bx, \Bw )}{p(y \mid \Bx, \tilde{\Bw} )} \right) \ \Rd y \\ \nonumber
    &= \frac{1}{2} \ \log \left( \frac{\sigma^2(\Bx, \tilde{\Bw})}{\sigma^2(\Bx, \Bw)} \right) \ + \ \frac{\sigma^2(\Bx, \Bw) \ + \ \left( \mu(\Bx, \Bw) \ - \ \mu(\Bx, \tilde{\Bw}) \right)^2}{2 \ \sigma^2(\Bx, \tilde{\Bw})} \ - \ \frac{1}{2} \ .
\end{align}
For $\sigma^2(\Bx, \tilde{\Bw})$ going towards zero and a non-zero difference of
the mean values, the KL-divergence can be arbitrarily large.
Therefore, methods that only consider the posterior $p(\tilde{\Bw}
\mid \cD )$ cannot bound the variance $ \sigma^2_{v}$ if $\DI{\cdot}{\cdot}$
is unbounded and the parameters $\tilde{\Bw}$ allow distributions
which can make $\DI{\cdot}{\cdot}$ arbitrary large.

\begin{figure}
    \centering
    \includegraphics[width=\textwidth]{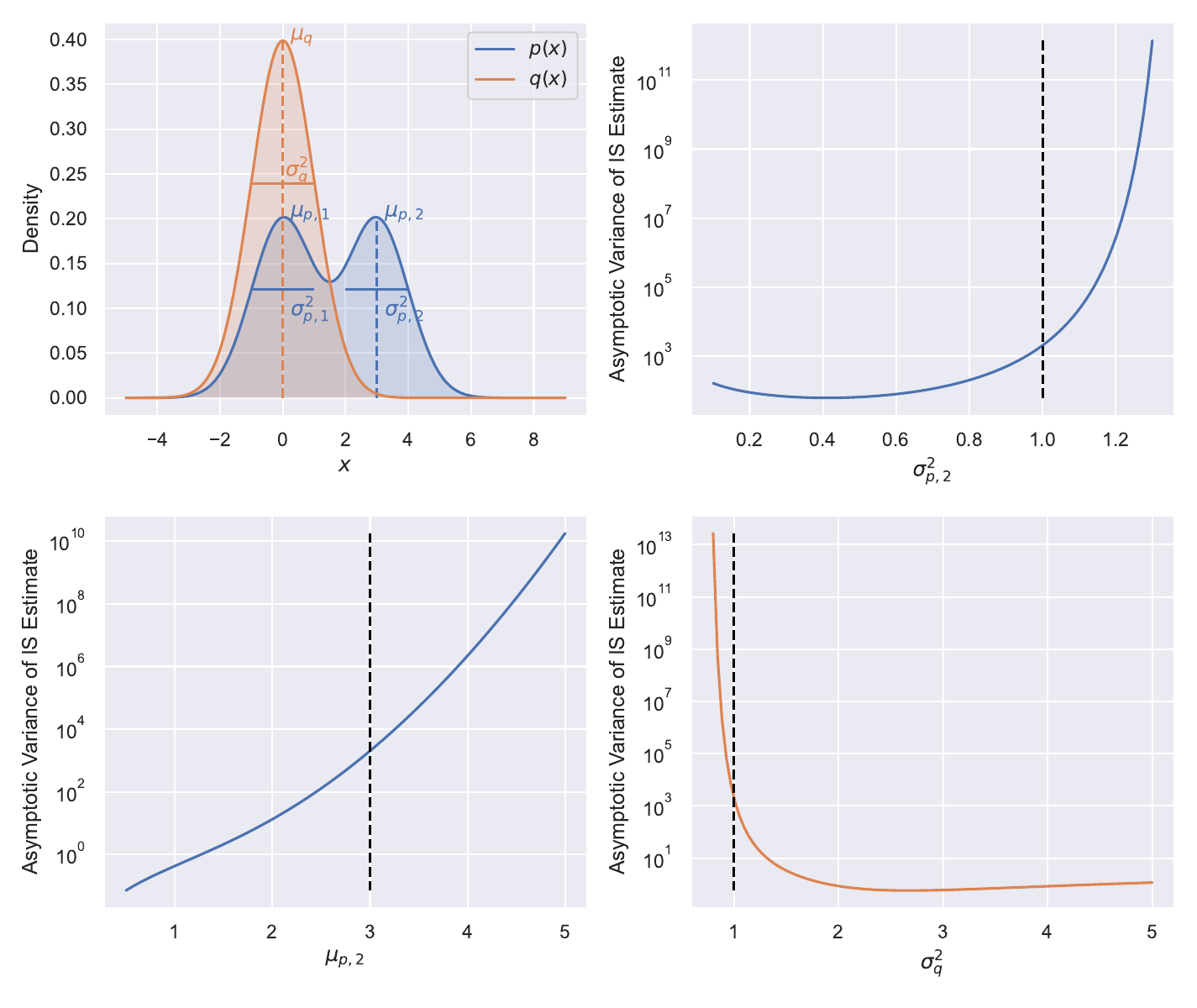}
    \caption[Asymptotic variance for multimodal target and unimodal sampling distribution]{Analysis of asymptotic variance of importance sampling for multimodal target distribution $p(x)$ and a unimodal sampling distribution $q(x)$. The target distribution is a mixture of two Gaussian distributions with means $\mu_{p, 1}, \mu_{p, 2}$ and variances $\sigma^2_{p, 1}, \sigma^2_{p, 2}$. The sampling distribution is a single Gaussian with mean $\mu_q$ and variance $\sigma^2_q$. $q(x)$ matches one of the modes of $p(x)$, but misses the other. Both distributions are visualized for their standard parameters $\mu_{p, 1} = \mu_q = 0$, $\mu_{p, 2} = 3$ and $\sigma^2_{p, 1} = \sigma^2_{p, 2} = \sigma^2_q = 1$, where both mixture components of $p(x)$ are equally weighted. We calculate the asymptotic variance (Eq.~\eqref{eq:is_variance} with $f(\Bx) = 1$) for different values of $\sigma^2_{p, 2}$, $\mu_{p, 2}$ and $\sigma^2_q$ and show the results in the top right, bottom left and bottom right plot respectively. The standard value for the varied parameter is indicated by the black dashed line. We observe, that slightly increasing the variance of the second mixture component of $p(x)$, which is not matched by the mode of $q(x)$, rapidly increases the asymptotic variance. Similarly, increasing the distance between the center of the unmatched mixture component of $p(x)$ and $q(x)$ strongly increases the asymptotic variance. On the contrary, increasing the variance of the sampling distribution $q(x)$ does not lead to a strong increase, as the worse approximation of the matched mode of $p(x)$ is counterbalanced by putting probability mass where the second mode of $p(x)$ is located. Note, that this issue is even more exacerbated if $f(x)$ is non-constant. Then, $q(x)$ has to match the modes of $f(x)$ as well.}
    \label{fig:variance_gaussians}
\end{figure}

\clearpage
\section{Experimental Details and Further Experiments} \label{apx:sec:experimental_details}

Our code is publicly available at \href{https://github.com/ml-jku/quam}{https://github.com/ml-jku/quam}.

\subsection{Details on the Adversarial Model Search} \label{apx:sec:ams_details}

\begin{figure}[b!]
\centering
\includegraphics[width=\textwidth]{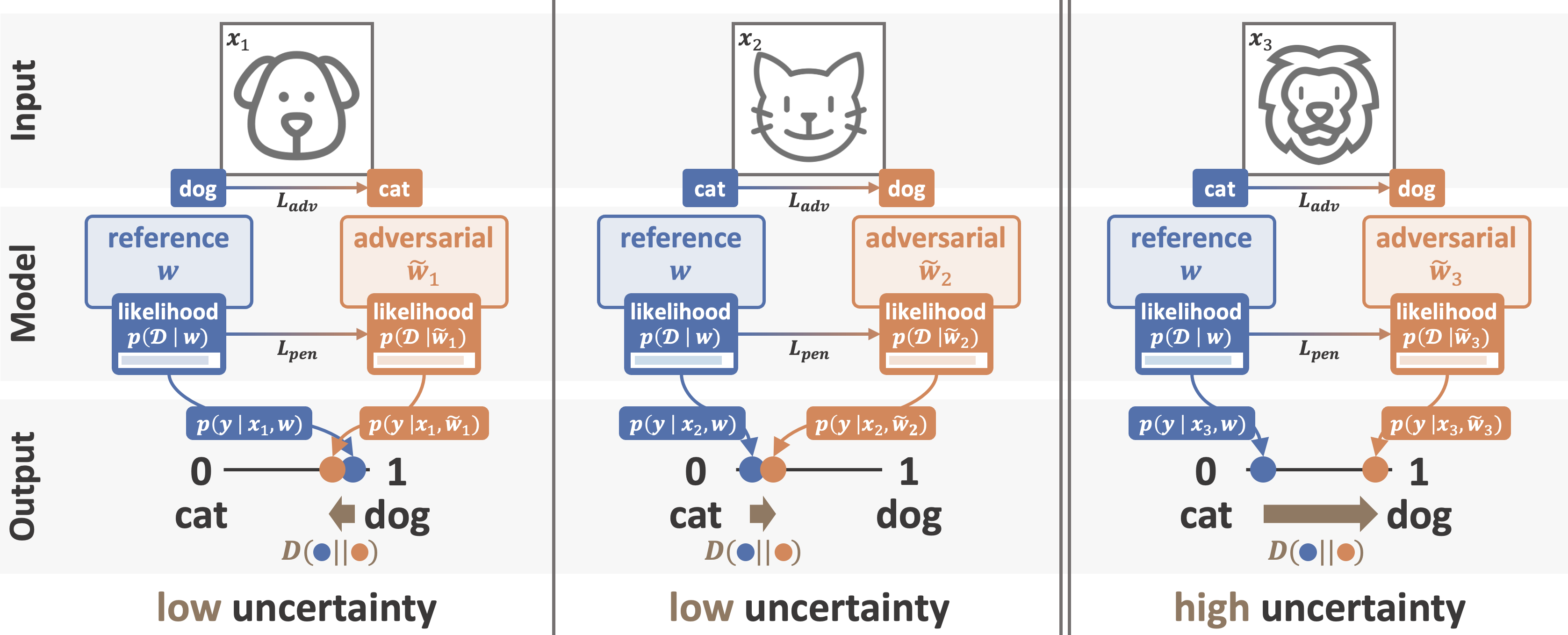}
\caption[Illustrative example of QUAM]{Illustrative example of QUAM. We illustrate quantifying the predictive uncertainty of a given, pre-selected model (blue), a classifier for images of cats and dogs. For each of the input images, we search for adversarial models (orange) that make different predictions than the given, pre-selected model while explaining the training data equally well (having a high likelihood). The adversarial models found for an image of a dog or a cat still make similar predictions (low epistemic uncertainty), while the adversarial model found for an image of a lion makes a highly different prediction (high epistemic uncertainty), as features present in images of both cats and dogs can be utilized to classify the image of a lion.
}
\label{fig:quam_overview}
\end{figure}

During the adversarial model search, we seek to maximize the KL divergence between the prediction of the reference model and adversarial models.
For an example, see Fig.~\ref{fig:quam_overview}.
We found that directly maximizing the KL divergence always leads to similar solutions to the optimization problem.
Therefore, we maximized the likelihood of a new test point to be in each possible class.
The optimization problem is very similar, considering the predictive distribution 
$p(\By \mid \Bx, \Bw)$ of a reference model and the predictive distribution $p(\By \mid \Bx, \tilde{\Bw})$ of an adversarial model, the model that is updated.
The KL divergence between those two is given by
\begin{align}
    &\KL{p(\By \mid \Bx, \Bw)}{p(\By \mid \Bx, \tilde{\Bw})} \\ \nonumber
    &= \sum p(\By \mid \Bx, \Bw) \log \left( \frac{p(\By \mid \Bx, \Bw)}{p(\By \mid \Bx, \tilde{\Bw})} \right) \\ \nonumber
    &= \sum p(\By \mid \Bx, \Bw) \log \left( p(\By \mid \Bx, \Bw)\right) - \sum p(\By \mid \Bx, \Bw) \log \left( p(\By \mid \Bx, \tilde{\Bw})\right) \\ \nonumber
    &= - \ENT{p(\By \mid \Bx, \Bw)} + \CE{p(\By \mid \Bx, \Bw)}{p(\By \mid \Bx, \tilde{\Bw})} \ .
\end{align}
Only the cross-entropy between the 
predictive distributions of the reference model parameterized by $\Bw$ and the adversarial model parameterized by $\tilde{\Bw}$
plays a role in the optimization, since the entropy of $p_{\Bw}$ stays constant during the adversarial model search.
Thus, the optimization target is equivalent to the cross-entropy loss, 
except that $\Bp_{\Bw}$ is generally not one-hot encoded but an arbitrary categorical
distribution.
This also relates to targeted / untargeted adversarial attacks on the input.
Targeted attacks try to maximize the output probability of a specific class.
Untargeted attacks try to minimize the probability of the originally predicted class,
by maximizing all other classes.
We found that attacking individual classes works better empirically, while directly maximizing the KL divergence always leads to similar solutions for different searches. The result often is a further increase of the probability associated with the most likely class.
Therefore, we conducted as many adversarial model searches for a new test point, as there are classes in the classification task. Thereby, we optimize the cross-entropy
loss for one specific class in each search.

For regression, we add a small perturbation to the bias of the output linear layer.
This is necessary to ensure a gradient in the first update step, as the model to optimize
is initialized with the reference model.
For regression, we perform the adversarial model search two times, as the output of an adversarial
model could be higher or lower than the reference model if we assume a scalar output.
We force, that the two adversarial model searches get higher or lower outputs than the
reference model respectively.
While the loss of the reference model on the training dataset $\rL_{\text{ref}}$ is
calculated on the full training dataset (as it has to be done only once),
we approximate $\rL_{\text{pen}}$ by randomly drawn mini-batches for each update step.
Therefore, the boundary condition might not be satisfied on the full training set,
even if the boundary condition is satisfied for the mini-batch estimate.

As described in the main paper, the resulting model of each adversarial model search
is used to define the location of a mixture component 
of a sampling distribution $q(\tilde{\Bw})$ (Eq.~\eqref{eq:mixture_distribution}).
The epistemic uncertainty is estimated by Eq.~\eqref{eq:epistemic_importance_sampling_estimator}, using models sampled from this
mixture distribution.
The simplest choice of distributions for each mixture distribution is a 
delta distribution at the location of the adversarial model $\breve{\Bw}_k$.
While this performs well empirically, we discard a lot of information
by not utilizing predictions of models obtained throughout the adversarial model search.
The intermediate solutions of the adversarial model search allow
to assess how easily models with highly divergent predictive distributions to the reference
model can be found.
Furthermore, the expected mean squared error (Eq.~\eqref{eq:expected_mse}) decreases
with $\frac{1}{N}$ with the number of samples $N$ and the expected 
variance of the estimator (Eq.~\eqref{eq:importance_sampling_variance_emp}) decreases
with $\frac{1}{\sqrt{N}}$.
Therefore, using more samples is beneficial empirically, even though 
we potentially introduce a bias to the estimator.

Consequently, we utilize all sampled models during the adversarial model search as an empirical sampling distribution for our experiments. This is the same as how members of an ensemble can be seen as an empirical sampling distribution \citep{Gustafsson:20} and conceptually similar to Snapshot ensembling \citep{Huang:17}.
To compute Eq.~\eqref{eq:epistemic_importance_sampling_estimator}, we use the negative exponential training loss of each model to approximate its posterior probability $p(\tilde{\Bw} \mid \cD)$. Note that the training loss is the negative log-likelihood, which in turn is proportional to the posterior probability. Note we temperature-scale the approximate posterior probability by $p(\tilde{\Bw} \mid \cD)^{\frac{1}{T}}$, with the temperature parameter $T$ set as a hyperparameter.

\subsection{Simplex Example}\label{apx:sec:simplex}

\begin{figure}[b!]
\centering
\begin{subfigure}{0.32\textwidth}
\includegraphics[width=\textwidth]{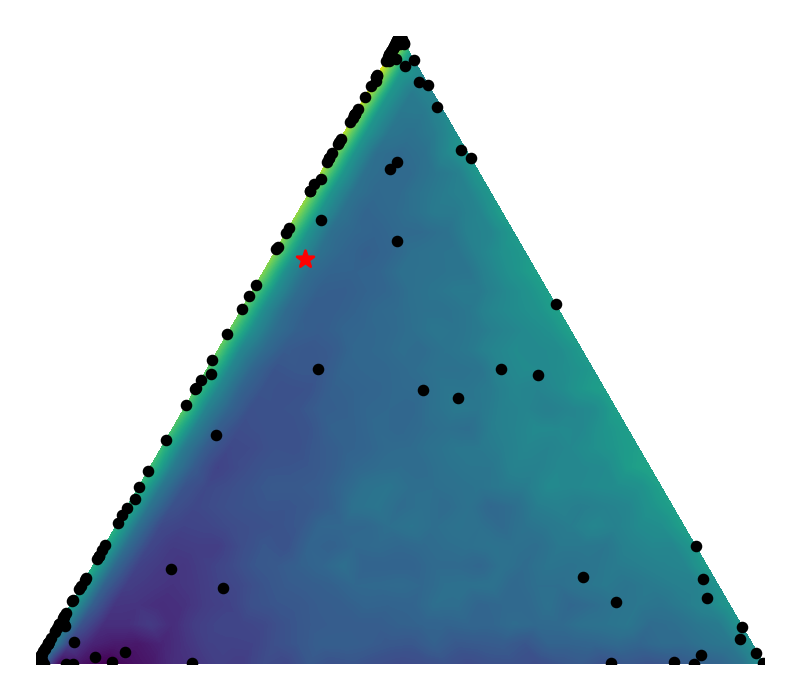}
\subcaption{HMC}
\end{subfigure}
\begin{subfigure}{0.32\textwidth}
\includegraphics[width=\textwidth]{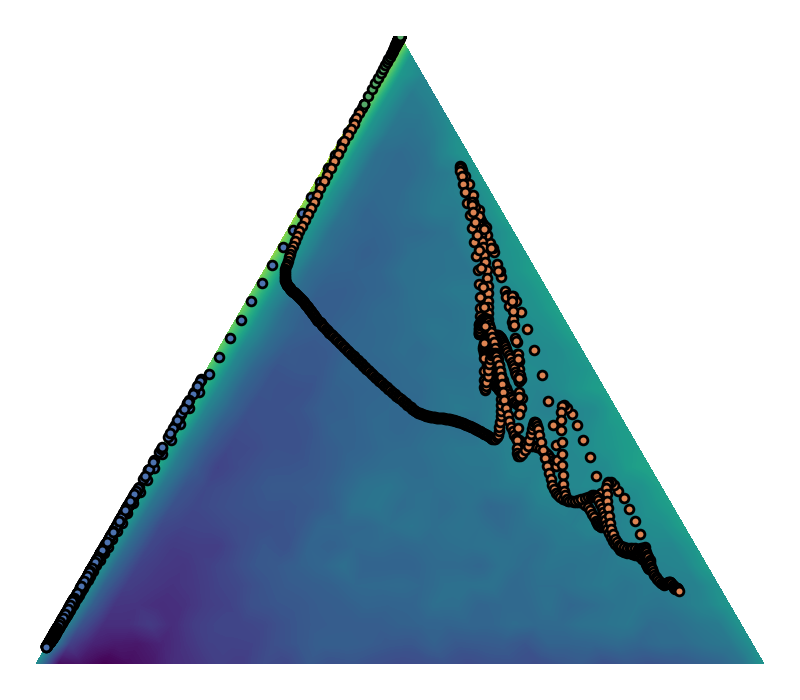}
\subcaption{Adversarial Model Search}
\end{subfigure}
\begin{subfigure}{0.32\textwidth}
\includegraphics[width=\textwidth]{figures/explanations/introexample/dataset.pdf}
\subcaption{Training data + new test point}
\end{subfigure}
\caption[HMC and Adversarial Model Search on simplex]{Softmax outputs (black) of individual models of HMC (a) as well as their average output (red) on a probability simplex.
Softmax outputs of models found throughout the adversarial model search (b), colored by the attacked class.
Left, right and top corners denote 100\% probability mass at the blue, orange and green class in (c) respectively.
Models were selected on the training data, and evaluated on the new test point (red) depicted in (c).
The background color denotes the maximum likelihood of the training data that is achievable by a model having equal softmax output as the respective location on the simplex.}
\label{apx:fig:simplex}
\end{figure}

We sample the training dataset $\cD = \{(\Bx_k, \By_k)\}_{k=1}^{K}$ from three Gaussian distributions (21 datapoints from each Gaussian) at locations $\Bmu_1 = (-4, -2)^T$, $\Bmu_2 = (4, -2)^T$, $\Bmu_3 = (0, 2 \ \sqrt{2})^T$ and the same two-dimensional covariance with $\sigma^2 = 1.5$ on both entries of the diagonal and zero on the off-diagonals.
The labels $\By_k$ are one-hot encoded vectors, signifying which Gaussian the input $\Bx_k$ was sampled from.
The new test point $\Bx$ we evaluate for is located at $(-6, 2)$.
To attain the likelihood for each position on the probability simplex, we train a two-layer fully connected neural network (with parameters $\Bw$) 
with hidden size of 10 on this dataset.
We minimize the combined loss 
\begin{align}
    \rL = \frac{1}{K} \sum_{k=1}^K l(p(\By \mid \Bx_k, \Bw), \By_k) + l(p(\By \mid \Bx, \Bw), \breve{\By}) \ ,
\end{align}
where $l$ is the cross-entropy loss function and $\breve{\By}$ 
is the desired categorical distribution for the output of the network.
We report the likelihood on the training dataset upon convergence of the training
procedure for $\breve{\By}$ on the probability simplex.
To average over different initializations of $\Bw$ and alleviate the influence of 
potentially bad local minima, 
we use the median over 20 independent runs to calculate the maximum.

For all methods, we utilize the same two-layer fully connected neural network with hidden
size of 10; for MC dropout we additionally added dropout with dropout probability $0.2$ after
every intermediate layer.
We trained 50 networks for the Deep Ensemble results.
For MC dropout we sampled predictive distributions using 1000 forward passes.

Fig.~\ref{apx:fig:simplex} (a) shows models sampled using HMC, 
which is widely regarded as the best approximation to the 
ground truth for predictive uncertainty estimation.
Furthermore, Fig.~\ref{apx:fig:simplex} (b) shows models obtained by executing the
adversarial model search for the given training dataset and test point depicted in
Fig.~\ref{apx:fig:simplex} (c).
HMC also provides models that put more probability mass on the orange class.
Those are missed by Deep Ensembles and MC dropout (see Fig.~\ref{fig:simplex} (a) and  (b)).
The adversarial model search used by QUAM helps to identify those regions.

\subsection{Epistemic Uncertainty on Synthetic Dataset}\label{apx:sec:synthetic}

We create the two-moons dataset using the implementation of \cite{Pedregosa:11}.
All experiments were performed on a three-layer fully connected neural network with hidden size 100 and ReLU activations.
For MC dropout, dropout with dropout probability of 0.2 was applied after the intermediate layers.
We assume to have a trained reference model $\Bw$ of this architecture.
Results of the same runs as in the main paper, but calculated for the epistemic uncertainty in setting (b) (see Eq.~\eqref{eq:epistemic_setting_b}) are depicted in Fig.~\ref{fig:res:classification_twomoon_setting_b}.
Again, QUAM matches the ground truth best.

Furthermore, we conducted experiments on a synthetic regression dataset, where the input feature $x$ is drawn randomly between $ \left[-\pi , \pi \right]$ and the target is $y = \sin(x) + \epsilon$, with $\epsilon \sim \cN(0, 0.1)$.
The results are depicted in Fig.~\ref{fig:res:regression}.
As for the classification results, the estimate of QUAM is closest to the ground truth
provided by HMC.

The HMC implementation of \cite{Cobb:21} was used to obtain 
the ground truth epistemic uncertainties.
For the Laplace approximation, we used the implementation of \cite{Daxberger:21}.
For SG-MCMC we used the python package of \cite{Kapoor:23}.

\begin{figure}
\captionsetup[subfigure]{aboveskip=-0.5pt,belowskip=-1pt}
\centering
\begin{subfigure}{0.32\textwidth}
\includegraphics[width=\textwidth]{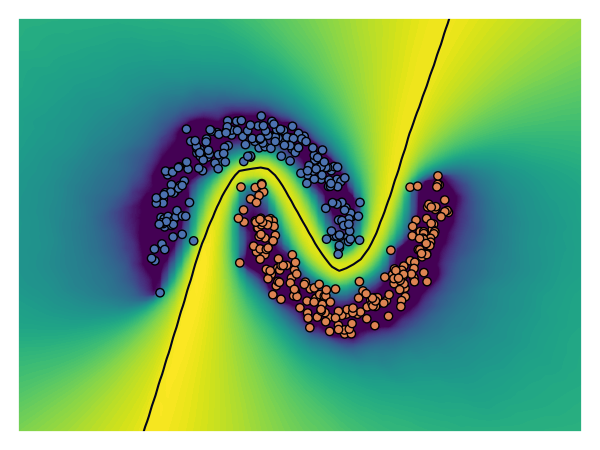}
\subcaption{\textbf{Ground Truth} - HMC}
\end{subfigure}
\begin{subfigure}{0.32\textwidth}
\includegraphics[width=\textwidth]{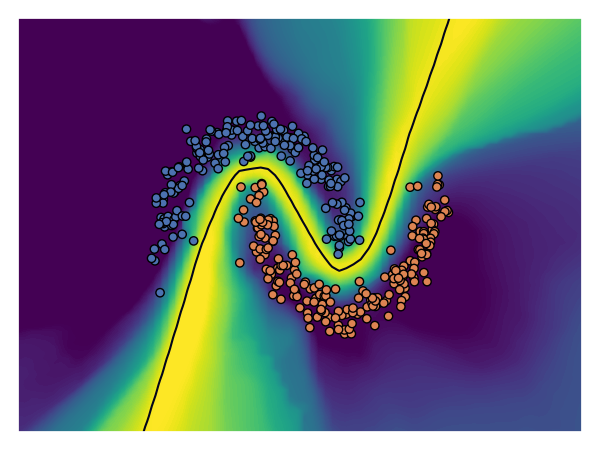}
\subcaption{cSG-HMC}
\end{subfigure}
\begin{subfigure}{0.32\textwidth}
\includegraphics[width=\textwidth]{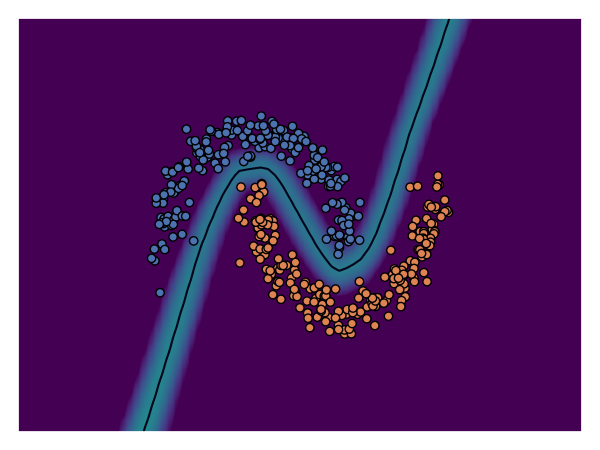}
\subcaption{Laplace}
\end{subfigure}
\begin{subfigure}{0.32\textwidth}
\includegraphics[width=\textwidth]{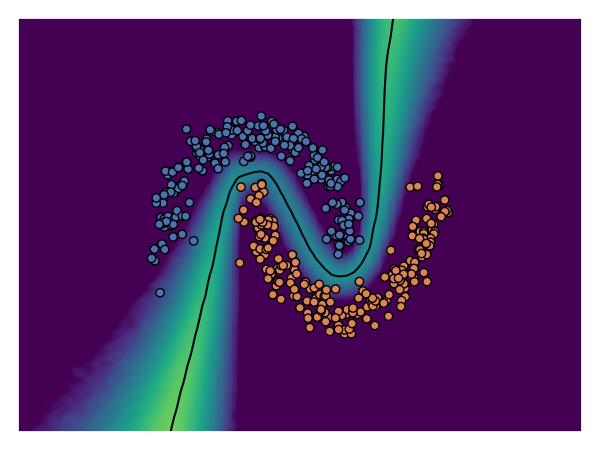}
\subcaption{MC dropout}
\end{subfigure}
\begin{subfigure}{0.32\textwidth}
\includegraphics[width=\textwidth]{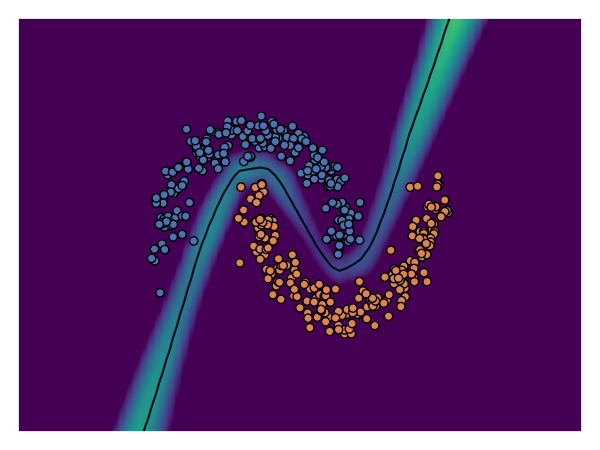}
\subcaption{Deep Ensembles}
\end{subfigure}
\begin{subfigure}{0.32\textwidth}
\includegraphics[width=\textwidth]{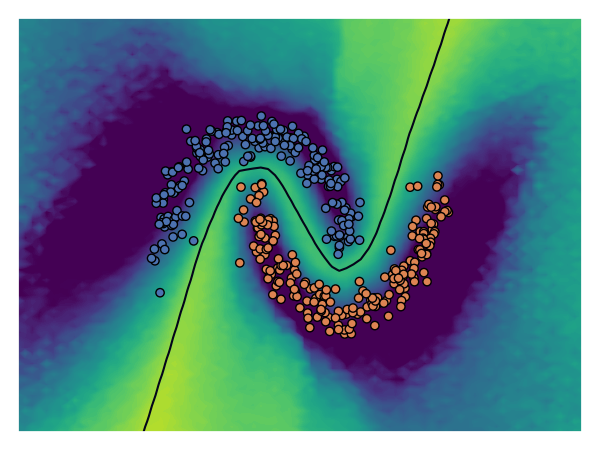}
\subcaption{\textbf{Our Method} - QUAM}
\end{subfigure}
\caption[Epistemic uncertainty (setting (b)) on synthetic classification dataset]{Epistemic uncertainty as in Eq.~\eqref{eq:epistemic_setting_b}. Yellow denotes high epistemic uncertainty. Purple denotes low epistemic uncertainty. The black lines show the decision boundary of the reference model $\Bw$. HMC is considered to be  the ground truth epistemic uncertainty. The estimate of QUAM is closest to the ground truth. All other methods underestimate the epistemic uncertainty in the top left and bottom right corner, as all models sampled by those predict the same class with high confidence for those regions.

}
\label{fig:res:classification_twomoon_setting_b}

\bigskip

\begin{subfigure}{0.32\textwidth}
\includegraphics[width=\textwidth]{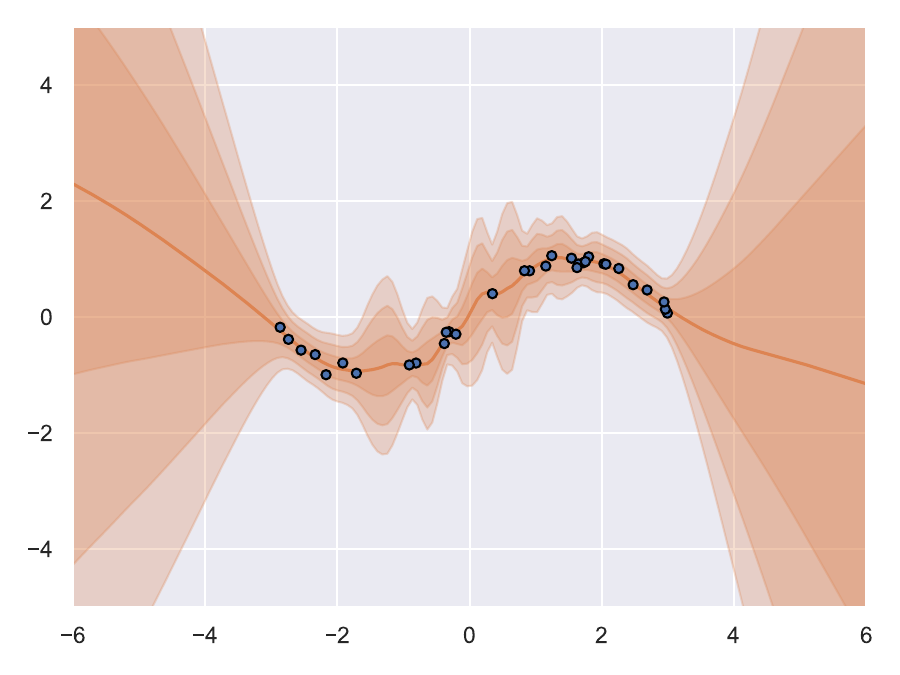}
\subcaption{\textbf{Ground Truth} - HMC}
\end{subfigure}
\begin{subfigure}{0.32\textwidth}
\includegraphics[width=\textwidth]{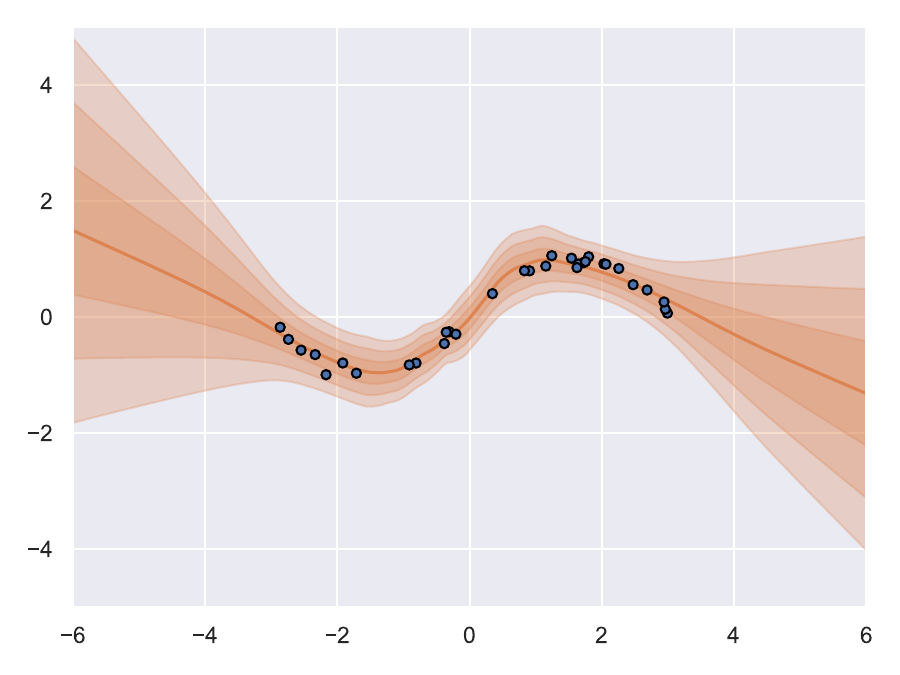}
\subcaption{cSG-HMC}
\end{subfigure}
\begin{subfigure}{0.32\textwidth}
\includegraphics[width=\textwidth]{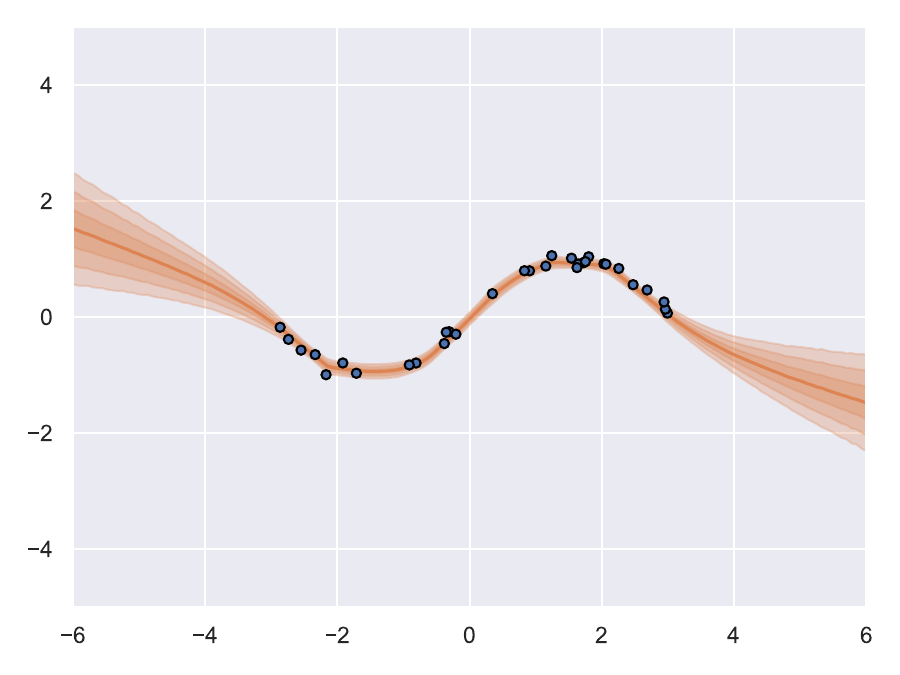}
\subcaption{Laplace}
\end{subfigure}
\begin{subfigure}{0.32\textwidth}
\includegraphics[width=\textwidth]{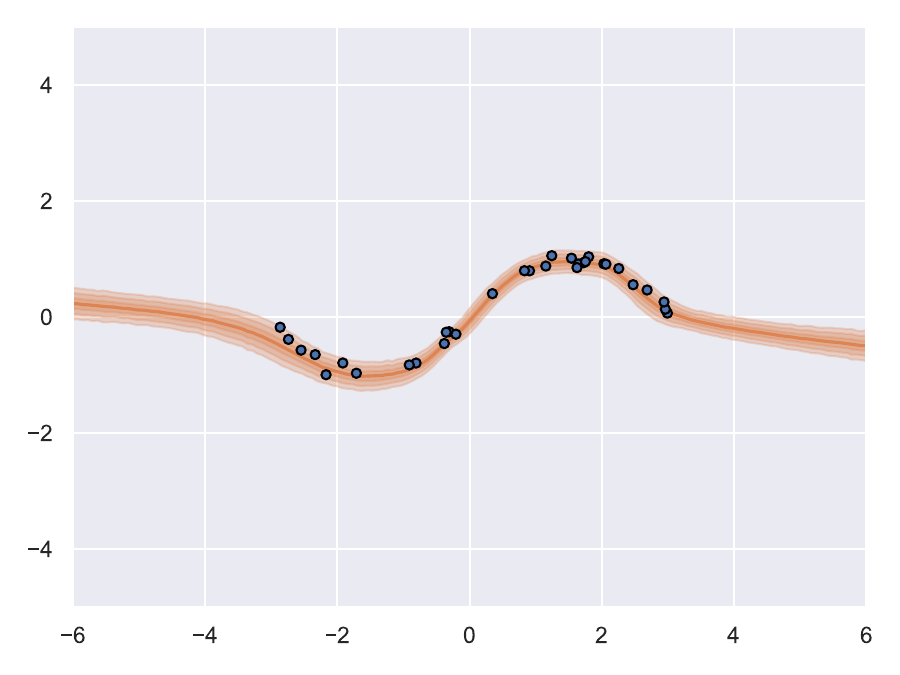}
\subcaption{MC dropout}
\end{subfigure}
\begin{subfigure}{0.32\textwidth}
\includegraphics[width=\textwidth]{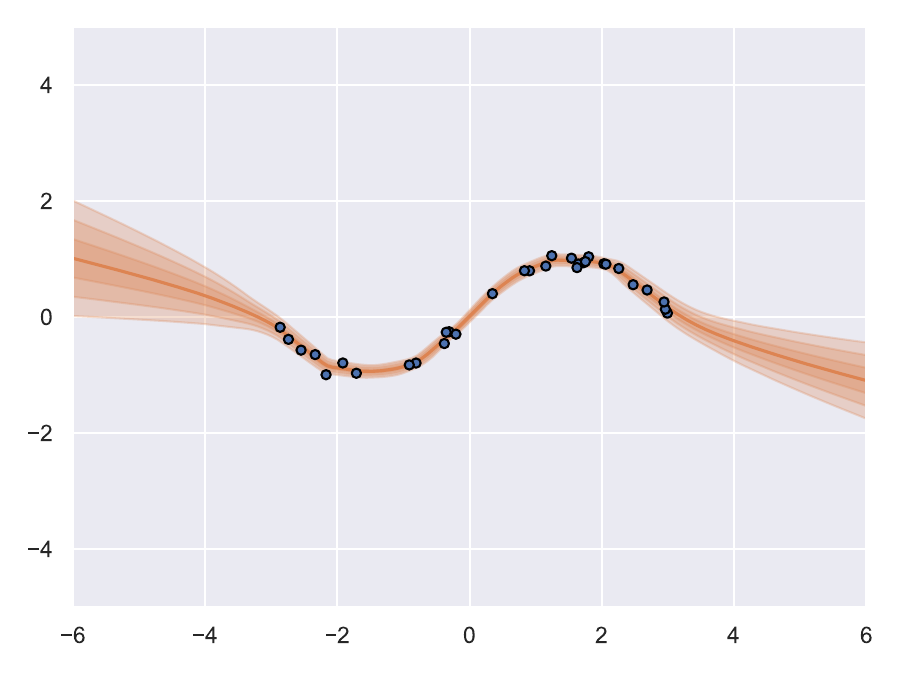}
\subcaption{Deep Ensembles}
\end{subfigure}
\begin{subfigure}{0.32\textwidth}
\includegraphics[width=\textwidth]{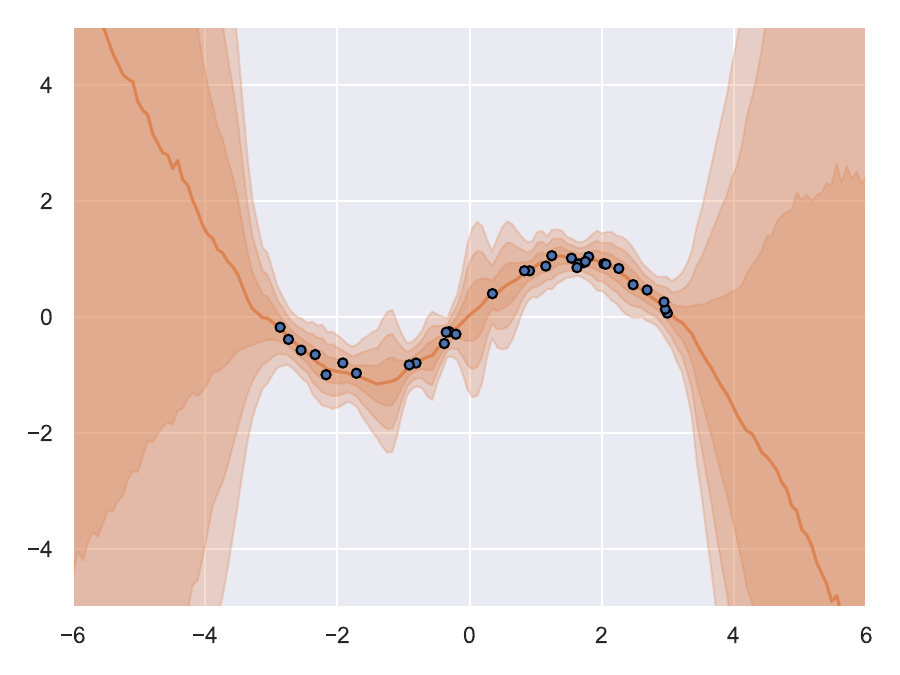}
\subcaption{\textbf{Our Method} - QUAM}
\end{subfigure}
\caption[Model variance of different methods on toy regression dataset]{Variance between different models found by different methods on synthetic \texttt{sine} dataset. Orange line denotes the empirical mean of the averaged models, shades denote one, two and three standard deviations respectively.  HMC is considered to be  the ground truth epistemic uncertainty. The estimate of QUAM is closest to the ground truth. All other methods fail to capture the variance between points as well as the variance left outside the region ($[-\pi , \pi]$) datapoints are sampled from.}
\label{fig:res:regression}
\end{figure}

\subsection{Epistemic Uncertainty on Vision Datasets}\label{sec:apx:vision_experiments}

Several vision datasets and their corresponding OOD datasets are commonly used for benchmarking predictive uncertainty quantification in the literature, e.g.\ in \cite{Blundell:15, Gal:16, Malinin:18, Ovadia:19, vAmersfoort:20, Mukhoti:21, Postels:21, Band:22}. 
Our experiments focused on two of those: MNIST \citep{LeCun:98} and its OOD derivatives as the most basic benchmark and ImageNet1K \citep{Deng:09} to demonstrate our method's ability to perform on a larger scale. 
Four types of experiments were performed: (i) OOD detection (ii) adversarial example detection, (iii) misclassification detection and (iv) selective prediction. 
Our experiments on adversarial example detection did not 
utilize a specific adversarial attack on the
input images, but natural adversarial examples \citep{Hendrycks:21},
which are images from the ID classes, 
but wrongly classified by standard ImageNet classifiers.
Misclassification detection and selective prediction was only performed for Imagenet1K, 
since MNIST classifiers easily reach accuracies of 99\% on the test set,
thus hardly misclassifying any samples. 
In all cases except selective prediciton, we measured AUROC, FPR at TPR of 95\% and AUPR of classifying ID vs. OOD, 
non-adversarial vs. adversarial and 
correctly classified vs. misclassified samples (on ID test set), 
using the epistemic uncertainty estimate provided by the different methods. 
For selective prediction, we utilized the epistemic uncertainty estimate to select a subset of samples on the ID test set.

\subsubsection{MNIST}\label{apx:sec:mnist}

OOD detection experiments were performed on MNIST with FashionMNIST (FMNIST) \citep{Xiao:17}, EMNIST \citep{Cohen:17}, KMNIST \citep{Clanuwat:18} and OMNIGLOT \citep{Lake:15} as OOD datasets. 
In case of EMNIST, we only used the ''letters'' subset, thus excluding classes overlapping with MNIST (digits).
We used the MNIST (test set) vs FMNIST (train set) OOD detection task to tune 
hyperparameters for all methods. 
The evaluation was performed using the complete test sets of the above-mentioned datasets ($n=10000$). 

For each seed, a separate set of Deep Ensembles was trained. 
Ensembles with the size of 10 were found to perform best.
MC dropout was used with a number of samples set to 2048. 
This hyperparameter setting was found to perform well. 
A higher sampling size would increase the performance marginally while increasing the computational load. 
Noteworthy is the fact, that with these settings the 
computational requirements of MC dropout surpassed those of QUAM.
Laplace approximation was performed only for the last layer,
due to the computational demand making it infeasible on the full network
with our computational capacities.
Mixture of Laplace approximations \cite{Eschenhagen:21} was evaluated as well using the parameters provided in the original work. Notably, the results from the original work suggesting improved performance compared to the Deep Ensembles on these tasks could not be reproduced. Comparison is provided in Table \ref{tab:res:mola_comparison}.
SG-HMC was performed on the full network using the Python package from \cite{Kapoor:23}. 
Parameters were set in accordance with those of the original authors \citep{Zhang:20}. 
For QUAM, the initial penalty parameter found by tuning was $c_0 = 6$, 
which was exponentially increased ($c_{t+1} = \eta c_{t}$) with $\eta = 2$ 
every $14$ gradient steps for a total of two epochs through the training dataset.
Gradient steps were performed using Adam \citep{Kingma:14b} with a learning rate of 
$5.e\text{-}3$ and weight decay of $1.e\text{-}3$, 
chosen equivalent to the original training parameters of the model. 
A temperature of $1.e\text{-}3$ was used for scaling the cross-entropy loss, an approximation for the posterior probabilities when calculating Eq.~\eqref{eq:epistemic_importance_sampling_estimator}.
Detailed results and additional metrics and replicates of the experiments can be found in Tab.~\ref{tab:appendix:detailed_mnist}. 
Experiments were performed three times with seeds: \{42, 142, 242\} to provide confidence intervals.
Histograms of the scores on the ID dataset and the OOD datasets for different methods are depicted in Fig.~\ref{fig:mnist_histogram}.

\setlength{\tabcolsep}{17pt}
\renewcommand{\arraystretch}{1.2}
\begin{table}[t!]
\centering
\caption[Results for additional baseline (MoLA)]{Additional baseline MoLA: AUROC using the epistemic uncertainty of a given, pre-selected model as a score to distinguish between ID (MNIST) and OOD samples. Results for additional baseline method MoLA, comparing to Laplace approximation, Deep Ensembles (DE) and QUAM. Results are averaged over three independent runs.\\
}
\label{tab:res:mola_comparison}
\begin{tabular}{ccccccc}
\hline
$\cD_{\text{ood}}$ & Laplace             & {MoLA}         & DE         & QUAM                \\ \hline
FMNIST              &               $.978_{\pm .004}$      & {$.986_{\pm .002}$} & $.988_{\pm .001}$ & $\boldsymbol{.994}_{\pm .001}$ \\
KMNIST              &                   $.959_{\pm .006}$  & {$.984_{\pm .000}$} & $.990_{\pm .001}$ & $\boldsymbol{.994}_{\pm .001}$ \\
EMNIST              & $.877_{\pm .011}$ & {$.920_{\pm .002}$} & $.924_{\pm .003}$ & $\boldsymbol{.937}_{\pm .008}$ \\
OMNIGLOT            & $.963_{\pm .003}$ &           {$.979_{\pm .000}$}          & $.983_{\pm .001}$ &  $\boldsymbol{.992}_{\pm .001}$ \\ \hline
\end{tabular}
\end{table}
\renewcommand{\arraystretch}{1}

\renewcommand{\arraystretch}{0.85}
\setlength{\tabcolsep}{9pt}
\begin{table}
\centering
\caption[Detailed results of MNIST OOD detection experiments]{Detailed results of MNIST OOD detection experiments, reporting AUROC, AUPR and FPR@TPR=95\% for individual seeds.\\
}
\label{tab:appendix:detailed_mnist}
\begin{tabular}{ccccccc}
\hline \\ [-6pt]
OOD dataset &  Method & Seed & $\uparrow$ AUPR  & $\uparrow$ AUROC & $\downarrow$ FPR@TPR=95\% \\[2pt] \hline  \\ [-6pt]
\multirow{15}{*}{EMNIST}&\multirow{3}{*}{cSG-HMC}&42&0.8859&0.8823&0.5449 \\ 
 & &142&0.8714&0.8568&0.8543 \\ 
 & &242&0.8797&0.8673&0.7293 \\ 
 &\multirow{3}{*}{Laplace}&42&0.8901&0.8861&0.5273 \\ 
 & &142&0.8762&0.8642&0.7062 \\ 
 & &242&0.8903&0.8794&0.6812 \\ 
 &\multirow{3}{*}{Deep Ensembles}&42&0.9344&0.9239&0.4604 \\ 
 & &142&0.9325&0.9236&0.4581 \\ 
 & &242&0.9354&0.9267&0.4239 \\ 
 &\multirow{3}{*}{MC dropout}&42&0.8854&0.8787&0.5636 \\ 
 & &142&0.8769&0.8630&0.6718 \\ 
 & &242&0.8881&0.8751&0.6855 \\ 
 &\multirow{3}{*}{QUAM}&42&0.9519&0.9454&0.3405 \\ 
 & &142&0.9449&0.9327&0.4538 \\ 
 & &242&0.9437&0.9317&0.4325 \\ 
\hline \\ [-6pt]
\multirow{15}{*}{FMNIST}&\multirow{3}{*}{cSG-HMC}&42&0.9532&0.9759&0.0654 \\ 
 & &142&0.9610&0.9731&0.0893 \\ 
 & &242&0.9635&0.9827&0.0463 \\ 
 &\multirow{3}{*}{Laplace}&42&0.9524&0.9754&0.0679 \\ 
 & &142&0.9565&0.9739&0.0788 \\ 
 & &242&0.9613&0.9824&0.0410 \\ 
 &\multirow{3}{*}{Deep Ensembles}&42&0.9846&0.9894&0.0319 \\ 
 & &142&0.9776&0.9865&0.0325 \\ 
 & &242&0.9815&0.9881&0.0338 \\ 
 &\multirow{3}{*}{MC dropout}&42&0.9595&0.9776&0.0644 \\ 
 & &142&0.9641&0.9748&0.0809 \\ 
 & &242&0.9696&0.9848&0.0393 \\ 
 &\multirow{3}{*}{QUAM}&42&0.9896&0.9932&0.0188 \\ 
 & &142&0.9909&0.9937&0.0210 \\ 
 & &242&0.9925&0.9952&0.0132 \\ 
\hline \\ [-6pt]
\multirow{15}{*}{KMNIST}&\multirow{3}{*}{cSG-HMC}&42&0.9412&0.9501&0.2092 \\ 
 & &142&0.9489&0.9591&0.1551 \\ 
 & &242&0.9505&0.9613&0.1390 \\ 
 &\multirow{3}{*}{Laplace}&42&0.9420&0.9520&0.1915 \\ 
 & &142&0.9485&0.9617&0.1378 \\ 
 & &242&0.9526&0.9640&0.1165 \\ 
 &\multirow{3}{*}{Deep Ensembles}&42&0.9885&0.9899&0.0417 \\ 
 & &142&0.9875&0.9891&0.0458 \\ 
 & &242&0.9884&0.9896&0.0473 \\ 
 &\multirow{3}{*}{MC dropout}&42&0.9424&0.9506&0.2109 \\ 
 & &142&0.9531&0.9618&0.1494 \\ 
 & &242&0.9565&0.9651&0.1293 \\ 
 &\multirow{3}{*}{QUAM}&42&0.9928&0.9932&0.0250 \\ 
 & &142&0.9945&0.9952&0.0194 \\ 
 & &242&0.9925&0.9932&0.0260 \\ 
\hline \\ [-6pt]
\multirow{15}{*}{OMNIGLOT}&\multirow{3}{*}{cSG-HMC}&42&0.9499&0.9658&0.1242 \\ 
 & &142&0.9459&0.9591&0.1498 \\ 
 & &242&0.9511&0.9637&0.1222 \\ 
 &\multirow{3}{*}{Laplace}&42&0.9485&0.9647&0.1238 \\ 
 & &142&0.9451&0.9597&0.1345 \\ 
 & &242&0.9526&0.9656&0.1077 \\ 
 &\multirow{3}{*}{Deep Ensembles}&42&0.9771&0.9822&0.0621 \\ 
 & &142&0.9765&0.9821&0.0659 \\ 
 & &242&0.9797&0.9840&0.0581 \\ 
 &\multirow{3}{*}{MC dropout}&42&0.9534&0.9663&0.1248 \\ 
 & &142&0.9520&0.9619&0.1322 \\ 
 & &242&0.9574&0.9677&0.1063 \\ 
 &\multirow{3}{*}{QUAM}&42&0.9920&0.9930&0.0274 \\ 
 & &142&0.9900&0.9909&0.0348 \\ 
 & &242&0.9906&0.9915&0.0306 \\ 
\hline
\end{tabular}
\end{table}
\renewcommand{\arraystretch}{1}

\begin{figure}[b!]
\centering
\includegraphics[width=\textwidth]{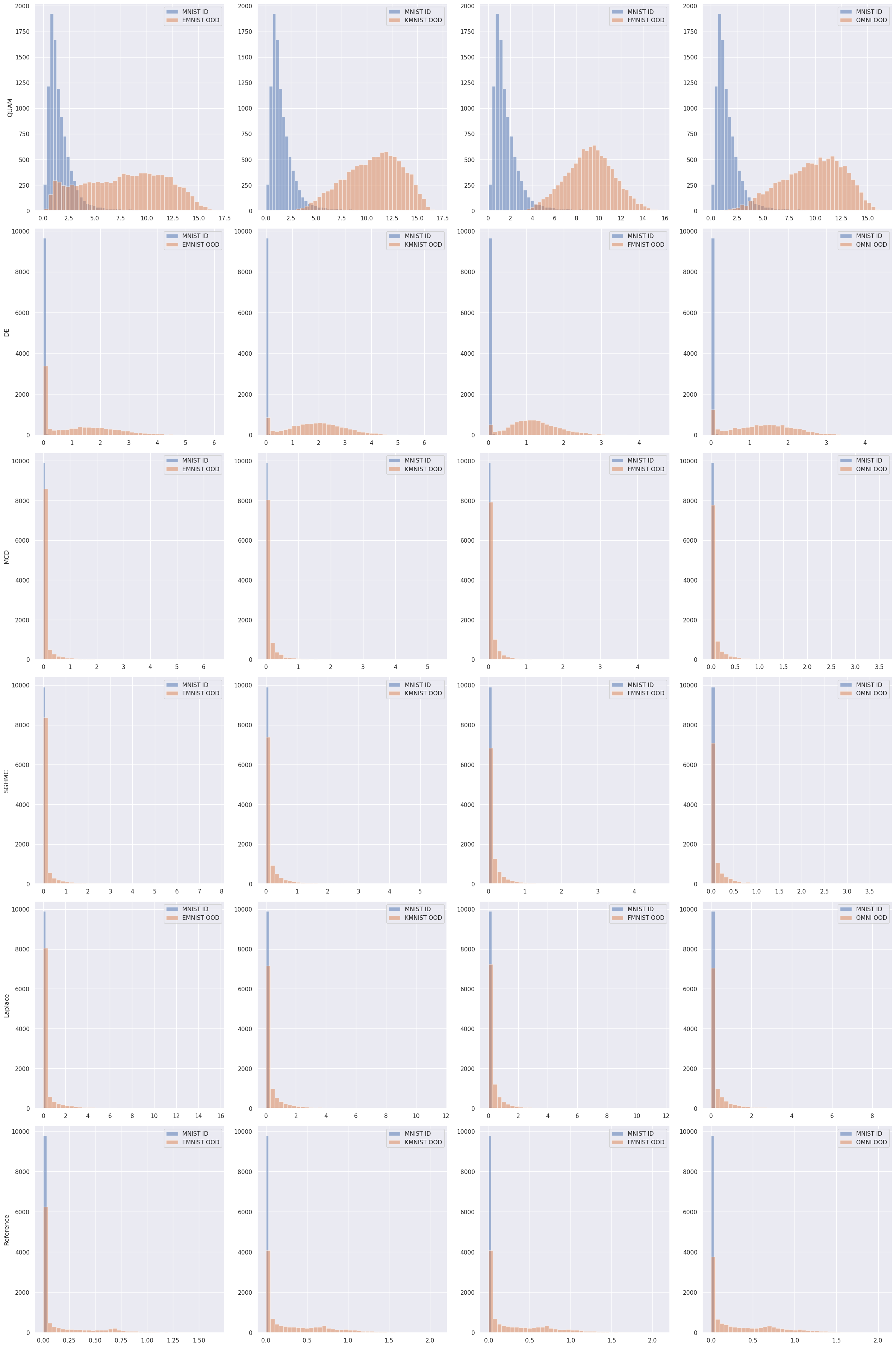}
\caption[Histograms MNIST]{MNIST: Histograms of uncertainty scores calculated for test set samples of the specified datasets.}
\label{fig:mnist_histogram}
\end{figure}

\subsubsection{ImageNet} \label{apx:sec:imagenet}

For ImageNet1K \citep{Deng:09}, 
OOD detection experiments were performed with ImageNet-O \citep{Hendrycks:21}, 
adversarial example detection experiments with ImageNet-A \citep{Hendrycks:21}, 
and misclassification detection as well as selective prediction experiments on the official validation set of ImageNet1K. 
For each experiment, we utilized a pre-trained EfficientNet \citep{Tan:19} architecture with 21.5 million 
trainable weights available through PyTorch \citep{Paszke:19}, achieving a top-1 accuracy of $84.2\%$ as well as a top-5 accuracy of $96.9\%$. 

cSG-HMC was performed on the last layer using the best hyperparameters that resulted from a hyperparameter search around the ones suggested by the original authors \citep{Zhang:20}. 
The Laplace approximation with the implementation of \citep{Daxberger:21} was not feasible to compute for this problem on our hardware, 
even only for the last layer.
Similarly to the experiments in section \ref{apx:sec:mnist}, we compare against a Deep Ensemble consisting of 10 pre-trained EfficientNet architectures ranging from 5.3 million to 66.3 million trainable weights (DE (all)). 
Also, we retrained the last layer of 10 ensemble members (DE (LL)) given the same base network. 
We also compare against MC dropout used with 2048 samples with a dropout probability of $20\%$.
The EfficientNet architectures utilize dropout only before the last layer.
The adversarial model search for QUAM was performed on the last layer of the EfficientNet, which has 1.3 million trainable parameters. 
To enhance the computational efficiency, the output of the second-to-last layer was
computed once for all samples, and this output was subsequently used as input for 
the final layer when performing the adversarial model search. 
We fixed $c_{0}$ to $1$ and exponentially updated it at every of the 256 update steps. 
Also, weight decay was fixed to $1.e\text{-}4$
for the Adam optimizer \citep{Kingma:14b}.

Two hyperparameters have jointly been optimized on ImageNet-O and ImageNet-A 
using a small grid search, with learning rate $\alpha \in \{5.e\text{-}3, 1.e\text{-}3, 5.e\text{-}4, 1.e\text{-}4\}$ 
and the exponential schedule update constant $\eta \in \{1.15, 1.01, 1.005, 1.001\}$. 
The hyperparameters $\alpha = 1.e\text{-}3$ and $\eta = 1.01$ resulted in the overall highest performance 
and have thus jointly been used for each of the three experiments. 
This implies that $c_{0}$ increases by $1\%$ after each update step. 
We additionally searched for the best temperature and the best number of update steps 
for each experiment separately.  
The best temperature for scaling the cross-entropy loss when calculating Eq.~\eqref{eq:epistemic_importance_sampling_estimator} was identified as $0.05$, $0.005$, and $0.0005$, 
while the best number of update steps was identified as $50$, $100$, and $100$ for 
ImageNet-O OOD detection, ImageNet-A adversarial example detection, 
and ImageNet1K misclassification detection, respectively. Selective prediction was performed using the same hyperparameters as misclassification detection. We observed that the adversarial model search is relatively stable 
with respect to these hyperparameters.

The detailed results on various metrics and replicates of the experiments can be found in \ref{tab:appendix:detailed_imagenet}. 
Histograms of the scores on the ID dataset and the OOD dataset, the adversarial example dataset and the correctly and incorrectly classified samples are depicted in Fig.~\ref{fig:imagenet_histogram} for all methods.
ROC curves, as well as accuracy over retained sample curves, are depicted in Fig.~\ref{fig:roc}.
To provide confidence intervals, we performed all experiments on three distinct dataset splits of the ID datasets, matching the number of OOD samples.
Therefore we used three times 2000 ID samples for Imagenet-O and three times 7000 ID samples for Imagenet-A and misclassification detection as well as selective prediction.

\paragraph{Calibration.} Additionally, we analyze the calibration of QUAM compared to other baseline methods. Therefore, we compute the expected calibration error (ECE) \citep{Guo:17} on the ImageNet-1K validation dataset using the expected predictive distribution. Regarding QUAM, the predictive distribution was optained using the same hyperparameters as for misclassification detection reported above.
We find that QUAM improves upon the other considered baseline methods, although it was not directly designed to improve the calibration of the predictive distribution. Tab.~\ref{tab:calibration} states the ECE of considered uncertainty quantification methods and in Fig.~\ref{fig:calibration} the accuracy and number of samples (depicted by the size) for specific confidence bins is depicted.

\setlength{\tabcolsep}{19pt} %
\renewcommand{\arraystretch}{1.2}
\begin{table}[t!]
\centering
\caption[Results for calibration on ImageNet]{Calibration: expected calibration error (ECE) based on the weighted average predictive distribution. Reference refers to the predictive distribution of the given, pre-selected model. Experiment was performed on three distinct splits, each containing 7000 ImageNet-1K validation samples.\\}
\begin{tabular}{cccccc}
\hline
Reference & cSG-HMC & MCD & DE & QUAM \\ \hline
$.159_{\pm .004}$ & $.364_{\pm .001}$ & $.166_{\pm .004}$  & $.194_{\pm .004}$ & $\boldsymbol{.096}_{\pm .006}$ \\
\hline
\end{tabular}
\label{tab:calibration}
\end{table}
\renewcommand{\arraystretch}{1}

\begin{figure*}
\centering
\begin{subfigure}{0.19\textwidth}
\includegraphics[width=\textwidth, trim={12pt 12pt 12pt 12pt}, clip]{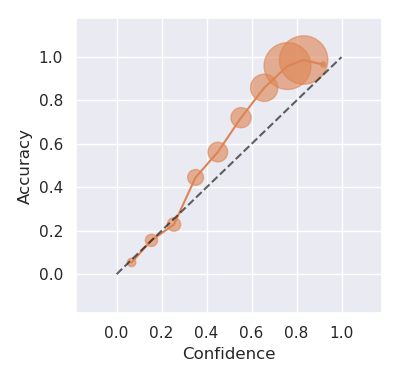}
\subcaption{Reference}
\end{subfigure}
\begin{subfigure}{0.19\textwidth}
\includegraphics[width=\textwidth, trim={12pt 12pt 12pt 12pt}, clip]{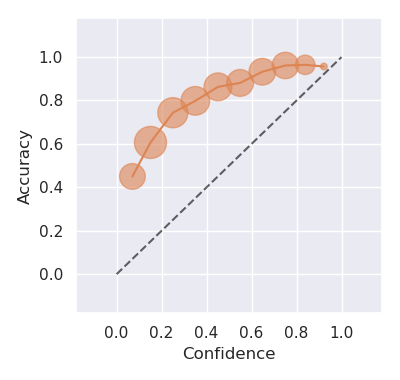}
\subcaption{cSG-HMC}
\end{subfigure}
\begin{subfigure}{0.19\textwidth}
\includegraphics[width=\textwidth, trim={12pt 12pt 12pt 12pt}, clip]{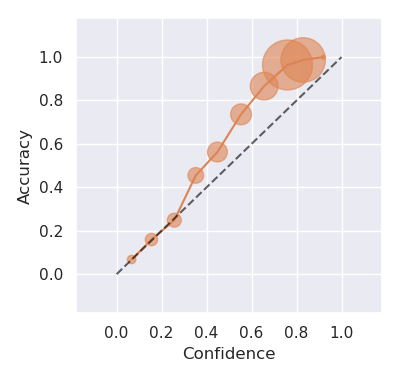}
\subcaption{MCD}
\end{subfigure}
\begin{subfigure}{0.19\textwidth}
\includegraphics[width=\textwidth, trim={12pt 12pt 12pt 12pt}, clip]{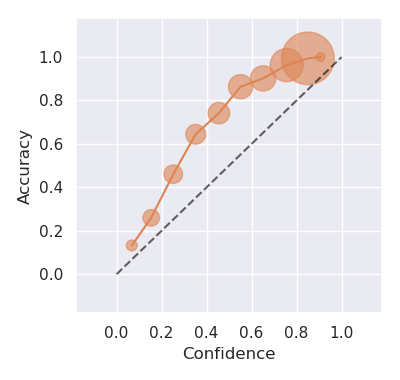}
\subcaption{DE}
\end{subfigure}
\begin{subfigure}{0.19\textwidth}
\includegraphics[width=\textwidth, trim={12pt 12pt 12pt 12pt}, clip]{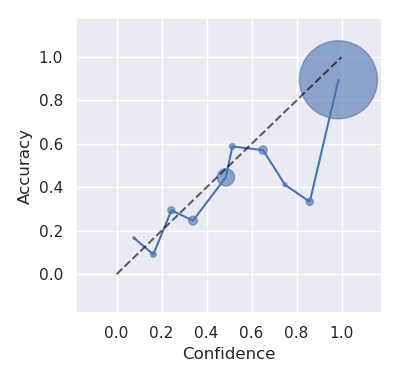}
\subcaption{QUAM}
\end{subfigure}
\caption[Calibration on ImageNet]{Calibration: confidence vs. accuracy based on (weighted) average predictive distribution of different uncertainty quantification methods. Point size indicates number of samples in the bin.}
\label{fig:calibration}
\end{figure*}

\begin{figure}[b!]
\centering
\includegraphics[width=\textwidth, height=0.9\textheight]{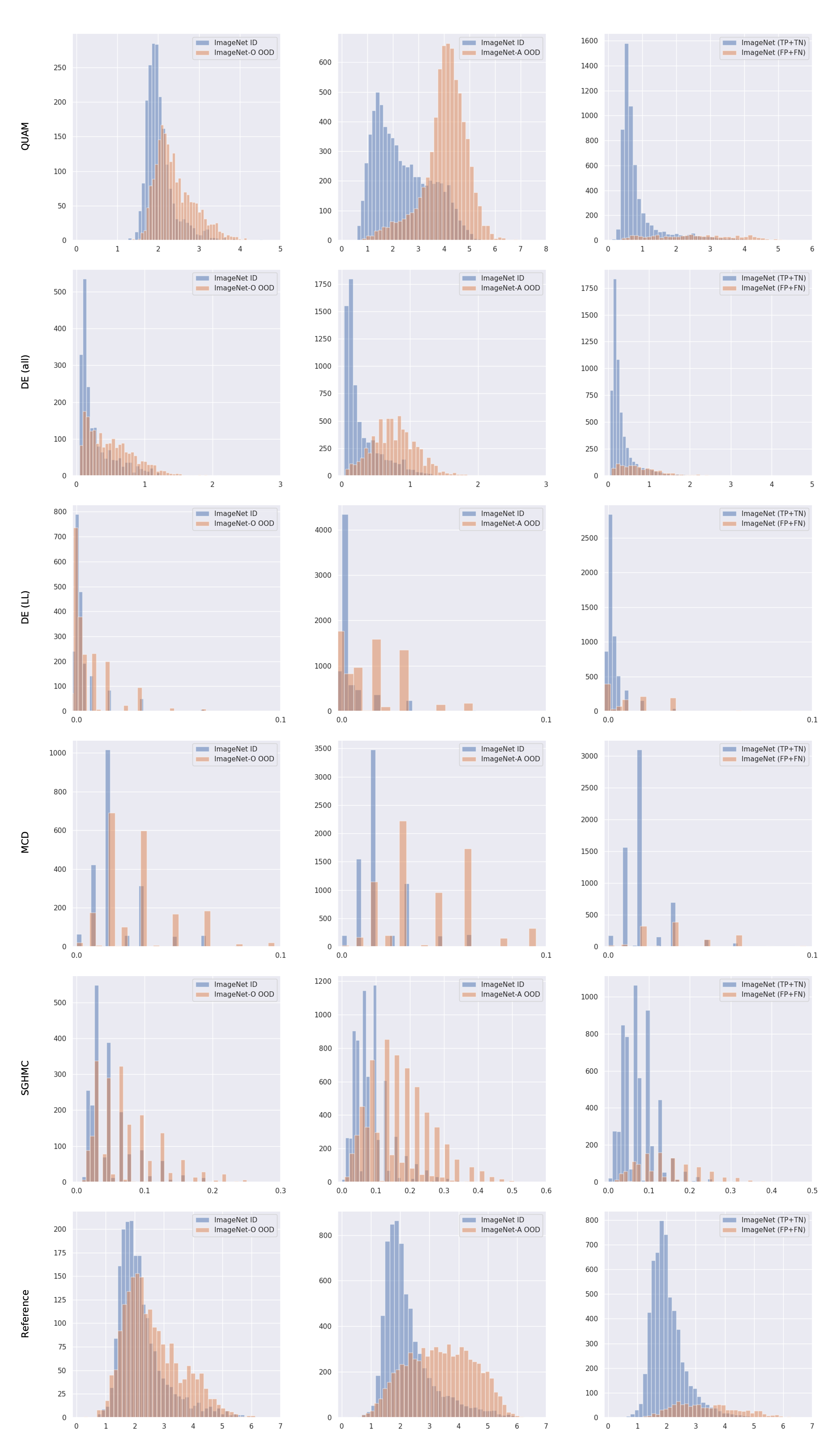}
\caption[Histograms ImageNet]{ImageNet: Histograms of uncertainty scores calculated for test set samples of the specified datasets.}
\label{fig:imagenet_histogram}
\end{figure}

\begin{figure*}[b!]
\centering
\begin{subfigure}{0.49\textwidth}
\includegraphics[width=\textwidth, trim={0 0 0 13pt}, clip]{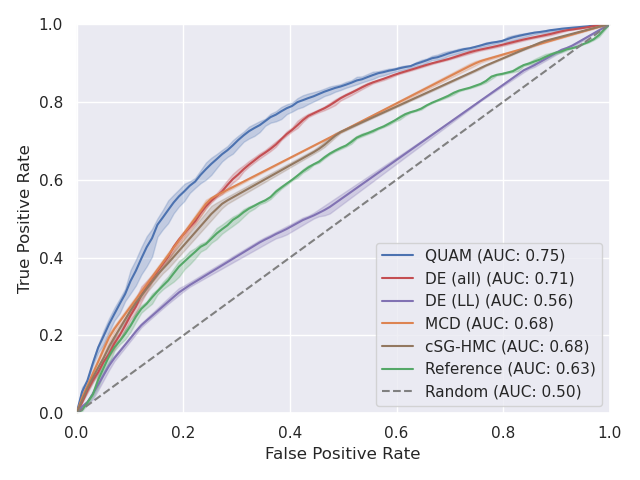}
\subcaption{OOD detection}
\end{subfigure}
\begin{subfigure}{0.49\textwidth}
\includegraphics[width=\textwidth, trim={0 0 0 13pt}, clip]{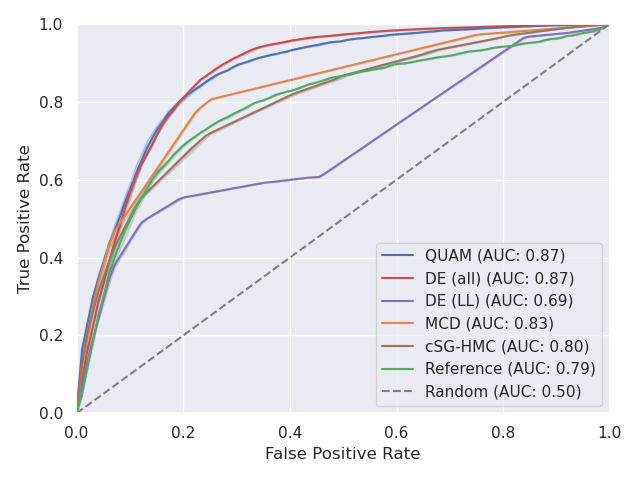}
\subcaption{Adversarial example detection}
\end{subfigure}
\begin{subfigure}{0.49\textwidth}
\includegraphics[width=\textwidth]{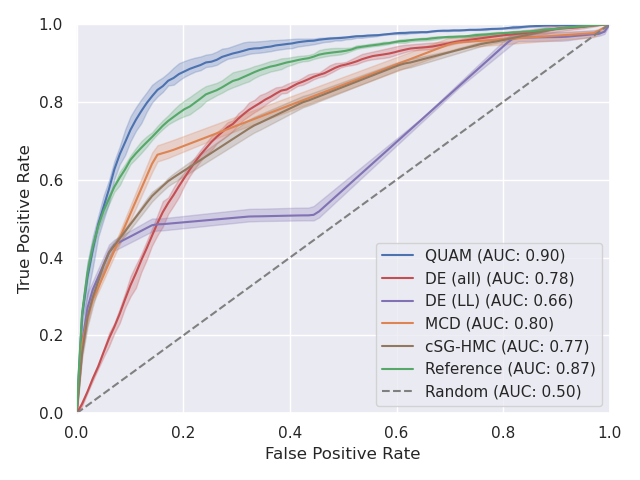}
\subcaption{Misclassification}
\end{subfigure}
\begin{subfigure}{0.49\textwidth}
\includegraphics[width=\textwidth]{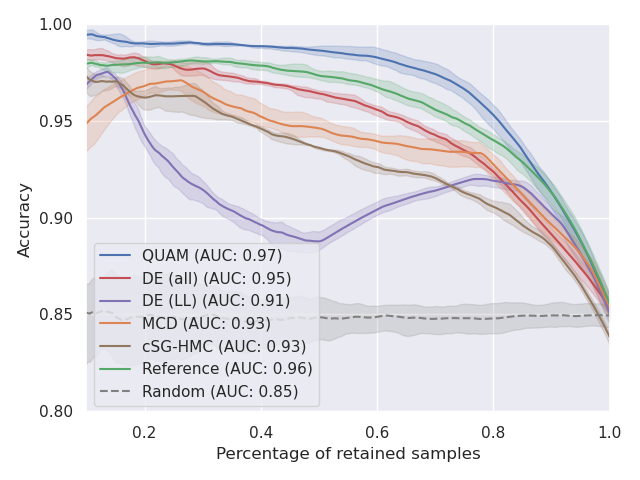}
\subcaption{Selective prediction}
\end{subfigure}
\caption[Detailed results of ImageNet experiments]{ImageNet-1K OOD detection results on ImageNet-O, adversarial example detection results on ImageNet-A, misclassification detection and selective prediction results on the validation dataset. ROC curves using the epistemic uncertainty of a given, pre-selected model (as in Eq.~\eqref{eq:epistemic_setting_b}) to distinguish between (a) the ImageNet-1K validation dataset and ImageNet-O, (b) the ImageNet-1K validation dataset and ImageNet-A and (c) the reference model's correctly and incorrectly classified samples. (d) Accuracy of reference model on subset composed of samples that exhibit lowest epistemic uncertainty.}
\label{fig:roc}
\end{figure*}

\renewcommand{\arraystretch}{0.85}
\setlength{\tabcolsep}{5pt}
\begin{table}
\centering
\caption[Detailed results of ImageNet experiments]{Detailed results of ImageNet OOD detection, adversarial example detection and misclassification experiments, reporting AUROC, AUPR and FPR@TPR=95\% for individual splits.\\
}
\label{tab:appendix:detailed_imagenet}
\begin{tabular}{ccccccc}
\hline \\ [-6pt]
OOD dataset / task &  Method & Split & $\uparrow$ AUPR  & $\uparrow$ AUROC & $\downarrow$ FPR@TPR=95\% \\[2pt] \hline  \\ [-6pt]
\multirow{18}{*}{ImageNet-O} & \multirow{3}{*}{Reference} & I & 0.615 & 0.629 & 0.952 \\
  &  & II &  0.600 & 0.622 & 0.953 \\
  &  & III &  0.613 & 0.628 & 0.954 \\
  & \multirow{3}{*}{cSG-HMC} & I & 0.671 & 0.682 & 0.855 \\
  &  & II & 0.661 & 0.671 & 0.876  \\
  &  & III & 0.674 & 0.679 & 0.872 \\
  & \multirow{3}{*}{MC dropout} & I &  0.684 & 0.681 & 0.975 \\
  &  &  II &  0.675 & 0.677 & 0.974 \\
  & & III &  0.689 & 0.681 & 0.972 \\
  & \multirow{3}{*}{Deep Ensembles (LL)} & I &  0.573 & 0.557 & 0.920 \\
  &  &  II &  0.566 & 0.562 & 0.916 \\
  &  & III &  0.573 & 0.566 & 0.928 \\
  & \multirow{3}{*}{Deep Ensembles (all)} & I &  0.679 & 0.713 & 0.779 \\
  &  &  II &  0.667 & 0.703 & 0.787 \\
  &  & III &  0.674 & 0.710 & 0.786 \\
  & \multirow{3}{*}{QUAM} & I &  0.729 & 0.758 & 0.766 \\
  &  &  II &  0.713 & 0.740 & 0.786 \\
  &  & III &  0.734 & 0.761 & 0.764 \\
\hline \\ [-6pt]
\multirow{18}{*}{ImageNet-A} & \multirow{3}{*}{Reference} & I &  0.779 & 0.795 & 0.837 \\
  &   &  II &  0.774 & 0.791 & 0.838 \\
  &   & III &  0.771 & 0.790 & 0.844 \\
  & \multirow{3}{*}{cSG-HMC} & I & 0.800 & 0.800 & 0.785 \\
  &  & II & 0.803 & 0.800 & 0.785 \\
  &  & III & 0.799 & 0.798 & 0.783 \\
  & \multirow{3}{*}{MC dropout} & I &  0.835 & 0.828 & 0.748 \\
  &  &  II &  0.832 & 0.828 & 0.740 \\
  &  & III &  0.826 & 0.825 & 0.740 \\
  & \multirow{3}{*}{Deep Ensembles (LL)} & I &  0.724 & 0.687 & 0.844 \\
  &  &  II &  0.723 & 0.685 & 0.840 \\
  &  & IIII &  0.721 & 0.686 & 0.838 \\
  & \multirow{3}{*}{Deep Ensembles (all)} & I &  0.824 & 0.870 & 0.385 \\
  &  &  II &  0.837 & 0.877 & 0.374 \\
  &  & III &  0.832 & 0.875 & 0.375 \\
  & \multirow{3}{*}{QUAM} & I &  0.859 & 0.875 & 0.470 \\
  &  &  II &  0.856 & 0.872 & 0.466 \\
  &  & III &  0.850 & 0.870 & 0.461 \\
\hline \\ [-6pt]
\multirow{18}{*}{Misclassification} & \multirow{3}{*}{Reference} & I &  0.623 & 0.863 & 0.590 \\
  &   &  II &  0.627 & 0.875 & 0.554 \\
  &   & III &  0.628 & 0.864 & 0.595 \\
  & \multirow{3}{*}{cSG-HMC} & I & 0.478 & 0.779 & 0.755 \\
  &  & II & 0.483 &	0.779 & 0.752 \\
  &  & III & 0.458 & 0.759 & 0.780 \\
  & \multirow{3}{*}{MC dropout} & I &  0.514 & 0.788 & 0.719 \\
  &  &  II &  0.500 & 0.812 & 0.704 \\
  &  & III &  0.491 & 0.788 & 0.703 \\
  & \multirow{3}{*}{Deep Ensembles (LL)} & I &  0.452 & 0.665 & 0.824 \\
  &  &  II &  0.421 & 0.657 & 0.816 \\
  &  & III &  0.425 & 0.647 & 0.815 \\
  & \multirow{3}{*}{Deep Ensembles (all)} & I &  0.282 & 0.770 & 0.663 \\
  &  &  II &  0.308 & 0.784 & 0.650 \\
  &  & III &  0.310 & 0.786 & 0.617 \\
  & \multirow{3}{*}{QUAM} & I &  0.644 & 0.901 & 0.451 \\
  &  &  II &  0.668 & 0.914 & 0.305 \\
  &  & III &  0.639 & 0.898 & 0.399 \\
\hline \\ [-6pt]
\end{tabular}
\end{table}
\renewcommand{\arraystretch}{1}

\clearpage
\subsection{Comparing Mechanistic Similarity of Deep Ensembles vs. Adversarial Models}\label{sec:apx_mechanistic}

The experiments were performed on MNIST, EMNIST, and KMNIST test datasets, 
using $512$ images of each using Deep Ensembles, and the reference model $\Bw$, trained on MNIST. 
Results are depicted in Fig.~\ref{fig:res:emnist:mecha}.
For each image and each ensemble member, gradients were integrated over $64$ steps from $64$ different random normal sampled baselines for extra robustness \citep{Sundararajan:17}.  
Since the procedure was also performed on the OOD sets as well as our general focus on uncertainty estimation, no true labels were used for the gradient computation. 
Instead, predictions of ensemble members for which the attributions were computed were used as targets. 
Principal Component Analysis (PCA) was performed for the attributions of each image separately, where for each pixel the attributions from different ensemble members were treated as features.
The ratios of explained variance, which are normalized to sum up to one, are collected from each component. 
If all ensemble members would utilize mutually exclusive features for their prediction, all components would be weighted equally, leading to a straight line in the plots in the top row in Fig.~\ref{fig:res:emnist:mecha}.
Comparatively high values of the first principal component to the other components in the top row plots in Fig.~\ref{fig:res:emnist:mecha} indicate low diversity in features used by Deep Ensembles.

The procedure was performed similarly for an ensemble of adversarial models. 
The main difference was that for each image an ensemble produced as a result of an adversarial model search on that specific image was used. 
We observe, that ensembles of adversarial models utilize more dissimilar features, indicated by
the decreased variance contribution of the first principal component.
This is especially strong for ID data, but also noticeable for OOD data.

\subsection{Prediction Space Similarity of Deep Ensembles and Adversarial Models}\label{sec:apx_outputsimilarity}

In the following, ensembles members and adversarial models are analyzed in prediction space. 
We used the same Deep Ensembles as the one trained on MNIST for the OOD detection task described in Sec.~\ref{apx:sec:mnist}. 
Also, 10 adversarial models were retrieved from the reference model $\Bw$ and a single OOD sample (KMNIST), following the same procedure as described in Sec.~\ref{apx:sec:mnist}.

For the analysis, PCA was applied to the flattened softmax output vectors of each of the 20 models applied to ID validation data. 
The resulting points represent the variance of the model’s predictions across different principal components \citep{Fort:19}. 
The results in Fig.~\ref{fig:res:PCA_val_samples} show, that the convex hull of blue points representing adversarial models, in general, is much bigger than the convex hull of orange points representing ensemble members across the first four principal components, which explain 99.99\% of the variance in prediction space. 
This implies that even though adversarial models achieve similar accuracy as Deep Ensembles on the validation set, they are capable of capturing more diversity in prediction space.

\begin{figure}
\centering
\includegraphics[width=\textwidth, trim={0 2pt 0 2pt}, clip]{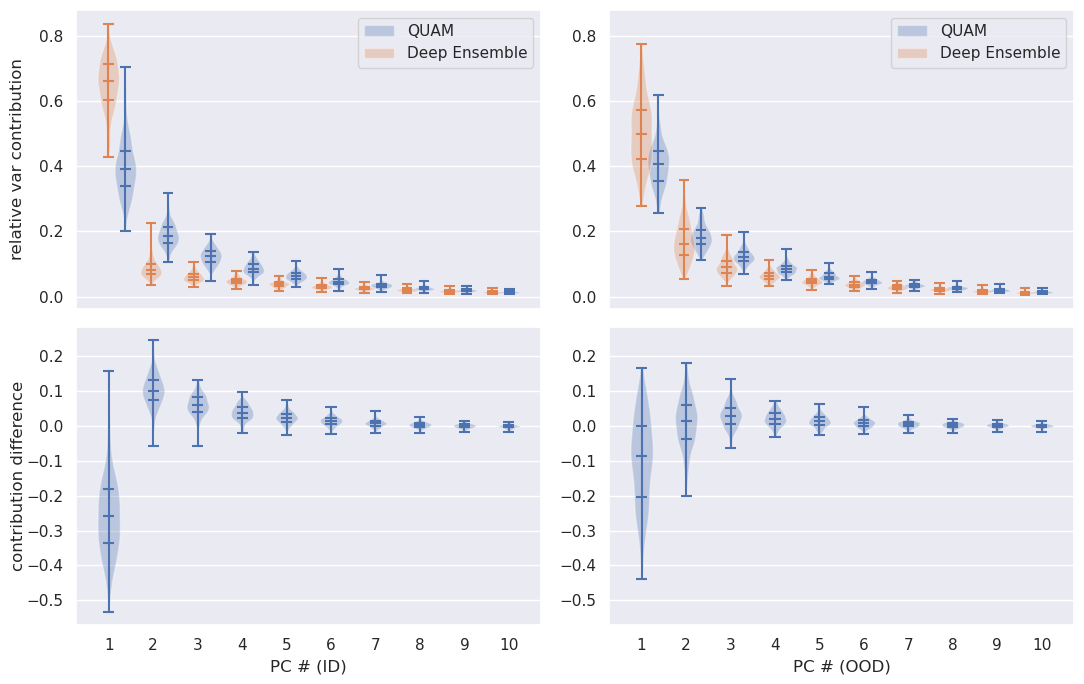}
\caption[Comparing mechanistic similarity of Deep Ensembles vs. Adversarial Models]{The differences between significant component distribution are marginal on OOD data but pronounced on the ID data. The ID data would be subject to optimization by gradient descent during training, therefore the features are learned greedily and models are similar to each other mechanistically. We observe, that the members of Deep Ensembles show higher mechanistic similarity than the members of ensembles obtained from adversarial model search. }
\label{fig:res:emnist:mecha}

\bigskip

\includegraphics[width=\textwidth, trim={0 2pt 0 2pt}, clip]{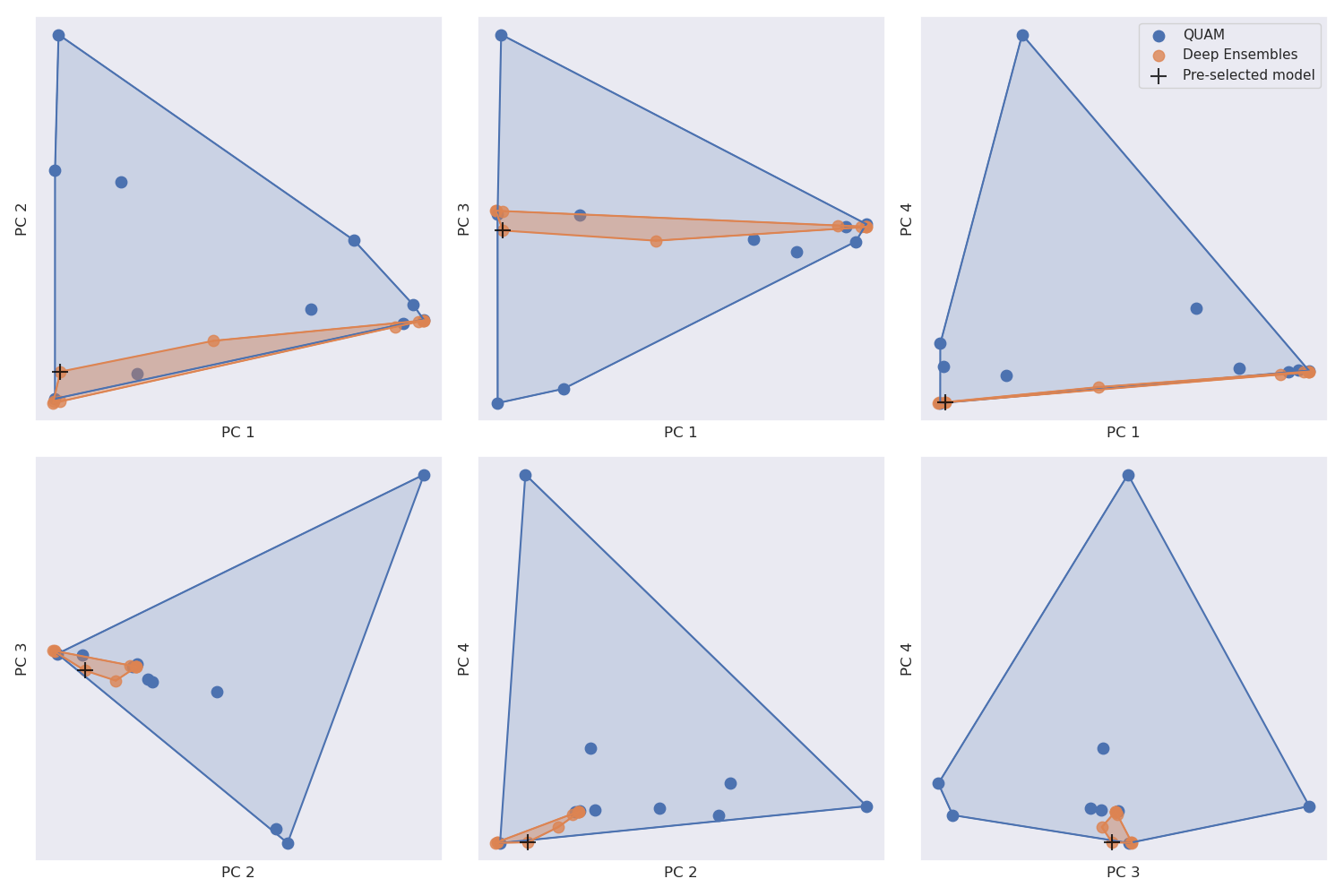}
\caption[Prediction Space Similarity of Deep Ensembles vs. Adversarial Models]{Convex hull of the down-projected softmax output from 10 Ensemble Members (orange) as well as 10 adversarial models (blue). PCA is used for down-projection, all combinations of the first four principal components (99.99\% variance explained) are plotted against each other. Softmax outputs are obtained on a batch of 10 random samples from the ID validation dataset. The black cross marks the given, pre-selected model $\Bw$.}
\label{fig:res:PCA_val_samples}
\end{figure}

\subsection{Computational Expenses}

\paragraph{Experiments on Synthetic Datasets}

The example in Sec.~\ref{apx:sec:simplex} was computed within half an hour on a GTX 1080 Ti.
Experiments on synthetic datasets shown in Sec.~\ref{apx:sec:synthetic} were also
performed on a single GTX 1080 Ti.
Note that the HMC baseline took approximately 14 hours on 36 CPU cores for the classification task.
All other methods except QUAM finish within minutes.
QUAM scales with the number of test samples.
Under the utilized parameters and 6400 test samples, QUAM computation took approximately 6 hours on a single GPU and under one hour for the regression task, where the number of test points is much smaller.

\paragraph{Experiments on Vision Datasets}

Computational Requirements for the vision domain experiments depend a lot on the exact utilization of the baseline methods.
While Deep Ensembles can take a long time to train, depending on the ensemble size,
we utilized either pre-trained networks for ensembling or only trained last layers,
which significantly reduces the runtime.
Noteworthy, MC-dropout can result in extremely high runtimes depending on the number of
forward passes and depending on the realizable batch size for inputs.
The same holds for SG-HMC.
Executing the QUAM experiments on MNIST (Sec.~\ref{apx:sec:mnist}) took a grand total of around 120 GPU-hours on a variety of mostly older generation and low-power GPUs (P40, Titan V, T4), corresponding to roughly 4 GPU-seconds per sample.
Executing the experiments on ImageNet (Sec.~\ref{apx:sec:imagenet}) took about 100 GPU-hours on a mix of A100 and A40 GPUs, corresponding to around 45 GPU-seconds per sample.
The experiments presented in Sec.\ref{sec:apx_mechanistic} and \ref{sec:apx_outputsimilarity} took around 2 hours each on 4 GTX 1080 Ti.

\end{document}